\newtheorem{assumption}[theorem]{Assumption}
\newcommand{\vertiii}[1]{{\left\vert\kern-0.25ex\left\vert\kern-0.25ex\left\vert #1 
    \right\vert\kern-0.25ex\right\vert\kern-0.25ex\right\vert}}
\newcommand{\EE}{\mathbb{E}}
\newcommand{\mcu}{\mathcal{u}}
\newcommand{\mcv}{\mathcal{v}}
\newcommand{\bmcu}{\bar{\mathcal{u}}}
\newcommand{\tmcu}{\tilde{\mathcal{u}}}
\newcommand{\Fs}{\mathcal{F}}
\newcommand{\Ls}{\mathcal{L}}
\newcommand{\Ns}{\mathcal{N}}
\newcommand{\Ds}{\mathcal{D}}
\newcommand{\Us}{\mathcal{U}}
\newcommand{\Os}{\mathcal{O}}
\newcommand{\RR}{\mathbb{R}}
\newcommand{\Sb}{\mathbb{S}}
\newcommand{\bA}{\bar{A}}
\newcommand{\bc}{\bar{c}}
\newcommand{\bx}{\bar{x}}
\newcommand{\bp}{\bar{p}}
\newcommand{\bu}{\bar{u}}
\newcommand{\bB}{\bar{B}}
\newcommand{\bF}{\bar{F}}
\newcommand{\bQ}{\bar{Q}}
\newcommand{\bR}{\bar{R}}
\newcommand{\bK}{\bar{K}}
\newcommand{\bP}{\bar{P}}
\newcommand{\bL}{\bar{L}}
\newcommand{\bV}{\bar{V}}
\newcommand{\bZ}{\bar{Z}}
\newcommand{\bomega}{\bar{\omega}}
\newcommand{\bPsi}{\bar{\Psi}}
\newcommand{\bbeta}{\bar{\beta}}
\newcommand{\bPhi}{\bar{\Phi}}
\newcommand{\rQ}{\mathrm{Q}}
\newcommand{\rbQ}{\bar{\mathrm{Q}}}
\newcommand{\uJ}{\ddot{J}}
\newcommand{\uK}{\ddot{K}}
\newcommand{\uA}{\ddot{A}}
\newcommand{\uN}{\ddot{N}}
\newcommand{\uP}{\ddot{P}}
\newcommand{\uDelta}{\ddot{\Delta}}
\newcommand{\uPhi}{\ddot{\Phi}}
\newcommand{\unabla}{\ddot{\nabla}}
\newcommand{\tA}{\tilde{A}}
\newcommand{\tx}{\tilde{x}}
\newcommand{\tu}{\tilde{u}}
\newcommand{\tB}{\tilde{B}}
\newcommand{\tE}{\tilde{E}}
\newcommand{\tF}{\tilde{F}}
\newcommand{\tP}{\tilde{P}}
\newcommand{\tK}{\tilde{K}}
\newcommand{\tL}{\tilde{L}}
\newcommand{\tN}{\tilde{N}}
\newcommand{\tJ}{\tilde{J}}
\newcommand{\tf}{\tilde{f}}
\newcommand{\tomega}{\tilde{\omega}}
\newcommand{\tnabla}{\tilde{\nabla}}
\newcommand{\bfK}{\mathbf{K}}
\newcommand{\cnabla}{\check{\nabla}}
\newcommand{\czeta}{\check{\zeta}}
\newcommand{\ddnabla}{\ddot{\nabla}}
\newcommand{\cJ}{\check{J}}
\newcommand{\hK}{\hat{K}}
\newcommand{\tr}{Tr}
\DeclareMathOperator*{\argmin}{argmin}
\DeclareMathOperator{\diag}{diag}
\newcommand{\Var}{Var}
\DeclareMathOperator{\proj}{Proj}
\definecolor{NavyBlue}{rgb}{0.0, 0.0, 0.5}
\definecolor{ceruleanblue}{rgb}{0.16, 0.32, 0.75}
\definecolor{green1}{rgb}{0.2,0.7,0.2}
\definecolor{CadetBlue}{rgb}{0.37, 0.62, 0.63}
\title{Independent RL for Cooperative-Competitive Agents: A Mean-Field Perspective}
\author{Muhammad Aneeq uz Zaman\thanks{Analog Devices Incorporated}
\and Alec Koppel\thanks{Artificial Intelligence Research, JP Morgan Chase \& Co}
\and Mathieu Laurière\thanks{School of Mathematics and Data Science, NYU Shanghai}
\and Tamer Ba\c{s}ar\thanks{Coordinated Science Lab, University of Illinois, Urbana-Champaign. Funding: Research of T. Ba\c{s}ar was supported in part by the Air Force Office of Scientific Research (AFOSR) Grant FA9550-24-1-0152}
} 
\begin{document}

\maketitle

\begin{abstract}
We address in this paper Reinforcement Learning (RL) among agents that are grouped into teams  such that there is cooperation within each team but general-sum (non-zero sum) competition across different teams. To develop an RL method that provably achieves a Nash equilibrium, we focus on a linear-quadratic structure. Moreover, to tackle the non-stationarity induced by multi-agent interactions in the finite population setting, we consider the case where the number of agents within each team is infinite, i.e., the mean-field setting. This results in a General-Sum LQ Mean-Field Type Game (GS-MFTG). We characterize the Nash equilibrium (NE) of the GS-MFTG, under a standard invertibility condition. This MFTG NE is then shown to be $\Os(1/M)$-NE for the finite population game where $M$ is a lower bound on the number of agents in each team. These structural results motivate an algorithm called Multi-player Receding-horizon Natural Policy Gradient (MRNPG), where each team minimizes its cumulative cost \emph{independently} in a receding-horizon manner. Despite the non-convexity of the problem, we establish that the resulting algorithm converges to a global NE through a novel problem decomposition into sub-problems using backward recursive discrete-time Hamilton-Jacobi-Isaacs (HJI) equations, in which \emph{independent natural policy gradient} is shown to exhibit linear convergence under time-independent diagonal dominance. Numerical studies included corroborate the theoretical results.
\end{abstract}
\section{Introduction}

%
Multi-agent reinforcement learning (MARL) has gained popularity in recent years for its ability to address sequential decision-making problems among agents \cite{zhang2021multi,li2021distributed}. While a substantial effort has gone into developing algorithms and performance guarantees when agents interact in a purely cooperative setting, relatively less effort has gone into settings where agents' objectives may be in opposition \cite{littman1994markov}, such as congestion \cite{toumi2020tractable}, financial markets \cite{lussange2021modelling}, and negotiations in markets \cite{krishna1998intelligent}. It is known that finding equilibrium policies, i.e., the Nash equilibrium (NE), for each agent in such a general-sum stochastic game is in general NP-hard \cite{jin2021v}. Furthermore, in many real-world scenarios, agents behave in groups, with cooperation inside the group and competition between groups. Therefore, in this work, we study mixed \emph{Cooperative-Competitive (CC)} team settings, and seek to understand conditions for which a NE is achievable.

To enable a tractable formulation, we make two structural specifications: (i) agents' dynamics are linear and their costs\footnote{Costs are negative payoffs/rewards.} are quadratic, i.e., the linear-quadratic (LQ) setting \cite{bacsar1998dynamic}; and (ii) the number of agents within a team approaches infinity such that it may be approximated by its mean-field (MF) limit \cite{huang2006large,lasry2006jeux}. This setting results in a General Sum LQ Mean-Field Type Game (GS-MFTG).  We provide more background on these two specifications.

\textbf{The LQ specification} is motivated by a recent study of RL methods in the LQ Regulator (LQR) setting, which has gained traction for its role as a benchmark problem in which one can establish rigorous performance guarantees \cite{fazel2018global,malik2019derivative}, as well as solve a variety of practical problems without the opacity of neural networks 
\cite{ivanov2012numerical}.  The LQ setting has several real-world applications as in finance (linear quadratic permanent income theory \cite{sargent2000recursive}, portfolio management \cite{cardaliaguet2018mean}) and engineering (Wireless Power Control \cite{huang2003individual}), etc. Aside from these direct use cases, LQ system theory has been essential in obtaining non-asymptotic sample bounds for RL algorithms like Policy Gradient \cite{fazel2018global} and Actor-Critic \cite{yang2019global}, hence paving the way for later works to obtain similar guarantees in more general settings \cite{agarwal2021theory,qu2021exploiting}. 
Our goal is to understand to what extent we can broaden the scope of the LQ setting to provide a discernible problem class in CC multi-agent settings.

\textbf{The mean-field approximation} is motivated by the fact that the complexity of equilibria in finite population CC settings grows with the size of the teams \cite{carmona2015mean,sanjari2022nash}, and thus the transient effect of competitive agents' policies on stochastic state transitions appears in any gradient estimate of the cost, which cannot be annihilated unless one holds other agents' policies fixed.

The aforementioned structures yield a GS-MFTG, which upon first glance may seem a pristine setting, but in actuality, even this simplified setting exhibits fundamental technical challenges. Similar to the RL for LQR setting, the objective is non-convex, which in principle should preclude finding a NE. This was already observed for the simpler zero-sum (purely competitive setting) in \cite{carmona2020policy}. Therefore, in this work, we pose the following question:

\begin{center}
\textit{Is it possible to construct a data driven 
method to achieve the Nash Equilibrium \\ in 
CC Games?}
\end{center}

We answer this question affirmatively and  our {\bf main  contributions} are as follows:
\begin{itemize}
\item We formalize the CC 
game in a finite population LQ framework and derive its mean-field approximation as a MFTG. This approximation introduces a bias that is $\Os(1/M)$ (Theorem \ref{thm:eps_Nash}), where $M$ is the minimum number of agents in any team. Inspired by adapted open-loop control analysis (Appendix \ref{sec:open_loop_analysis}), 
we decompose the GS-MFTG into two general-sum LQ games and establish existence, uniqueness and characterization of the Nash equilibrium (NE) 
of these games (Theorem \ref{thm:CLNE}), under standard invertibility conditions.
%
%

\item To learn the NE of the GS-MFTG we develop a \emph{Multi-player Receding-horizon Natural Policy Gradient (MRNPG)} algorithm, in which the players independently update their policies using natural policy gradients in a receding-horizon manner \cite{zhang2023revisiting}. The receding-horizon approach decomposes the harder problem of learning NE policies for all time-steps, into simpler sub-problems of learning NE policies for each time-step, in a retrograde manner. This approach is inspired by the Hamilton-Jacobi-Isaacs (HJI) equations.
%
\item We establish convergence of MRNPG in two steps. First, we establish that for each time-step the MRNPG algorithm converges to the NE policy at a linear rate (Theorem \ref{thm:inner_loop_conv}) under a time and system noise-independent \emph{diagonal dominance} condition\footnote{We further relax the diagonal dominance condition using a cost-augmentation technique in Appendix \ref{sec:cost_augment}.}
This new condition generalizes (while being much easier to verify) the System Noise condition of \cite{hambly2023policy} (Lemma \ref{lem:generality_of_DD}). Furthermore, we also obviate the need for covariance matrix estimation as in \cite{hambly2023policy}. 
Finally, when the policy gradient approximation error per time-step is $\Os(\epsilon)$, the resulting error in NE computation is shown to be $\Os(\epsilon)$ (Theorem \ref{thm:main_res}).
\item Finally we corroborate the convergence of MRNPG within the context of a numerical example, and provide a comparison with several benchmarks.
\end{itemize}
For ease of exposition, proofs of some of the auxiliary results are included in Supplementary Materials after References.

\begin{figure}[h!]
    \centering
    \includegraphics[width=0.6\textwidth]{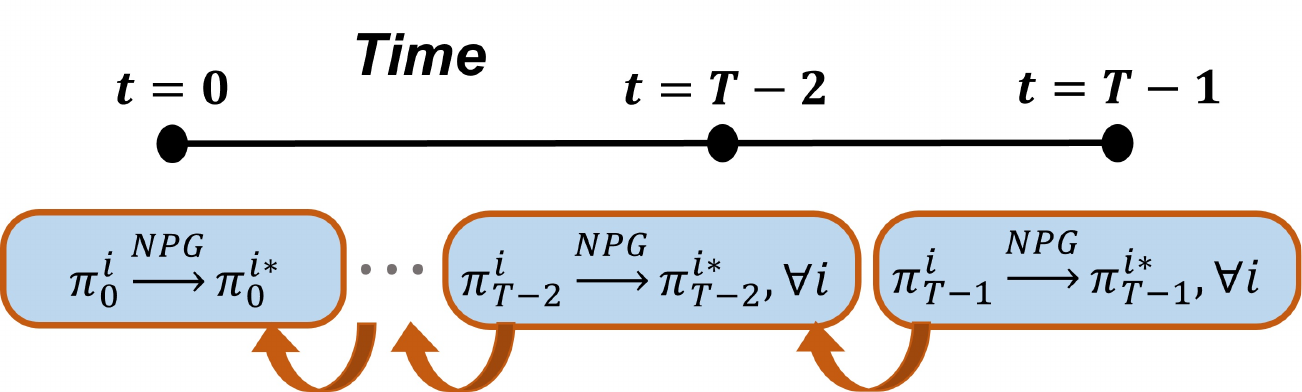}
    \caption{MRNPG Algorithm employs Natural Policy Gradient (NPG) for each agent at timestep $t$, starting from $t=T-1$ and moving in a receding horizon manner (backwards-in-time), to approximate the NE of the game.}
    \label{fig:MRNPG_flow}
\end{figure}
\textbf{MRNPG Illustration.} Figure \ref{fig:MRNPG_flow} shows the flow of the MRNPG algorithm. The algorithm utilizes Natural Policy Gradient (NPG) to converge to the NE $\pi^{i*}_t, \forall i$ first for $t=T-1$, then $t=T-2$ and continues in a receding horizon manner (backwards-in-time). MRNPG algorithm solves the HJI equations \cite{bacsar1998dynamic} in an approximate manner. According to the HJI equations the NE can be computed backwards in time using,
\begin{small}\begin{align*}
    \pi^{i*}_t = \argmin_{\pi} C^i_t(\pi, \pi^{-i*}_t | \pi^*_{[t+1,T-1]}), \hspace{0.2cm} t \in \{0,\ldots,T-1\},
\end{align*}\end{small}
\hspace{-0.28cm} for all $i$, where $C^i_t$ is the partial cost of agent $i$ and $\pi_{[t,t^\prime]}$ is the set of policies for all agents from time $t$ to $t^\prime$. MRNPG utilizes NPG to perform the above shown minimization (for each timestep $t$) in an independent data-driven stochastic manner to obtain $\tilde\pi^{i*}_t$ which is shown to be
\begin{small}\begin{align*}
    \tilde\pi^{i*}_t \approx \argmin_{\pi} C^i_t(\pi, \tilde\pi^{-i*}_t | \tilde\pi^*_{[t+1,T-1]}), \hspace{0.2cm} t \in \{0,\ldots,T-1\}
\end{align*}\end{small}
In the paper we show that due to the LQ framework and under the diagonal dominance condition, MRNPG converges linearly to the \emph{exact} NE of the game i.e. $\tilde\pi^{i*}_t \approx \pi^{i*}_t$  $\forall i,t$.


\textbf{Related Work} In the purely non-cooperative setting, considerable research activity has taken place, in Reinforcement Learning (RL) both in the finite population case \cite{zhang2021multi, hambly2023policy, mao2022improving} and in the mean-field limit (Mean-Field Games or MFGs for short) \cite{guo2019learning,subramanian2019reinforcement,elie2020convergence,zaman2021adversarial,angiuli2022unified,lauriere2022scalable,zaman2022reinforcement,zaman2023oracle}; see~\cite{lauriere2022learning} for a recent survey. Within the literature in competitive multi-agent  RL, two-player Zero-Sum games have proven to be particularly amenable to analysis with works such as \cite{zhang2019policy,bu2019global,carmona2021linear}. Conversely there have also been results in data driven techniques for solving equilibria where the utilities of the agents satisfy a potential function condition \cite{ding2022independent}. Apart from this work, RL for (General-Sum) CC agents remains predominantly an uncharted territory from a theoretical analytical standpoint besides a few empirical studies \cite{lowe2017multi}. On the other hand, there has been some recent works in RL for the purely cooperative setting (also called Mean-Field Control) \cite{subramanian2019reinforcement,carmona2019linear,gu2021mean,carmona2023model} or for the Zero-Sum game setting with finitely many \cite{zhang2020robust} or infinitely many agents \cite{carmona2021linear}. 

{\bf The work of \cite{hambly2023policy}} is particularly salient due to a lack of structure (Zero-Sum or Potential) in the utilities of the agents. It proves that natural policy gradient converges to the General-Sum Nash Equilibrium given knowledge of model parameters and a system noise inequality (Assumption 4 
\cite{hambly2023policy}). This inequality is hard to verify as the LHS increases with increasing system noise but the RHS may not decrease due to the dependence of cost on system noise. We generalize this convergence result to the data-driven (with unknown model parameters) CC setting under a more \emph{general, verifible}, \emph{time and system noise}-independent, diagonal dominance condition and obviate the need for covariance matrix estimation. This is made possible by employing the receding-horizon approach introduced for the LQR problem under perfect \cite{zhang2023revisiting} and imperfect \cite{zhang2023learning} information.



\noindent \textbf{Notation: } We use $[N] := \{1,\ldots,N\}$ for any $N \in \mathbb{N}$. We have $\lVert x \rVert_A$ representing $(x^\top A x)^{1/2}$  for a nonnegative-definite matrix $A$, and with $A=I$,  $\lVert \cdot \rVert := \lVert \cdot \rVert_2$. As in game theoretic notation, $a^{-i} := (a^j)_{j \neq i}$ represents the set of values $a$ for players other than $i$. Gaussian distribution with mean $\mu$ and covariance $\Sigma$ is denoted by $\Ns(\mu,\Sigma)$.

\section{Setup \& Equilibrium Characterization} \label{sec:form}
We consider a general-sum game among multiple teams, where agents within a team are cooperative, and distinct teams compete. Initially, we pose this problem in the finite-agent setting for the case where agents' dynamics are linear stochastic and costs (negative rewards) are quadratic, i.e., the LQ framework. Subsequently, to enable the characterization of Nash equilibria of the game, we consider the mean-field approximation within each team, i.e., the number of agents within each team tends to infinity, which alleviates the transient effect of other agents' decisions on the system dynamics \cite{carmona2015mean}. The result is a LQ mean-field type game (MFTG). In this section we delineate the Nash equilibria of the GS-MFTG and provide $\epsilon$-Nash guarantees for the finite agent Cooperative-Competitive game. 

{\bf Linear Quadratic Cooperative-Competitive Games.} We thus begin by considering the finite agent Cooperative-Competitive (CC) game problem. 
In a CC game, the agents are grouped into $N$ teams, with each team $i \in [N]$ having $M_i$ agents and agent $j$ in team $i$ having linear dynamics, in that it is driven by a linear function of 
the agent's state  $x^{i,j}_t$, the agent's action $u^{i,j,i}_t $, the average state $\bx^i_t = \sum_{j = 1}^{M_i} x^{i,j}_t/M_i$ of population $i$, and the average actions $\bu^{i,k}_t =\sum_{j = 1}^{M_i} u^{i,j,k}_t /M_i$ of population $i$. The control action $u^{i,j,i}_t$ refers to that of agent $j$ in team $i$, whereas $u^{i,j,k}_t$ for $k \in [N] \setminus i$ refers to the adversarial control input of player $k$ into the dynamics of agent $j$ of team $i$. In addition, the dynamics are affected by Gaussian noise with a team-specific covariance, $\omega^{i,j}_t \sim \Ns(0,\Sigma^i)$, which are independent for every $(i,j)$, as well as common noise $\omega^{0,i}_t \sim \Ns(0,\Sigma^0)$. Altogether, these lead to the linear dynamical system:
\begin{small}\begin{align}
    x^{i,j}_{t+1} =&  A^i_t x^{i,j}_t + \bA^i_t \bx^i_t \!+\! \sum_{k=1}^N \big(B^{i,k}_t u^{i,j,k}_t \!+ \!\bB^{i,k}_t\bu^{i,k}_t \big) +\omega^{0,i}_{t+1} + \omega^{i,j}_{t+1}, \label{eq:finite_agent_dyn}
\end{align}\end{small}
\hspace{-0.15cm}for $t \in \{0,\ldots,T-1\}$, where $A^i_t, \bA^i_t \in \mathbb{R}^{m \times m}$ and $B^{i,k}_t, \bB^{i,k}_t \in \mathbb{R}^{m \times p}$. 
For simplicity of analysis, we also assume that agents' initial states are null except for the exogenous input noise: $x^{i,j}_0 = \omega^{i,j}_0 + \omega^0_0$. All agents in team $i$ aim to optimize a single cost function  $J^i_M$ over a finite horizon, that depends on both the team's policy 
$\mcu^{i} \in \Us^i_M$ and those of other teams $\mcu^{-i} \in \Us^i_M$, where the set of policies $\Us^i_M$ is adapted to the state processes of all agents $(x^{i,j}_t)_{i \in [N], j \in [M_i], \forall t}$ in a causal manner. 
\begin{small}\begin{align} 
    & J^i_M(\mcu^{i}, \mcu^{-i}) = \frac{1}{M_i} \EE \hspace{-0.1cm} \sum_{j \in [M_i]} \sum_{t=0}^{T-1} \lVert x^{i,j}_t - \bx^i_t \rVert^2_{\rQ^i_t} \hspace{-0.05cm}+ \lVert \bx^i_t \rVert^2_{\rbQ^i_t} \hspace{-0.1cm}  \label{eq:finite_agent_utility} \\
    &  \hspace{2cm} +\!\sum_{k=1}^N\lVert u^{k,j,i}_t \!\!-\! \bu^{k,i}_t\rVert^2_{R^{k,i}_t} \!+\! \lVert \bu^{k,i}_t \rVert^2_{\bR^{k,i}_t}+\!\lVert x^{i,j}_T \!-\! \bx^i_T \rVert^2_{\rQ^i_T}\!\! + \!\lVert \bx^i_T \rVert^2_{\rbQ^i_T}  \nonumber 
\end{align}\end{small}
\hspace{-0.18cm} where $R^{ik}_t, \bR^{ik}_t \succ 0$ and  $\rQ^i_t, \rbQ^i_t \succeq 0$ are symmetric matrices of suitable dimensions. The subscript $M$ of $J$ refers to the fact that it is for the finite-agent setting. The cost contains a consensus term that penalizes deviation from the average and regulation of the average states and control actions. 

For each agent to minimize \eqref{eq:finite_agent_utility} with respect to policies $\mcu^i$, the appropriate solution concept is Nash Equilibrium, which we subsequently define. A set of policies $(\mcu^{i*})_{i \in [N]}$ 
is a Nash equilibrium (NE) if $J^i_M(\mcu^{i*}, \mcu^{-i*}) \leq J^i_M(\mcu^{i}, \mcu^{-i*})$ holds for all alternative policy selections $\mcu^i \in \Us^i_M$. 
%

{\bf Mean Field Approximation (GS-MFTG). } 
In general, equilibrium policies in finite-player dynamic games are functions of every player's state. This causes challenges in the computability and learnability of the NE with a large number of agents. 
Due to this difficulty, 
we shift focus to the mean-field setting where the number of agents in each team $M_i \rightarrow \infty$. The NE policies in the mean-field setting are shown to depend on the state of the generic agent and the average state (mean-field), thus resolving the scalability problem. This limiting game is termed the GS-MFTG.


The state dynamics in the MFTG can be formulated by concatenating the state dynamics of the $j^{th}$ agent (for any $j \in \mathbb{N}$) from each team and discarding the superscript $j$. Using \eqref{eq:finite_agent_dyn}, the dynamics of this joint state will be as follows
\begin{small}\begin{align} \label{eq:gen_agent_dyn}
	\hspace{-0.2cm} x_{t+1} = A_t x_t + \bA_t \bx_t + \sum_{i=1}^N \big(B^i_t u^i_t + \bB^i_t\bu^i_t \big) + \omega^0_{t+1} + \omega_{t+1} 
\end{align}\end{small}%
for $t = \{0,\ldots,T-1\}$, where $A_t = \diag((A^i_t)_{i \in [N]})$, $B^i_t = \diag ((B^{i,k}_t)_{k \in [N]})$ and $\bB^i_t = \diag((\bB^{i,k}_t)_{k \in [N]})$, 
$\bx_t = \EE[x_t \mid (\omega^0_s)_{s \leq t}]$ and $\bu^i_t = \EE[u^i_t \mid (\omega^0_s)_{s \leq t}]$ for $i \in [N]$.
The policies of the players belong to the feasible sets $\mcu^i \in \Us^i$, where $\Us^i$ is the set of all policies causally adapted to the state and mean-field process $\{ x_0, \bx_0, \ldots, x_t, \bx_t \}$.
The cost of player $i \in [N]$ is
\begin{small}\begin{align} 
\hspace{-0.2cm}	J^i(\mcu^{i}, \mcu^{-i}) = & \EE \Big[\sum_{t=0}^{T-1} \lVert x_t - \bx_t \rVert^2_{Q^i_t} + \lVert \bx_t \rVert^2_{\bQ^i_t} + \lVert u^i_t - \bu^i_t \rVert^2_{R^i_t} + \lVert \bu^i_t \rVert^2_{\bR^i_t} \label{eq:gen_agent_cost} \\
& \hspace{4cm} + \lVert x_T - \bx_T \rVert^2_{Q^i_T} + \lVert \bx_T \rVert^2_{\bQ^i_T} \Big] \nonumber
\end{align}\end{small}%
where $R^i_t = \diag\big( (R^{i,k}_t)_{k \in [N]} \big)$ and $\bR^i_t = \diag\big( (\bR^{i,k}_t)_{k \in [N]} \big) \succ 0$ are symmetric. The matrices $Q^i_t,\bQ^i_t \succeq 0$ are block matrices such that $(Q^i_t)_{ii} = \rQ^i_t$ and $(\bQ^i_t)_{ii} = \rbQ^i_t$ and $0$ otherwise. Each player $i \in [N]$ aims to minimize its cost function $J^i$ using its policy $\mcu^i \in \Us^i$
where now we formally introduce the solution concept of Nash equilibrium for the MFTG.
\begin{definition} \label{def:NE_GS_MFTG}
The set of policies 
$(\mcu^{i*})_{i \in [N]}$ are in Nash equilibrium (NE) for the MFTG if for each $i \in [N]$, 
	$J^i(\mcu^{i*}, \mcu^{-i*}) \leq J^i(\mcu^{i},\mcu^{-i*})$ 
for all $\mcu^i \in \Us^i$. 
\end{definition}

By the definition of the class of policies $\Us^i$ 
this NE is symmetric, i.e., cooperating agents will have symmetric policies.
%


{\bf Approximate Nash Equilibria. } 
The NE of the GS-MFTG is shown to be a $\Os(1/M)$-Nash equilibrium of the CC game \eqref{eq:finite_agent_dyn}-\eqref{eq:finite_agent_utility} where $M := \min_{i \in [N]} M_i$ is the minimum number of agents across all teams $i$. 
This result guarantees that the NE found using RL techniques (in Section \ref{sec:MRNPG}) will be arbitrarily close to the NE of the finite population CC game given that $M$ is large enough. 
\begin{theorem} \label{thm:eps_Nash}
    The NE of the MFTG is $\epsilon$-Nash for the finite agent CC game \eqref{eq:finite_agent_dyn}-\eqref{eq:finite_agent_utility} where $\epsilon = \Os(1/\min_{i \in [N]}M_i)$, i.e.
    \begin{small}\begin{align*}
        J^i_M(\mcu^{i*},\mcu^{-i*}) - \inf_{\mcu^i \in \Us^i_M} J^i_M(\mcu^i,\mcu^{-i*}) = \Os \bigg( \frac{T \sigma}{\min_i M_i} \bigg)
    \end{align*}\end{small}
\end{theorem}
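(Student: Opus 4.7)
The plan is a standard mean-field coupling argument. Denote by $\bar{x}^{i,M}_t := (1/M_i)\sum_{j \in [M_i]} x^{i,j}_t$ the empirical mean of team $i$ under the MFTG NE $\{\mcu^{k*}\}_{k\in [N]}$ applied to the finite-agent dynamics \eqref{eq:finite_agent_dyn}, and by $\bx^i_t$ the corresponding conditional mean in the MFTG \eqref{eq:gen_agent_dyn}. I want to bound two quantities separately: the \emph{consistency} gap $|J^i_M(\mcu^{i*},\mcu^{-i*}) - J^i(\mcu^{i*},\mcu^{-i*})|$ and the \emph{best-response} gap $|\inf_{\mcu^i\in\Us^i_M} J^i_M(\mcu^i,\mcu^{-i*}) - J^i(\mcu^{i*},\mcu^{-i*})|$; both will be shown to be $\Os(T\sigma/M)$ and then chained with the MFTG Nash inequality of Theorem \ref{thm:CLNE}.

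\textbf{Consistency.} Since $\mcu^{k*}$ depends only on a tagged agent's own state and on the team mean, plugging it into the finite system makes any tagged agent $(i,j)$ and the generic MF agent be driven by the same common noise $\omega^{0,i}$, but they decouple through the LLN fluctuation $\bar{x}^{i,M}_t - \bx^i_t$. Because the idiosyncratic noises $\{\omega^{i,j}_t\}_{j \in [M_i]}$ are i.i.d.\ with bounded covariance and the dynamics are linear with closed-loop gains that are stable on $[0,T]$, I would iterate the recursion to obtain $\EE\|\bar{x}^{i,M}_t - \bx^i_t\|^2 = \Os(t\sigma/M_i)$, and similarly for the corresponding control deviations $\bu^{i,M}_t - \bu^i_t$. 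Substituting into the quadratic cost \eqref{eq:finite_agent_utility} via the decomposition $x^{i,j}_t - \bar{x}^{i,M}_t = (x^{i,j}_t - \bx^i_t) - (\bar{x}^{i,M}_t - \bx^i_t)$ and summing over $t \in \{0,\dots,T\}$ yields the desired $\Os(T\sigma/M_i)$ bound.

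\textbf{Best-response gap.} The key observation is that against the symmetric profile $\mcu^{-i*}$, the infimum of $J^i_M(\cdot,\mcu^{-i*})$ is attained at a policy that is symmetric across the agents of team $i$. The cost \eqref{eq:finite_agent_utility} and the opponent-induced dynamics are symmetric functions of team-$i$ trajectories, so by Jensen/convexity averaging any candidate over permutations of the team-$i$ indices can only weakly decrease $J^i_M$ while preserving causality in $\Us^i_M$. Restricting to such symmetric policies, each lifts to an admissible MF deviation $\tilde{\mcu}^i \in \Us^i$ by substituting the empirical mean by $\bx^i$, and the very same coupling estimate used in the consistency step gives $|J^i_M(\mcu^i,\mcu^{-i*}) - J^i(\tilde{\mcu}^i,\mcu^{-i*})| = \Os(T\sigma/M_i)$. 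Then chaining with the MFTG NE inequality $J^i(\mcu^{i*},\mcu^{-i*}) \leq J^i(\tilde{\mcu}^i,\mcu^{-i*})$ and the consistency bound yields $J^i_M(\mcu^{i*},\mcu^{-i*}) \leq J^i_M(\mcu^i,\mcu^{-i*}) + \Os(T\sigma/M_i)$ for every symmetric $\mcu^i$, and hence for the infimum.

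\textbf{Main obstacle.} The subtle point is establishing a \emph{uniform-in-policy} $1/M$ coupling rate in the best-response step: an adversarially chosen deviator could in principle amplify the LLN fluctuation arbitrarily, so the coupling constants might blow up with the norm of the deviating policy. This is resolved by noting that it suffices to consider deviators $\mcu^i$ whose finite-population cost does not exceed $J^i_M(\mcu^{i*},\mcu^{-i*})+1$, after which the quadratic structure of \eqref{eq:finite_agent_utility} with $R^{k,i}_t,\bR^{k,i}_t \succ 0$ furnishes an a priori $L^2$ bound on the induced state and control trajectories, rendering the coupling constants explicit and uniform over this set. This together with the permutation-invariance of the causality constraint defining $\Us^i_M$ (needed to justify symmetrization) completes the argument.
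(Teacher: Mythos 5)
Your route is genuinely different from the paper's. The paper runs no coupling or symmetrization argument at all: it observes that under any symmetric linear feedback the tagged-agent deviation and the empirical mean in \eqref{eq:epsN_x_M}--\eqref{eq:epsN_bx_M} are \emph{exactly} linear images, with the same transfer matrices $\Psi^*,\bPsi^*$ as in the mean-field limit, of effective noise vectors whose covariances differ from the limiting ones by exactly $\pm\Sigma/M$ per time step. Both costs are then identical traces against these covariances, so the consistency gap is computed in closed form as in \eqref{eq:eps_Nash_1}, and the best-response gap needs no symmetrization: for the finite-population infimizer $\tmcu$ one gets $J^i_M(\tmcu)-J^i(\tmcu)\geq \lambda_{\min}(Q_0)\sigma T/M\geq 0$, which chains with the MFTG Nash inequality as in \eqref{eq:eps_Nash_2}. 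Your plan is the standard propagation-of-chaos route and is more general (it would survive beyond LQ), and your symmetrization plus a priori $L^2$ bound actually addresses the reduction from arbitrary deviations in $\Us^i_M$ to symmetric ones that lift to $\Us^i$ --- a reduction the paper leaves implicit by writing the deviating cost directly in terms of gain matrices. That is a real benefit of your approach, at the price of heavier machinery.

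The one step you cannot gloss over is the rate. Expanding the quadratic cost around the mean-field quantities produces cross terms such as $\EE[(x^{i,j}_t-\bx^i_t)^\top Q^i_t(\bar x^{i,M}_t-\bx^i_t)]$, and bounding these by Cauchy--Schwarz together with your estimate $\EE\lVert\bar x^{i,M}_t-\bx^i_t\rVert^2=\Os(t\sigma/M)$ only yields contributions of order $\sqrt{T\sigma/M}$, i.e.\ the usual $1/\sqrt{M}$ rate rather than the claimed $1/M$. To recover $1/M$ you must compute the cross-covariances exactly: averaging over $j$ collapses one family of cross terms onto $-\EE\lVert\bar x^{i,M}_t-\bx^i_t\rVert^2_{Q^i_t}$, and the remaining ones are $\Os(\sigma/M)$ because the drivers $\frac{M-1}{M}\omega^j_t-\frac{1}{M}\sum_{k\neq j}\omega^k_t$ and $\frac{1}{M}\sum_k\omega^k_t$ are uncorrelated (equivalently, because the finite- and infinite-population costs are the same quadratic form evaluated on covariances differing by $\Sigma/M$, which is exactly what the paper's trace computation exploits). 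The same care is needed in your best-response step, where the deviator's closed loop is no longer the symmetric NE one. With that orthogonality made explicit your argument goes through; without it you only obtain the weaker $\Os(T\sigma/\sqrt{M})$ statement.
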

The guarantee is obtained by analyzing the difference between finite and infinite population cost functions and by bounding $\epsilon$ with a function of this difference. The complete proof is provided in Appendix \ref{sec:eps_Nash}. 
Although a similar (albeit slower $\Os(1/\sqrt{N}$) bound has been shown in the case of purely competitive MFGs \cite{zaman2022reinforcement}, to our knowledge, this work is the first to quantify the equilibrium gap between finite population CC games and GS-MFTGs. 

{\bf Characterization of Nash Equilibria. }
Next, we study a decomposition of the GS-MFTG into two sub-problems, the first one pertaining to the mean-field and the second one to the deviation from the mean-field. This decomposition is inspired by the analysis of the open-loop Nash equilibrium of the MFTG, but since it is not central to this exposition it is deferred to Appendix \ref{sec:open_loop_analysis}. 
Then we use the discrete-time Hamilton-Jacobi-Isaacs (HJI) equations to characterize the NE and present certain \emph{invertibility} conditions to guarantee its existence and uniqueness. These conditions are a mainstay of scenarios with finitely many competing players \cite{bacsar1998dynamic}; it might also be noted that the MFG framework  with infinitely many competing players typically requires a different set of conditions \cite{huang2006large,lasry2006jeux}. 

To further elaborate on this, let us define $y_t = x_t - \bx_t$ and $v^i_t = u^i_t - \bu^i_t$. The dynamics of $y_t$ and $\bx_t$ can be written in a decoupled manner using \eqref{eq:gen_agent_dyn},
\begin{small}\begin{align} 
	y_{t+1} = A_ty_t + \sum_{i=1}^N B^i_t v^i_t  + \omega_{t+1},
	\bx_{t+1} = \tA_t \bx_t + \sum_{i=1}^N \tB^i_t \bu^i_t + \omega^0_{t+1}, \label{eq:dyn_decomp}
\end{align}\end{small}%
where $\tA^i_t = A^i_t + \bA^i_t$ and $\tB^i_t = B^i_t + \bB^i_t$. Since the dynamics of processes $y_t$ and $\bx_t$ are decoupled, we can decompose the cost of $i^{th}$ player, $J^i$ \eqref{eq:gen_agent_cost}, into two decoupled parts as well:
\begin{small}\begin{align}
	J^i(\mcu^{i}, \mcu^{-i}) = & J^i_y(\mcv^i, \mcv^{-i}) + J^i_{\tx}(\bmcu^i,\bmcu^{-i}), \nonumber\\
    J^i_y(\mcv^i, \mcv^{-i}) = & \EE \Big[ \sum_{t=0}^{T-1} \big[ \lVert y_t \rVert^2 _{Q^i_t} + \lVert v^i_t \rVert^2_{R^i_t} \big] + \lVert y_T \rVert^2 _{Q^i_T} \Big], \nonumber\\
    J^i_{\tx}(\bmcu^i,\bmcu^{-i}) = & \EE \Big[ \sum_{t=0}^{T-1} \big[ \lVert \bx_t \rVert^2 _{\bQ^i_t} + \lVert \bu^i_t \rVert^2_{\bR^i_t} \big] + \lVert \bx_T \rVert^2_{\bQ^i_T} \Big]. \label{eq:cost_decomp}
\end{align}\end{small}
This results in two decoupled $N$-player LQ game problems, and hence we use the discrete-time HJI equations to characterize the NE of the GS-MFTG in the following theorem. 
Before stating the theorem we introduce the coupled Riccati equations for $N$-player LQ games \cite{bacsar1998dynamic,hambly2023policy}. Consider control matrices
\begin{small}\begin{align*}
	K^{i*}_t = (R^i_t + (B^i_t)^\top P^{i*}_{t+1} B^i_t)^{-1} (B^i_t)^\top P^{i*}_{t+1} L^i_t, \hspace{0.1cm} \forall i \in [N] 
\end{align*}\end{small}%
where $P^{i*}_t$ are determined using Coupled Riccati equations,
\begin{small}\begin{align} \label{eq:Riccati_NE}
	P^{i*}_t & = L^\top_t P^{i*}_{t+1} L_t + (K^{i*}_t)^\top R^i_t K^{i*}_t + Q^i_{t}, \hspace{0.2cm} P^{i*}_{T} = Q^i_T, 
\end{align}\end{small}%
s.t. $L_t = A_t - \sum_{i=1}^N B^i_t K^{i*}_t$ and $L^i_t = A_t - \sum_{j \neq i} B^j_t K^{j*}_t$. The expressions for $\bK^{i*}_t$ and $\bP^{i*}_t$ can be obtained by replacing $A_t, B^i_t, Q^i_t$ and $R^i_t$ matrices by $\tA_t, \tB^i_t, \bQ^i_t$ and $\bR^i_t$.
\begin{theorem} \label{thm:CLNE}
The set of policies $(\mcu^{i*})_{i \in [N]}$ constitutes a NE if, and only if,
\begin{align} \label{eq:CLNE_control}
	u^{i*}_t(x_t) = -K^{i*}_t (x_t - \bx_t) - \bK^{i*}_t \bx_t,
\end{align}
$i \in [N], t \in \{0,\ldots.T-1\}$ where the control parameters $K^{i*}_t$ are guaranteed to exist and be unique if the matrices $\Phi_t$ and $\bPhi_t$ are invertible. 
\end{theorem}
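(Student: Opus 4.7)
The plan is to prove the theorem by exploiting the cost/dynamics decomposition established in \eqref{eq:dyn_decomp}--\eqref{eq:cost_decomp}, and then applying a backward-induction argument based on the discrete-time HJI equations to each of the two resulting $N$-player LQ subgames (one for the deviation process $y_t$ and one for the mean-field $\bx_t$).

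\textbf{Step 1: clean decoupling.} I would first show that any admissible policy $\mcu^i \in \Us^i$ admits the decomposition $u^i_t = v^i_t + \bu^i_t$, where $\bu^i_t = \EE[u^i_t \mid (\omega^0_s)_{s \le t}]$ is adapted to the common-noise filtration and $v^i_t$ has zero conditional mean. Because $(y_t)$ and $(\bx_t)$ evolve according to the decoupled dynamics \eqref{eq:dyn_decomp}, driven by disjoint noises $\omega_{t+1}$ and $\omega^0_{t+1}$, and because $\EE[y_t \mid (\omega^0_s)_{s\le t}] = 0$, all cross terms in \eqref{eq:gen_agent_cost} vanish in expectation, yielding $J^i(\mcu^i, \mcu^{-i}) = J^i_y(\mcv^i, \mcv^{-i}) + J^i_{\tx}(\bmcu^i, \bmcu^{-i})$. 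Since $\mcv^i$ and $\bmcu^i$ are then independently chosen from compatible feasible sets, the NE problem decouples into the two separate $N$-player LQ games associated with $J^i_y$ and $J^i_{\tx}$. This reduces the theorem to characterizing NE for an $N$-player finite-horizon LQ game with dynamics $y_{t+1} = A_t y_t + \sum_i B^i_t v^i_t + \omega_{t+1}$ and quadratic cost $\EE\sum_t \|y_t\|^2_{Q^i_t} + \|v^i_t\|^2_{R^i_t}$, and analogously with $(\tA_t, \tB^i_t, \bQ^i_t, \bR^i_t)$ for the mean-field subgame.

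\textbf{Step 2: backward HJI induction.} For the $y$-subgame, I would hypothesize inductively that the value function of player $i$ is quadratic, $V^i_{t+1}(y) = y^\top P^{i*}_{t+1} y + c^i_{t+1}$, starting with $P^{i*}_T = Q^i_T$. Plugging other players' candidate linear feedback $v^j_t = -K^{j*}_t y_t$ into the dynamics gives $y_{t+1} = L^i_t y_t + B^i_t v^i_t + \omega_{t+1}$. Then player $i$'s stage problem,
\begin{align*}
\min_{v^i_t}\; \|y_t\|^2_{Q^i_t} + \|v^i_t\|^2_{R^i_t} + \EE\bigl[ V^i_{t+1}(y_{t+1}) \mid y_t \bigr],
\end{align*}
is strictly convex in $v^i_t$ (since $R^i_t \succ 0$), and the first-order condition gives exactly the claimed feedback $v^{i*}_t = -K^{i*}_t y_t$ with $K^{i*}_t = (R^i_t + (B^i_t)^\top P^{i*}_{t+1} B^i_t)^{-1} (B^i_t)^\top P^{i*}_{t+1} L^i_t$. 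Substituting back yields a quadratic $V^i_t(y) = y^\top P^{i*}_t y + c^i_t$ with $P^{i*}_t$ satisfying \eqref{eq:Riccati_NE}, closing the induction. The identical argument, applied with $(\tA_t, \tB^i_t, \bQ^i_t, \bR^i_t)$, yields the $\bK^{i*}_t$ feedback for the mean-field subgame, and recombining $u^{i*}_t = v^{i*}_t + \bu^{i*}_t$ gives \eqref{eq:CLNE_control}.

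\textbf{Step 3: existence/uniqueness via $\Phi_t, \bPhi_t$.} The subtlety in the induction is that the best-response formula for $K^{i*}_t$ depends on $K^{j*}_t$ for $j\neq i$ through $L^i_t$. Rewriting the $N$ coupled first-order conditions as
\begin{align*}
(R^i_t + (B^i_t)^\top P^{i*}_{t+1} B^i_t) K^{i*}_t + (B^i_t)^\top P^{i*}_{t+1} \!\!\sum_{j\neq i}\! B^j_t K^{j*}_t = (B^i_t)^\top P^{i*}_{t+1} A_t,
\end{align*}
and stacking the $K^{i*}_t$'s into a single block vector, produces a linear system of the form $\Phi_t\, \mathrm{vec}(K^{*}_t) = \Psi_t$, where $\Phi_t$ is the block matrix with diagonal blocks $R^i_t + (B^i_t)^\top P^{i*}_{t+1} B^i_t$ and off-diagonal blocks $(B^i_t)^\top P^{i*}_{t+1} B^j_t$. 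Invertibility of $\Phi_t$ delivers existence and uniqueness of $K^{*}_t$ at stage $t$; an analogous argument with $\bPhi_t$ handles $\bK^{*}_t$. Propagating backward from $t=T-1$ to $t=0$ under the hypothesis that $\Phi_t,\bPhi_t$ are invertible for all $t$ gives the "if" direction. The "only if" direction follows from the fact that any NE must satisfy the same stagewise first-order conditions (by standard one-shot-deviation arguments), so uniqueness of the linear system forces any NE to coincide with the constructed feedback.

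\textbf{Main obstacle.} The technically most delicate part is verifying that the DP value functions remain quadratic and, in particular, that the quadratic form is \emph{separable} into a $\|y\|^2$-part and a $\|\bx\|^2$-part across the induction, since the original cost couples them through the state $x_t = y_t + \bx_t$ in the $\EE[\cdot]$. Showing that the cross terms $y_t^\top (\cdot) \bx_t$ vanish in expectation at every stage — not just at $t=0$ — requires careful conditioning on the common-noise filtration together with the zero-mean property of $y_t$ relative to that filtration; this is what makes the two subgames truly independent and allows the HJI recursion to be applied to each in isolation.
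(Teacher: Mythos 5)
Your proposal is correct and follows essentially the same route as the paper: decompose the game into the $y$- and $\bx$-subgames via \eqref{eq:dyn_decomp}--\eqref{eq:cost_decomp}, run the discrete-time HJI backward induction with quadratic value functions on each subgame, and stack the coupled stagewise first-order conditions into the linear systems governed by $\Phi_t$ and $\bPhi_t$, whose invertibility yields existence and uniqueness of $(K^{i*}_t,\bK^{i*}_t)$. The only remark worth making is that the ``main obstacle'' you flag (stagewise separability of the value function) does not actually arise once your Step~1 is in place, since each subgame's dynamic-programming recursion is then carried out entirely in its own state variable with its own decoupled cost --- which is exactly how the paper proceeds.
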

The proof can be found in Appendix \ref{sec:CLNE}. The matrices $\Phi_t$ are block diagonal with the $i$th diagonal entry $R^i_t + (B^i_t)^\top P^{i*}_{t+1} B^i_t$ and $ij$th non-diagonal entries $(B^i_t)^\top P^{i*}_{t+1} B^j_t$. The emergence of the invertibility condition is a consequence of the HJI equations and it naturally arises in similar games with a finite number of competing entities, e.g. LQ Games \cite{bacsar1998dynamic}. 

\section{Multi-player Receding-horizon NPG (MRNPG)} \label{sec:MRNPG}
In this section, we discuss the challenges that arise when solving the NE through a data-driven approach, having established its linear form as shown in \eqref{eq:CLNE_control}. In our CC setting, finding the NE is elusive as the cost function even for a single agent LQ  control problem is non-convex \cite{fazel2018global,hu2023toward}. Additionally, vanilla policy gradient is known to diverge, in purely competitive $N$-player LQ games \cite{mazumdar2019policy}. \cite{hambly2023policy} has proven the convergence of natural policy gradient for \emph{purely competitive agents} albeit under complete knowledge of the model parameters and a system noise condition. In Section \ref{sec:convergence}, show the linear rate of convergence of the natural policy gradient to the CC NE even when model parameters are unknown  under a diagonal dominance condition which is shown to generalize the system noise condition (Assumption 4 in \cite{hambly2023policy}) in Lemma \ref{lem:generality_of_DD}. We also eliminate a potential source of error in the analysis by obviating the need for covariance matrix estimation.

We now provide some details of our algorithmic construction to achieve this result. The key idea is the use of receding-horizon approach (inspired by the discrete-time Hamilton-Jacobi-Isaacs (HJI) equations) whereby finding policies for all the agents at a fixed time $t$ (and moving backwards-in-time) reveals a quadratic cost structure, allowing linear convergence guarantee. 
The net result is that teams comprised of finitely many agents ($M \geq 2$) following the MRNPG algorithm approach the NE (Definition \ref{def:NE_GS_MFTG}) of the GS-MFTG \eqref{eq:gen_agent_dyn}-\eqref{eq:gen_agent_cost}.

{\bf Disentangling the Mean Field and deviation from Mean-Field.}
Let us define a joint state $x^j_t$ for $j \in [M]$ by concatenating the states of $j^{th}$ agents in all teams at time $t$ such that $x^j_t = [(x^{1,j}_t)^\top, \ldots, (x^{N,j}_t)^\top]^\top$. 
Due to the linear form of NE \eqref{eq:CLNE_control} we restrict the controllers to be linear in state and empirical mean-field, $x^j_t$ and $\tx_t$, respectively (without loss of generality Theorem \ref{thm:CLNE}), such that $u^{i,j}_t = -K^i_t (x^{j}_t - \tx_t) - \bK^i_t \tx_t$, which results in $\tu^i_t = -\bK^i_t \tx_t$, where the average state $\tx_t = \frac{1}{M} \sum_{j = 1}^M x^j_t$ and average control $\tu^i_t = \frac{1}{M} \sum_{j = 1}^M u^{i,j}_t$. Under these control laws, the dynamics of agent $j \in [M]$
and the dynamics of the empirical mean-field $\tx_t$ are linear. Specifically the dynamics of the average state $\tx_t$ is
\begin{align}
    \tx_{t+1} = \bL_t \tx_t + \tomega^0_{t+1}, \label{eq:tx_t}
\end{align}
where $L_t = A_t - \sum_{i=1}^N B^i_t K^i_t$, $\bL_t = \tA_t - \sum_{i=1}^N \tB^i_t \bK^i_t$ and $\tomega^0_{t+1} =  \omega^0_{t+1} + \sum_{j=1}^M \omega^j_{t+1}/M$. Now as in the Section \ref{sec:form}, we introduce the deviation $y^j_t = x^j_t - \tx_t$ with dynamics
\begin{align} \label{eq:y_t}
	y^j_{t+1} & = L_t y^j_t + \tomega^j_{t+1}, 
\end{align}
where $\tomega^j_{t+1} = (M-1) \omega^j_{t+1}/M - \sum_{k \neq j} \omega^k_{t+1}/M$. The cost of any agent $j \in [M]$ and player/team $i \in [N]$ under control laws $(\bfK^i,\bfK^{-i})$ where $\bfK^i = (K^i_t,\bK^i_t)_{t \in [T]}$ for any $i \in [N]$ 
can be decomposed in a similar manner;
\begin{align*} 
	& \tJ^{i,j} (\bfK^i,\bfK^{-i})= \tJ^{i,j}_y(K^i,K^{-i}) + \tJ^{i}_{\tx}(\bK^i, \bK^{-i}) \\ 
    & \tJ^{i,j}_y(K^i,K^{-i})  = \EE \Big[\sum_{t=0}^{T-1} \lVert y^j_t \rVert^2_{Q^i_t + (K^i_t)^\top R^i_t K^i_t} + \lVert y^j_T \rVert^2_{Q^i_T} \Big] \\
    & \tJ^{i}_{\tx}(\bK^i, \bK^{-i}) = \EE \Big[\sum_{t=0}^{T-1} \lVert \tx_t \rVert^2_{\bQ^i_t + (\bK^i_t)^\top \bR^i_t \bK^i_t}  + \lVert \tx_T \rVert^2_{\bQ^i_T} \Big],
\end{align*}
where $K^i = (K^i_t)_{t \in [T]}$ and $\bK^i = (\bK^i_t)_{t \in [T]}$. 
This problem re-parameterization allows us to decouple the cost function into terms of consensus error $y$ and mean field $\tx$.

{\bf Receding Horizon Mechanism.} We solve each problem using the receding-horizon approach. This approach is inspired by the HJI equations \cite{bacsar1998dynamic} which obtain the NE by solving for the NE policy at time $T-1$ and then moving in retrograde-time. The receding-horizon approach is a data-driven version of the discrete-time HJI equations. 
Next, we provide details of the receding-horizon approach for the process $y$. At each time-step $t$ 
we solve for the set of controllers at time $t$, $(K^i_t)_{i \in [N]}$, which minimize the cost $\tJ^{i,1}_{y,t}(K^i,K^{-i})$, while keeping the controllers $(K_s)_{t < s < T}$ fixed 
\begin{align} \label{eq:min_tJ_iyt}
\min_{K^i_t} \tJ^{i}_{y,t}(K^i,K^{-i}) = \EE \Big[ \lVert y^1_t \rVert^2_{Q^i_t + (K^i_t)^\top R^i_t K^i_t} + \sum_{s=t+1}^T \lVert y^1_s \rVert^2_{Q^i_s + (K^i_s)^\top R^i_s K^i_s} \Big],
\end{align}
where $y_t \sim \Ns(0,\Sigma_y),  \Sigma_y \succ 0$ and $K^i_T = 0$ for each player $i \in [N]$. Notice that the choice of agent $1$ is arbitrary. The minimization problem at time $t$, \eqref{eq:min_tJ_iyt} (due to forward-in-time controllers $(K_s)_{t < s < T}$ being fixed) is quadratic in the control parameter $K^i_t$ which allows it to satisfy a specific PL condition (Lemma \ref{lem:PL}) which ensures the linear rate of convergence of natural policy gradient to the NE. Similarly for the process $\tx$ at each time-step 
$t \in \{T,\ldots,0\}$ we fix the controllers forward-in-time $(\bK_s)_{t < s < T}$ and solve for the set of controllers $(\bK^i_t)_{i \in [N]}$ which minimize the cost 
\hspace{-0.8cm}
\begin{small}
\begin{align} \label{eq:min_tJ_ixt}
\min_{\bK^i_t} \tJ^{i}_{\tx,t}(\bK^i,\bK^{-i}) :=  \EE \Big[ \lVert \tx_t \rVert^2_{\bQ^i_t + (\bK^i_t)^\top \bR^i_t \bK^i_t} + \sum_{s=t+1}^T \lVert \tx_s \rVert^2_{\bQ^i_s + (\bK^i_s)^\top \bR^i_s \bK^i_s} \Big]
\end{align}
\end{small}%
where $\tx_t \sim \Ns(0,\Sigma_{\tx}), \Sigma_{\tx} \succ 0$ and $\bK^i_T = 0$, $\forall i \in [N]$. For given time $t$, the set of controllers which satisfy equations \eqref{eq:min_tJ_iyt} and \eqref{eq:min_tJ_ixt} for a fixed set of forward-in-time controller $(K_s)_{t < s < T}$ are called \emph{local-Nash controllers} and defined as
\begin{align} \label{eq:local_Nash_cont}
    \tK^{i*}_t = \argmin_{K^i_t} \tJ^{i}_{y,t}(K^i,K^{-i}),
    \tilde{\bK}^{i*}_t = \argmin_{\bK^i_t} \tJ^{i}_{\tx,t}(\bK^i,\bK^{-i}). \nonumber
\end{align}


{\bf MRNPG Algorithm Construction.}
Before introducing the MRNPG algorithm, we first characterize what constitutes a valid search direction in policy space i.e. policy gradient in the receding-horizon setting. 
\begin{lemma}\label{lemma:receding_horizon_gradient}
    In a receding-horizon setting for a fixed $t \in \{T-1,\ldots,0\}$ the policy gradient of cost $\tJ^i_{y,t}(K^i,K^{-i})$ with respect to $K^i_t$ is $\nabla^i_{y,t} (K^i,K^{-i}) := \delta \tJ^i_{y,t}(K^i,K^{-i})/\delta K^i_t$ 
    \begin{small}\begin{align} \label{eq:pol_grad}
        \nabla^i_{y,t} (K^i,K^{-i})= & 2 \big((R^i_t +  (B^i_t)^\top P^i_{y,t+1} B^i_t)K^i_t - (B^i_t)^\top P^i_{y,t+1} \big(A_t-\sum_{j \neq i}B^j_t K^j_t \big)  \big)\Sigma_y,
    \end{align}\end{small}%
    with $P^i_{t+1}$ defined in terms of controllers $(K^i_s)_{i \in [N], t < s < T}$
    \begin{small}\begin{align} \label{eq:tP_iyt}
    	\hspace{-0.27cm}P^i_{y,t} = Q^i_t + (K^i_t)^T R^i_t K^i_t + L_t^\top P^i_{y,t+1} L_t, P^i_{y,T} = Q^i_T.
    \end{align}\end{small}
\end{lemma}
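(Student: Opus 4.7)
\textbf{Proof plan for Lemma \ref{lemma:receding_horizon_gradient}.} The plan is to reduce the receding-horizon cost $\tJ^i_{y,t}(K^i, K^{-i})$ to an explicit quadratic-plus-constant expression in $K^i_t$ via a standard dynamic-programming identity, and then differentiate. The key enabling fact is that, in the receding-horizon setting, the forward-in-time controllers $(K_s)_{t < s < T}$ are frozen, so the value matrix $P^i_{y,t+1}$ defined in \eqref{eq:tP_iyt} does \emph{not} depend on $K^i_t$.

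First I would establish that, under the closed-loop dynamics $y^1_{s+1}=L_s y^1_s+\tomega^j_{s+1}$ with $L_s=A_s-\sum_j B^j_s K^j_s$, the cost-to-go from time $t$ satisfies
\begin{align*}
\EE\!\left[\sum_{s=t}^{T}\lVert y^1_s\rVert^2_{Q^i_s+(K^i_s)^\top R^i_s K^i_s}\, \Big|\, y^1_t\right] = (y^1_t)^{\top} P^i_{y,t} y^1_t + c^i_t,
\end{align*}
where $P^i_{y,t}$ is given by the backward recursion \eqref{eq:tP_iyt} (with $K^i_T=0$ so $P^i_{y,T}=Q^i_T$) and $c^i_t$ depends only on the noise covariances and the frozen future controllers. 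This is a textbook induction: assume the identity at $s=t+1$, substitute $y^1_{t+1}=L_t y^1_t+\tomega^1_{t+1}$, expand the quadratic, and use that $\tomega^1_{t+1}$ is zero mean and independent of $y^1_t$. Taking expectations over $y^1_t\sim\Ns(0,\Sigma_y)$ yields
\begin{align*}
\tJ^i_{y,t}(K^i,K^{-i}) \;=\; \tr\!\big(P^i_{y,t}\,\Sigma_y\big) + c^i_t .
\end{align*}

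Next, since neither $c^i_t$ nor $P^i_{y,t+1}$ depends on $K^i_t$, I would differentiate only the one-step Riccati increment
\begin{align*}
P^i_{y,t} \;=\; Q^i_t + (K^i_t)^{\top} R^i_t K^i_t + L_t^{\top} P^i_{y,t+1} L_t ,
\end{align*}
treating $P^i_{y,t+1}$ as a constant symmetric matrix. Using the standard matrix-calculus identities $\nabla_{K}\tr(K^{\top}R K \Sigma)=2RK\Sigma$ and $\nabla_{K}\tr(L^{\top} P L\,\Sigma)=2\,(\partial L/\partial K)^{\top} P L \Sigma$ with $\partial L_t/\partial K^i_t = -B^i_t$, I obtain
\begin{align*}
\nabla^i_{y,t}(K^i,K^{-i}) &= 2 R^i_t K^i_t \Sigma_y - 2 (B^i_t)^{\top} P^i_{y,t+1} L_t \Sigma_y .
\end{align*}
Finally I would split $L_t = (A_t - \sum_{j\neq i} B^j_t K^j_t) - B^i_t K^i_t$ and regroup the $K^i_t$ terms to arrive at the stated form
\begin{align*}
\nabla^i_{y,t}(K^i,K^{-i}) = 2\Big[\big(R^i_t+(B^i_t)^{\top} P^i_{y,t+1} B^i_t\big) K^i_t - (B^i_t)^{\top} P^i_{y,t+1}\big(A_t - \sum_{j\neq i} B^j_t K^j_t\big)\Big] \Sigma_y .
\end{align*}

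This argument is essentially a bookkeeping exercise; there is no substantive obstacle. The only subtle point is justifying that the gradient can be computed by treating $P^i_{y,t+1}$ as constant in $K^i_t$, which follows directly from the receding-horizon convention that freezes all controllers at times $s>t$. The proof would be clearer if the constant $c^i_t$ is explicitly identified (via $\tr(P^i_{y,s+1}\Sigma_{\tomega})$ terms propagated backward), but this is not needed for the gradient formula itself.
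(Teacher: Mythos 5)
Your proposal is correct and follows essentially the same route as the paper's proof: establish by backward induction that the receding-horizon cost is $\EE[y_t^\top P^i_{y,t} y_t]$ plus a $K^i_t$-independent constant, then differentiate with $P^i_{y,t+1}$ frozen and use $\EE[y_t y_t^\top]=\Sigma_y$. The only cosmetic difference is that the paper first expands the quadratic form in $K^i_t$ explicitly before differentiating, whereas you differentiate the one-step Riccati increment directly via trace identities and then regroup $L_t$; both yield the identical computation.
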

Proof can be found in Appendix \ref{sec:receding_horizon_gradient}. The policy gradient $\nabla^i_{\tx,t} (K^i,K^{-i}) := \delta \tJ^i_{\tx,t}(\bK^i,\bK^{-i})/\delta \bK^i_t$ has a similar expression. Comparing \eqref{eq:pol_grad} with the policy gradient in \cite{hambly2023policy} we note that due to the receding-horizon approach the matrix $\tP^i_{y,t}$ \eqref{eq:tP_iyt} is fixed and the policy gradient $\nabla^i_{y,t}$ is a function of $\Sigma_y$ which can explicitly be chosen in the receding-horizon approach obviating the need for estimating $\Sigma_y$. 
We use mini-batched zero-order techniques as in \cite{fazel2018global,malik2019derivative} to approximate the policy gradients with $\tnabla^i_{y,t} (K^i,K^{-i})$  and $\tnabla^i_{\tx,t} (\bK^i,\bK^{-i})$, which are the stochastic gradients of the costs $\tJ^i_{y,t}$ and $\tJ^i_{\tx,t}$ 
\begin{small}\begin{align} \label{eq:policy_grad_1}
	\tnabla^i_{y,t} (K^i,K^{-i}) & = \frac{m}{N_br^2} \sum_{j=1}^{N_b} \tJ^i_{y,t}(\hK^i(e_j,t),K^{-i}) e_j 
\end{align}\end{small}%
respectively, where $e_j \sim \mathbb{S}^{pN \times mN}(r)$ is the perturbation and $\hK^i(e,t):= (K^i_t+e,\ldots,K^i_{T-1})$ 
is the  perturbed controller set at time-step $t$. $N_b$ denotes the mini-batch size and $r$ the smoothing radius of the stochastic gradient. $\tnabla^i_{\tx,t} (\bK^i,\bK^{-i})$ is computed in a similar manner. In Appendix \ref{subsec:Anal_stoc_cost} 
we generalize to the sample-path cost oracle from the expected cost oracle of \eqref{eq:min_tJ_ixt}. 
%

Now we state the MRNPG algorithm. The algorithm is quite simple; starting at time $t = T-1$, each team/player $i \in [N]$ updates its control parameters $(K^i_t, \bK^i_t)$ using natural policy gradient and then moves one step backwards-in-time.
\begin{algorithm}[h!]
	\caption{MRNPG for GS-MFTG}
	\begin{algorithmic}[1] \label{alg:RL_GS_MFTG}
		\STATE {Initialize $K^i_t =0, \bK^i_t = 0$ for all $i \in [N], t \in \{0,\ldots,T-1\}$}
		\FOR {$t = T-1,\ldots,1,0,$}
		\FOR {$k = 1,\ldots,K$}
		\STATE {\bf Natural Policy Gradient} for $i \in [N]$
  \begin{small}\begin{align} \label{eq:GDU}
			\hspace{-0.5cm}\begin{pmatrix} K^i_t \\ \bK^i_t	\end{pmatrix} & \leftarrow 
            \begin{pmatrix} K^i_t \\ \bK^i_t	\end{pmatrix} - \eta^i_k  \begin{pmatrix} \tnabla^i_{y,t} (K^i,K^{-i}) \Sigma^{-1}_y \\ \tnabla^i_{\tx,t} (\bK^i,\bK^{-i}) \Sigma^{-1}_{\tx}\end{pmatrix}  
		\end{align}\end{small}
        \hspace{-0.7cm}
		\ENDFOR
		\ENDFOR
	\end{algorithmic}
\end{algorithm}
Notice that (in Algorithm \ref{alg:RL_GS_MFTG}) to compute the natural policy gradients, only the perturbed costs are required, as shown in \eqref{eq:policy_grad_1}, 
and estimating the covariance matrix is not required (as in \cite{hambly2023policy}). This is an independent learning algorithm as all teams independently compute their natural policy gradients and the learning rates $\eta^i_k$ are also independent.

\section{Achieving Nash Equilibrium} \label{sec:convergence}
In this section we analyze the MRNPG algorithm 
and show linear rate of convergence to the NE. To establish our main result, several key steps are needed. Some are standard, such as unbiasedness and smoothness properties of gradient estimators (Lemma \ref{lemma:smoothness_bias}). More unique to this work is the establishment of a Polyak-\L ojasiewicz (PL, also known as gradient dominance) inequality that relates the difference of costs with the update direction as Lemma \ref{lem:PL}. While such a result is expected if an objective function is strongly convex, in a non-convex setting it generally does not hold. That it does in the single-agent LQR setting is the central contribution of \cite{fazel2018global}. Here, we generalize it to the LQ game theoretic setting. In particular, we establish a PL condition of cost $\tJ^i_{y,t}$ which is much simpler than the prior work \cite{hambly2023policy}. We proceed then with the following technical lemma.
\begin{lemma}\label{lem:PL}(Polyak-\L ojasiewicz inequality) The cost function $\tJ^i_{y,t}(\tK^i,K^{-i*})$ satisfies the following growth condition with respect to gradient $\lVert \nabla^i_{y,t}(\tK^i,K^{-i*}) \rVert^2_F$
\begin{small}\begin{align*} 
    &\tJ^i_{y,t}(\tK^i,K^{-i*}) - \tJ^i_{y,t}(K^{i*},K^{-i*})  \leq \frac{\lVert \Sigma_{K^*} \rVert}{{\sigma}_R \sigma^2_y} \lVert \nabla^i_{y,t}(\tK^i,K^{-i*}) \rVert^2_F
\end{align*}\end{small}%
where $\tK^i = (K^i_t,K^{i*}_{t+1},\ldots,K^{i*}_{T-1})$ and $\sigma_y$ is the minimum eigenvalue of $\Sigma_y$.
\end{lemma}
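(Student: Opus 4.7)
The key observation is that once the future-in-time controllers $(K^i_s)_{s>t}$ are frozen at the NE values $K^{i*}_s$ and all opponents play $K^{-i*}$, the receding-horizon cost $\tJ^i_{y,t}(\tK^i,K^{-i*})$ collapses to a single-shot quadratic function of the one free matrix $K^i_t$. Consequently the multi-agent problem at time $t$ reduces to an LQR-style subproblem, and a PL inequality in the spirit of \cite{fazel2018global} becomes available without any zero-sum or potential structure. My plan is: (i) unroll the Riccati recursion to express the cost via a $K^i_t$-dependent matrix $P^i_{y,t}$; (ii) complete the square in $K^i_t$ using the Nash first-order condition so that $P^i_{y,t}-P^{i*}_{y,t}$ factors cleanly; and (iii) chain two PSD eigenvalue inequalities to convert the quadratic cost gap into a bound by the squared gradient norm.

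For step (i), I would write $\tJ^i_{y,t}(\tK^i,K^{-i*}) = \tr(\Sigma_y P^i_{y,t}) + (\text{constants independent of } K^i_t)$, where by the receding-horizon freeze $P^i_{y,t+1}$ is locked at $P^{i*}_{y,t+1}$, and with $L^i_t := A_t - \sum_{j\neq i} B^j_t K^{j*}_t$ and $\Phi^i_t := R^i_t + (B^i_t)^\top P^{i*}_{y,t+1} B^i_t$, the recursion \eqref{eq:tP_iyt} gives
\begin{align*}
P^i_{y,t}(K^i_t) = Q^i_t + (L^i_t)^\top P^{i*}_{y,t+1} L^i_t + (K^i_t)^\top \Phi^i_t K^i_t - C^\top K^i_t - (K^i_t)^\top C,
\end{align*}
with $C := (B^i_t)^\top P^{i*}_{y,t+1} L^i_t$. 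For step (ii), the Nash optimality identity $\Phi^i_t K^{i*}_t = C$, read off from \eqref{eq:Riccati_NE}, lets me complete the square in $K^i_t$ to obtain the clean identity $P^i_{y,t}(K^i_t) - P^{i*}_{y,t} = E_t^\top \Phi^i_t E_t$ with $E_t := K^i_t - K^{i*}_t$. Taking the trace against $\Sigma_y$ yields the cost gap $\tr(\Sigma_y E_t^\top \Phi^i_t E_t)$, and Lemma \ref{lemma:receding_horizon_gradient} gives $\nabla^i_{y,t}(\tK^i,K^{-i*}) = 2\, \Phi^i_t E_t \Sigma_y$.

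For step (iii), I would bound the cost gap above by $\lVert \Sigma_y\rVert\, \tr(E_t^\top \Phi^i_t E_t)$ (identifying $\Sigma_y$ with $\Sigma_{K^*}$, since in this subproblem the time-$t$ state covariance is exactly $\Sigma_y$), and then bound the squared gradient below by chaining two PSD orderings: first $\Sigma_y^2 \succeq \sigma_y^2 I$ to obtain $\lVert \nabla^i_{y,t}\rVert_F^2 = 4\,\tr(E_t^\top (\Phi^i_t)^2 E_t \Sigma_y^2) \geq 4\sigma_y^2\,\tr(E_t^\top (\Phi^i_t)^2 E_t)$, and then $(\Phi^i_t)^2 \succeq \sigma_R \Phi^i_t$, which holds since $\Phi^i_t \succeq R^i_t \succeq \sigma_R I$. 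Chaining these yields $\lVert \nabla^i_{y,t}\rVert_F^2 \geq 4\sigma_R \sigma_y^2\, \tr(E_t^\top \Phi^i_t E_t)$, which delivers the claimed PL bound. The step I expect to be the main obstacle is the cross-term cancellation in (ii): it requires carefully matching the block structure of $R^i_t$ and $B^i_t$ with the coupled Riccati characterization of the NE, but once the identity $R^i_t K^{i*}_t = (B^i_t)^\top P^{i*}_{y,t+1}(L^i_t - B^i_t K^{i*}_t)$ is isolated, the rest reduces to routine PSD manipulations. Notably, this argument avoids both covariance estimation and any system-noise condition of the type used in \cite{hambly2023policy}, since $\Sigma_y$ is a design parameter of the receding-horizon scheme chosen to be positive definite.
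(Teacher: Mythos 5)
Your proposal is correct and follows essentially the same route as the paper's proof: both exploit the fact that freezing the forward-in-time controllers at the NE makes $\tJ^i_{y,t}$ exactly quadratic in $K^i_t$, complete the square using the coupled-Riccati first-order condition $\big(R^i_t+(B^i_t)^\top P^{i*}_{y,t+1}B^i_t\big)K^{i*}_t=(B^i_t)^\top P^{i*}_{y,t+1}L^i_t$ (the paper does this at the level of the cost difference via the natural gradient $E^{i*}_t=\Phi^i_t(K^i_t-K^{i*}_t)$, you do it at the level of the value matrix, which is algebraically equivalent), and then chain the same $\lVert\Sigma_y\rVert/(\sigma_R\sigma_y^2)$ eigenvalue bounds. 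Your identification of $\Sigma_{K^*}$ with $\Sigma_y$ is also consistent with what the paper's appendix proof actually derives.
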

The proof can be found in Appendix \ref{sec:PL}. The cost $\tJ^{i}_{\tx,t}$ also satisfies a similar PL condition. Next we analyze the MRNPG algorithm and show the linear rate of convergence to the NE. Let us first introduce the smoothed gradients \cite{fazel2018global,malik2019derivative,hu2023toward} of costs $\tJ^i_{y,t}$ and $\tJ^i_{\tx,t}$, respectively as, $\ddnabla^i_{y,t} (K^i,K^{-i})$ and $\ddnabla^i_{\tx,t} (\bK^i,\bK^{-i})$ which are the expectations of the stochastic gradients \eqref{eq:policy_grad_1}. The smoothed gradient $\ddnabla^i_{y,t} (K^i,K^{-i})$ is given by
\begin{small}\begin{align*}
	\ddnabla^i_{y,t} (K^i,K^{-i}) & = \frac{m}{r} \EE_e \big[\tJ^i_{y,t}(\hK^i(e,t),K^{-i}_t) e \big],
\end{align*}\end{small}%
where $e \sim \Sb^{m-1}(r)$. $\ddnabla^i_{\tx,t} (\bK^i,\bK^{-i})$ has a similar expression. Now we state some results from the literature which shows that the stochastic gradient is an unbiased estimator of the smoothed gradient and quantifies the bias between the smoothed, stochastic and policy gradients. 
\begin{lemma}[\cite{malik2019derivative}]\label{lemma:smoothness_bias}
Consider the smoothed gradient $\ddnabla^i_{y,t} (K^i,K^{-i})$ in \eqref{eq:policy_grad_1} for the per-team consensus error $y_t^j$ for team $j$ at time $t$, as well as the stochastic gradient $\tnabla^i_{y,t} (K^i,K^{-i})$ of the receding horizon cost in Lemma \ref{lemma:receding_horizon_gradient}, then 
%
	\begin{small}\begin{align*}
		\EE[\tnabla^i_{y,t} (K^i,K^{-i})] & = \ddnabla^i_{y,t} (K^i,K^{-i}), \\
        \lVert \ddnabla^i_{y,t} (K^i,K^{-i}) - \nabla^i_{y,t} (K^i,K^{-i}) \rVert_2 & = \Os(r), \\
        \lVert \tnabla^i_{y,t} (K^i,K^{-i}) - \ddnabla^i_{y,t} (K^i,K^{-i}) \rVert_2 & = \Os \bigg( \frac{ \sqrt{\log \delta^{-1}}}{N_br}  \bigg)
	\end{align*}\end{small}%
where the last equality follows with probability $1-\delta$,  $N_b$ is a mini-batch size, and $r$ is the smoothing parameter. 
\end{lemma}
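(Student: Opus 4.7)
\textbf{Proof plan for Lemma \ref{lemma:smoothness_bias}.} The three claims correspond to three standard zeroth-order gradient estimation facts, and the plan is to invoke each of them in sequence while verifying that the hypotheses hold in our receding-horizon LQ setting. My first task would be to argue that for each fixed $t$ and fixed forward-in-time controllers $(K^i_s)_{s>t}$, the function $K^i_t \mapsto \tJ^i_{y,t}(\hK^i(\cdot,t),K^{-i})$ is a quadratic polynomial in $K^i_t$ (this follows directly from the quadratic form derived in Lemma \ref{lemma:receding_horizon_gradient}, since $P^i_{y,t+1}$ does not depend on $K^i_t$). Quadraticity gives, for free, both a uniform Lipschitz constant for $\tJ^i_{y,t}$ on any compact neighborhood of the current iterate, and a uniform operator-norm bound on its Hessian; these two quantities will be the only problem-dependent constants feeding into the three bounds.

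For the unbiasedness statement, the plan is to unfold the definition of $\tnabla^i_{y,t}$ in \eqref{eq:policy_grad_1}: each summand is an i.i.d.\ draw of $\tfrac{m}{r^2}\tJ^i_{y,t}(\hK^i(e,t),K^{-i})e$ with $e\sim \Sb^{mN\times pN}(r)$, so linearity of expectation yields $\EE[\tnabla^i_{y,t}]=\tfrac{m}{r}\EE_e[\tJ^i_{y,t}(\hK^i(e,t),K^{-i})(e/r)]=\ddnabla^i_{y,t}$, matching the definition of the smoothed gradient given just before the lemma. This piece is essentially bookkeeping.

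The bias bound $\lVert \ddnabla^i_{y,t}-\nabla^i_{y,t}\rVert_2 = \Os(r)$ would follow from the standard Flaxman--Kalai--McMahan identity, which states that the sphere-smoothed gradient equals the true gradient of the ball-averaged function $\tJ^i_{y,t,r}(K^i_t):=\EE_{u\sim \mathbb{B}(r)}[\tJ^i_{y,t}(\hK^i(u,t),K^{-i})]$. I would then Taylor expand $\tJ^i_{y,t}$ around $K^i_t$ to second order in the perturbation $u$; because $\tJ^i_{y,t}$ is quadratic in its first argument, the expansion is exact, the first-order term vanishes after averaging over the ball (odd symmetry), and what remains is a gradient perturbation of size $\Os(r\cdot \lVert\nabla^2 \tJ^i_{y,t}\rVert)=\Os(r)$. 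The relevant Hessian bound was already obtained in the first step.

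The high-probability concentration bound would be proved by a vector Hoeffding (or Azuma) inequality applied to the $N_b$ i.i.d.\ summands in \eqref{eq:policy_grad_1}. Each summand is bounded in norm by $\tfrac{m}{r^2}\sup_e\lvert\tJ^i_{y,t}(\hK^i(e,t),K^{-i})\rvert\cdot r = \Os(1/r)$, again using the quadratic structure to bound the cost uniformly over the smoothing ball. Averaging $N_b$ such vectors and applying a standard dimension-free vector concentration inequality yields deviation of order $\tfrac{1}{N_b r}\sqrt{\log\delta^{-1}}$ with probability $1-\delta$, matching the stated rate. The main obstacle I anticipate is not any single bound in isolation but rather tracking that the problem-dependent constants (Hessian norm, cost magnitude on the perturbation ball) remain uniformly controlled along the trajectory of MRPG iterates; this is why I would prove the three bounds under an a priori assumption that $K^i_t$ stays in a bounded set, and separately propagate this invariant in the main convergence theorem.
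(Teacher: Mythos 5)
The paper does not actually prove this lemma: it is imported verbatim from \cite{malik2019derivative} (``Now we state some results from the literature''), so there is no in-paper argument to compare yours against route-for-route. Your plan is the standard one for such a result, and two of its three pieces are sound. The unbiasedness claim is indeed bookkeeping (modulo the paper's own notational mismatch between the $m/r^2$ normalization in \eqref{eq:policy_grad_1} and the $m/r$ normalization in the displayed definition of $\ddnabla^i_{y,t}$). The $\Os(r)$ bias bound is also fine; in fact, since $\tJ^i_{y,t}$ is exactly quadratic in $K^i_t$ (by Lemma \ref{lemma:receding_horizon_gradient}, $P^i_{y,t+1}$ does not depend on $K^i_t$), the odd moments of the spherical perturbation vanish and the smoothing bias is exactly zero, so you get $\Os(r)$ with room to spare. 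Your closing caveat about keeping the Hessian and cost bounds uniform along the iterate trajectory is a legitimate point that the paper also glosses over.

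The genuine gap is in the concentration step. Each summand of \eqref{eq:policy_grad_1} is bounded in norm by $\tfrac{m}{r^2}\sup_{\lVert e\rVert = r}\lvert\tJ^i_{y,t}(\hK^i(e,t),K^{-i})\rvert\cdot r = \Os(1/r)$, exactly as you say; but a dimension-free vector Hoeffding/Azuma bound applied to the \emph{average} of $N_b$ such i.i.d.\ terms yields a deviation of order $r^{-1}\sqrt{\log(\delta^{-1})/N_b}$, i.e.\ a $1/\sqrt{N_b}$ rate, not the $\Os\big(\sqrt{\log\delta^{-1}}/(N_b r)\big)$ you claim to have matched. The two are not interchangeable downstream: with $r=\Theta(\epsilon)$ your rate forces $N_b=\Theta(\log(\delta^{-1})/\epsilon^4)$ rather than the $\Theta(\log(\delta^{-1})/\epsilon^2)$ the paper uses after this lemma. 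The standard fix is to center the estimator before concentrating: since $\EE[e_j]=0$, replacing each summand by $\tfrac{m}{r^2}\big(\tJ^i_{y,t}(\hK^i(e_j,t),K^{-i})-\tJ^i_{y,t}(K^i,K^{-i})\big)e_j$ leaves the mean unchanged while the Lipschitz bound $\lvert\tJ^i_{y,t}(\hK^i(e_j,t),K^{-i})-\tJ^i_{y,t}(K^i,K^{-i})\rvert=\Os(r)$ shrinks each term to $\Os(1)$, giving a deviation of $\Os\big(\sqrt{\log(\delta^{-1})/N_b}\big)$ --- which is what the paper's subsequent choice of $N_b$ actually requires (and which is $\Os(\epsilon)$ under that choice, consistent with the lemma's conclusion even though it is not literally the displayed rate). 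As written, your argument either proves a weaker bound than you assert or needs this centering step made explicit.
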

Similar bounds can also be obtained for the gradients pertaining to process $\tx$, $\nabla^i_{\tx,t} (K^i,K^{-i})$, $\tnabla^i_{\tx,t} (\bK^i,\bK^{-i})$ and $\ddnabla^i_{\tx,t} (\bK^i,\bK^{-i})$. These bounds show that the approximation error in the stochastic gradient $\delta^i_t := \max (\lVert \tnabla^i_{y,t} (K^i,K^{-i}) - \nabla^i_{y,t} (K^i,K^{-i}) \rVert ,$ $ \lVert \tnabla^i_{\tx,t} (\bK^i,\bK^{-i}) - \nabla^i_{\tx,t} (K^i,K^{-i}) \rVert) = \Os(\epsilon)$ with probability $1-\delta$ if the smoothing radius, $r = \Os(\epsilon)$ and mini-batch size, $N_b = \Theta(\log(1/\delta)/\epsilon^{2})$. Now we introduce the diagonal dominance condition. 
\begin{assumption}[Diagonal Dominance] \label{asm:diag_dom}
    The diagonal dominance condition entails that 
    \begin{small}\begin{align*}
        \sigma(R^i_t) \geq \sqrt{2m(N-1)} \gamma^2_{B,t} \gamma^i_{P,t+1}
    \end{align*}\end{small}%
where $\gamma_{B,t} := \max_{i \in [N]} \lVert B^i_t \rVert$ and $\gamma^i_{P,t} := \lVert P^{i*}_{t} \rVert$ where matrices $P^{i*}_{t}$ are defined in \eqref{eq:Riccati_NE}. 
\end{assumption}
This condition is similar to conditions in the static games literature \cite{frihauf2011nash} and ensures that the matrices $\Phi_t$ and $\bPhi_t$ (Theorem \ref{thm:CLNE}) are diagonally dominant and hence invertible. In Section \ref{sec:cost_augment} we propose a cost augmentation mechanism which ensures the diagonal dominance condition. The System Noise (SN) condition (Assumption 4
) in \cite{hambly2023policy} is hard to verify as the LHS increases with increased system noise but the RHS may not decrease due to the dependence of the cost on system noise. In contrast, Assumption \ref{asm:diag_dom} is dependent on model parameters and can be verified by direct computation. Moreover Assumption \ref{asm:diag_dom} is independent of the time-horizon $T$ whereas the SN condition gets harder to satisfy with increasing $T$. In the following lemma we establish that for any $T \in \mathbb{N}$, Assumption \ref{asm:diag_dom} generalizes the SN assumption.
\begin{lemma} \label{lem:generality_of_DD}
    For any $T \in \mathbb{N}$, Assumption \ref{asm:diag_dom} generalizes the System Noise (SN) assumption in (Assumption 4 in
 \cite{hambly2023policy}).
\end{lemma}

Now under Assumption \ref{asm:diag_dom} we show the linear rate of convergence of receding-horizon update \eqref{eq:GDU} to the local Nash controllers \eqref{eq:local_Nash_cont}.
\begin{theorem}\label{thm:inner_loop_conv}
    Under Assumption \ref{asm:diag_dom} for each $t \in \{T-1,\ldots,0\}$, if conditions in Theorem \ref{thm:CLNE} are satisfied, $k = \Theta\big(\log\big(\frac{1}{\epsilon}\big)\big)$, $\eta^i_k$ is upper bounded by model parameters, the approximation error in the stochastic gradient is $\delta_t = \Os(\epsilon)$, then the optimality gap $\lVert K^{i,k}_t - \tK^{i*}_t \rVert = \Os(\epsilon)$ and $\lVert \bK^{i,k}_t - \tilde\bK^{i*}_t \rVert = \Os(\epsilon)$ for all $i \in [N]$. 
\end{theorem}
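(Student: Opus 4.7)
The plan is to prove the result by backwards induction on $t \in \{T-1, \ldots, 0\}$, exploiting the fact that the receding-horizon formulation makes the per-time-step objective quadratic in the variable being optimized. The base case $t = T-1$ is clean because $K^{i*}_{T} = 0$ and there are no forward-in-time iterates to handle. For the inductive step, assume that for all $s > t$ the MRPG iterates $(K^{i,\infty}_s, \bK^{i,\infty}_s)_{i \in [N]}$ are already $\Os(\epsilon)$-close to $(K^{i*}_s, \bK^{i*}_s)$. Freeze these controllers and analyze \eqref{eq:min_tJ_iyt} and \eqref{eq:min_tJ_ixt} as two decoupled static $N$-player games in $(K^i_t)_i$ and $(\bK^i_t)_i$, with the quadratic structure from Lemma \ref{lemma:receding_horizon_gradient}. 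I treat the consensus-error iteration on $K^i_t$ in detail; the mean-field iteration on $\bK^i_t$ is handled identically after swapping $(A_t, B^i_t, Q^i_t, R^i_t)$ for their barred counterparts.

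Next I combine the PL inequality of Lemma \ref{lem:PL} with a smoothness estimate for $\tJ^i_{y,t}$ to obtain per-player linear contraction under the exact natural gradient step. Because the per-player cost with $K^{-i}$ fixed has Hessian $R^i_t + (B^i_t)^\top P^{i*}_{y,t+1} B^i_t \succ 0$, the natural gradient $E^{i*}_t = \nabla^i_{y,t}(\tK^i, K^{-i*}) \Sigma_y^{-1}$ appearing in Lemma \ref{lem:PL} is a preconditioned gradient, and choosing $\eta^i_k$ as $(\sigma(R^i_t + (B^i_t)^\top P^{i*}_{y,t+1} B^i_t))^{-1}$ (or any suitable upper bound) realizes a best-response step if all other players were frozen at $K^{j*}_t$. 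A one-step descent argument then yields $\lVert K^{i,k+1}_t - K^{i*}_t \rVert \le \rho_0 \lVert K^{i,k}_t - K^{i*}_t \rVert$ with $\rho_0 < 1$ when the other players are at equilibrium.

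The core difficulty is handling the simultaneous updates across players. Because every player moves in \eqref{eq:GDU} on the same round, player $i$'s descent is perturbed by the deviations $\{K^{j,k}_t - K^{j*}_t\}_{j \neq i}$ with coupling strength dictated by the off-diagonal blocks $(B^i_t)^\top P^{i*}_{y,t+1} B^j_t$ of the matrix $\Phi_t$ from Theorem \ref{thm:CLNE}. The plan is to express $K^{i,k+1}_t - K^{i*}_t$ as a linear combination of $\{K^{j,k}_t - K^{j*}_t\}_j$ plus an $\Os(\delta^i_t)$ stochastic-gradient error, and then invoke Assumption \ref{asm:diag_dom} to bound the spectral radius of the resulting transition operator by some $\rho < 1$ via a Gershgorin/diagonal-dominance argument: the condition $\sigma(R^i_t) \geq \sqrt{2m(N-1)}\,\gamma_{B,t}^2 \gamma^i_{P,t+1}$ ensures the diagonal of $\Phi_t$ strictly dominates the row sums of its off-diagonal blocks, so the simultaneous NPG iteration on the stacked vector $(K^{1,k}_t, \ldots, K^{N,k}_t)$ is a contraction. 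This is the main obstacle, since the per-player PL/smoothness bounds alone guarantee descent only against a fixed opponent, and one must quantitatively trade off game-theoretic coupling against the regularization $R^i_t$ to get a joint contraction. The same argument with $\bPhi_t$ closes the mean-field iteration.

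Finally I unroll the contraction together with the noise. Lemma \ref{lemma:smoothness_bias} gives $\lVert \tnabla^i_{y,t} - \nabla^i_{y,t} \rVert = \Os(\epsilon)$ with probability $1-\delta$ provided $r = \Os(\epsilon)$ and $N_b = \Theta(\log(1/\delta)/\epsilon^2)$, and an analogous bound holds for the mean-field gradient. Unrolling $k$ rounds then yields
\begin{equation*}
  \lVert K^{i,k}_t - K^{i*}_t \rVert \le \rho^k \lVert K^{i,0}_t - K^{i*}_t \rVert + \frac{C \delta^i_t}{1-\rho},
\end{equation*}
so $k = \Theta(\log(1/\epsilon))$ inner iterations suffice. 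To close the induction I must also propagate the $\Os(\epsilon)$ perturbation in the forward-in-time controllers $K^{j,\infty}_s$ through $P^{i}_{y,t+1}$; a standard Lipschitz-in-$K$ bound on the Riccati-type map \eqref{eq:tP_iyt}, together with a union bound over $t$ and $i$, gives the final $\Os(\epsilon)$ optimality gap stated in the theorem. For the claim about $T=1$, the forward-in-time controllers disappear and $P^{i*}_{y,T} = Q^i_T$ is just the terminal weight, so Assumption \ref{asm:diag_dom} reduces to a verifiable inequality on model parameters that implies the system-noise condition of \cite{hambly2023policy} by direct comparison of the two Gershgorin bounds.
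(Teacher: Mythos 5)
Your overall architecture matches the paper's: exploit the receding-horizon structure to make the per-time-step problem quadratic, get per-player descent from the PL inequality (Lemma \ref{lem:PL}), control the cross-player coupling via Assumption \ref{asm:diag_dom}, unroll a contraction plus an $\Os(\delta_t)$ noise floor, and propagate forward-in-time errors through the Riccati-type recursion \eqref{eq:tP_iyt}. The paper actually splits the last step off: the proof of Theorem \ref{thm:inner_loop_conv} only establishes convergence of $(K^{i,k}_t)_i$ to the \emph{local} Nash equilibrium $(\tK^{i*}_t)_i$ of \eqref{eq:min_tJ_iyt} with the (possibly perturbed) forward-in-time controllers frozen, and the comparison $\lVert \tK^{i*}_t - K^{i*}_t\rVert$ via perturbation bounds on $\tilde\Phi_{t+1}$ and $P^i_{t+1}-P^{i*}_{t+1}$ is deferred to the proof of Theorem \ref{thm:main_res}; your induction merges the two, which is fine structurally.

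Where you genuinely diverge is the mechanism for the joint contraction. You propose to view the simultaneous exact-NPG update as an affine map on the stacked controllers and bound the spectral radius of $I - \diag(\eta^i)\Phi_t$ by Gershgorin, using diagonal dominance of $\Phi_t$. The paper instead uses the scalar Lyapunov function $\sum_{i=1}^N\big(\tJ^i_{y,t}(K^{i,k},\tK^{-i*})-\tJ^i_{y,t}(\tK^{i*},\tK^{-i*})\big)$, combining the PL inequality with a reverse inequality (a strong-convexity-type lower bound, Eq.\ \eqref{eq:reverse_PL}) to convert between cost gaps and controller distances, and absorbing the coupling term $\lVert \Delta E^{i*}_t\rVert_F \le \gamma_B^2\gamma_{P,t+1}\sum_{j\neq i}\lVert K^{j}_t - \tK^{j*}_t\rVert_F$ into the descent using $\sigma_R^2 \ge 2m(N-1)\gamma_B^4\gamma_{P,t+1}^2$. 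Your route is cleaner conceptually, but it carries a concrete quantitative gap: a Gershgorin/row-sum argument on $\Phi_t$ naturally requires something like $\sigma(R^i_t) \gtrsim (N-1)\gamma_{B,t}^2\gamma^i_{P,t+1}$, whereas Assumption \ref{asm:diag_dom} only provides the factor $\sqrt{2m(N-1)}$, which is \emph{smaller} than $N-1$ once $N-1 > 2m$. The $\sqrt{2m(N-1)}$ constant arises specifically from the Frobenius-norm and Cauchy--Schwarz manipulations in the Lyapunov argument, so you cannot simply assert that the stated assumption makes your stacked iteration a contraction; you would either have to redo the spectral-radius bound in a weighted Frobenius norm that reproduces that constant, or strengthen the assumption. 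Everything else in your sketch (the unbiasedness/bias bounds from Lemma \ref{lemma:smoothness_bias}, the $\rho^k + C\delta/(1-\rho)$ unrolling, the Lipschitz propagation through \eqref{eq:tP_iyt} with a smallness condition guaranteeing invertibility of the perturbed $\tilde\Phi_{t+1}$) is consistent with what the paper does.
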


%
Closed-form expressions of the bounds can be found in the proof (Section \ref{sec:inner_loop_conv}). These results are more general than those in \cite{li2021distributed,fazel2018global,hu2023toward,malik2019derivative} due to the presence of competing players learning in an independent fashion. Due to the receding-horizon approach, the cost functions \eqref{eq:min_tJ_iyt}-\eqref{eq:min_tJ_ixt} to be minimized have a quadratic structure, which satisfies a PL condition (Lemma \ref{lem:PL}) and allows for the linear rate of convergence. Moreover, the receding-horizon approach bounds the difference between the \emph{target} controller (one which solves \eqref{eq:min_tJ_ixt}) and the NE $K^{i*}_t$, by controlling the error in the $\tP^i_{y,t}$ matrix \eqref{eq:tP_iyt}. Finally the receding-horizon approach obviates the system noise condition in \cite{hambly2023policy} by explicitly designing the covariance matrix $\Sigma_y$ to be positive definite. All these factors combine to ensure that if $K = \Os(\log(1/\epsilon))$ the error in control parameters are $\Os(\epsilon)$. Using this result now we state the main result of the paper, presenting the finite sample convergence bounds of Algorithm \ref{alg:RL_GS_MFTG}.
\begin{theorem} \label{thm:main_res}
    If all conditions in Theorem \ref{thm:inner_loop_conv} are satisfied, then $e^K_t := \max_{j \in [N]} \lVert K^j_t - K^{j*}_t \rVert = \Os(\epsilon)$ for sufficiently small $\epsilon > 0$ and $t \in \{T-1,\ldots,0\}$. 
\end{theorem}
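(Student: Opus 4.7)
The plan is a backward induction on $t$ that mirrors MRPG's receding-horizon schedule. Theorem~\ref{thm:inner_loop_conv} guarantees that at any fixed time $t$ the inner NPG loop converges linearly to the NE of the \emph{sub-game defined by the currently frozen future controllers}. Inside Algorithm~\ref{alg:RL_GS_MFTG}, however, those frozen future controllers are themselves previously-learned approximations rather than the true NE, so the proof must track how their $\Os(\epsilon)$ perturbations propagate into the per-time-step target before invoking Theorem~\ref{thm:inner_loop_conv}.

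For the base case $t = T-1$, no future controllers are involved and the target of the inner loop is exactly $(K^{j*}_{T-1}, \bK^{j*}_{T-1})$, so Theorem~\ref{thm:inner_loop_conv} immediately yields $e^K_{T-1} = \Os(\epsilon)$. For the inductive step, suppose the analogous bound holds for all $s \in \{t+1,\ldots,T-1\}$. Let $\widehat{K}^{j*}_t$ denote the target NE at time $t$ generated by the coupled Riccati recursion~\eqref{eq:Riccati_NE} with the learned $(\tK^{j}_s)_{s>t}$ substituted for $(K^{j*}_s)_{s>t}$. A triangle inequality splits the error,
\[
\lVert \tK^{j}_t - K^{j*}_t \rVert \;\leq\; \lVert \tK^{j}_t - \widehat{K}^{j*}_t \rVert \;+\; \lVert \widehat{K}^{j*}_t - K^{j*}_t \rVert,
\]
and Theorem~\ref{thm:inner_loop_conv}, applied to the perturbed sub-game at time $t$, handles the first term with a bound of $\Os(\epsilon)$. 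The crux is to show that the second term is also $\Os(\epsilon)$.

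The main obstacle is this sensitivity/Lipschitz argument. I would first propagate the $\Os(\epsilon)$ error in $(\tK^{j}_s)_{s>t}$ through the backward Riccati recursion~\eqref{eq:Riccati_NE} to obtain $\lVert \widehat{P}^{j*}_{t+1} - P^{j*}_{t+1} \rVert = \Os(\epsilon)$, using uniform boundedness of the Riccati iterates in a compact neighborhood of the NE together with a discrete-Gr\"onwall-type estimate over the remaining $T-t-1$ steps. Then, using the closed-form $K^{i*}_t = (R^i_t + (B^i_t)^\top P^{i*}_{t+1} B^i_t)^{-1} (B^i_t)^\top P^{i*}_{t+1} L^i_t$ from Theorem~\ref{thm:CLNE} and its perturbed counterpart built from $\widehat{P}^{j*}_{t+1}$, standard matrix-inverse perturbation bounds deliver $\lVert \widehat{K}^{j*}_t - K^{j*}_t \rVert = \Os(\epsilon)$; the inverse stays uniformly bounded because the diagonal dominance condition (Assumption~\ref{asm:diag_dom}) controls $\lVert \Phi_t^{-1} \rVert$ exactly as it does in Theorem~\ref{thm:CLNE}. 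The same chain of bounds applies verbatim to the mean-field quantities $(\bK, \bP, \bPhi)$.

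Taking the maximum over $j \in [N]$ and combining the two terms closes the induction, giving $e^K_t = \Os(\epsilon)$ for every $t \in \{T-1,\ldots,0\}$. Because the horizon $T$ is finite, the constants hidden in the $\Os$ notation accumulate only polynomially in $T$ and the model parameters, matching the statement of the theorem.
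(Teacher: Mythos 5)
Your proposal is correct and follows essentially the same route as the paper: a backward recursion that splits $\lVert K^j_t - K^{j*}_t\rVert$ via the intermediate ``local-NE'' target, invokes Theorem~\ref{thm:inner_loop_conv} for the first term, and controls the second by propagating the value-matrix perturbation $\lVert P^j_{t+1}-P^{j*}_{t+1}\rVert$ through the coupled Riccati recursion together with a perturbation bound on $\Phi_{t+1}^{-1}$. The only cosmetic difference is that the paper organizes the propagation as an explicit coupled recursion $e^K_t \leq \bc_1 e^P_{t+1} + \Delta_t$, $e^P_t \leq \bc_7 e^P_{t+1} + \bc_8\Delta_t$ (and secures invertibility of the perturbed $\Phi$ by a Neumann-series argument once $e^P_{t+1}$ is small), which is exactly your discrete-Gr\"onwall step made concrete.
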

The error $e^K_t$ is due to a combination of the approximation error in the stochastic gradient $\delta_t$ at time $t$ and the accumulated error in the forward-in-time controllers $e^K_{t+1}, \ldots, e^K_{T-1}$. We first characterize these two quantities and then show that if the approximation error $\delta_t = \Os(\epsilon)$ and the number of inner-loop iterations $K = \Os(\log(1/\epsilon))$, then the accumulated error at any time $t \in \{0,\ldots, T-1\}$ never exceeds $\epsilon$ scaled by a constant multiplier. One important point to note is that $\epsilon=\Os(1/N)$ to keep the approximation error $\delta_t$ small, which avoids instability in the algorithm.

\section{Numerical Analysis} \label{sec:numer_anly}
\begin{figure}[t!]
    \centering
    \includegraphics[width=\linewidth]{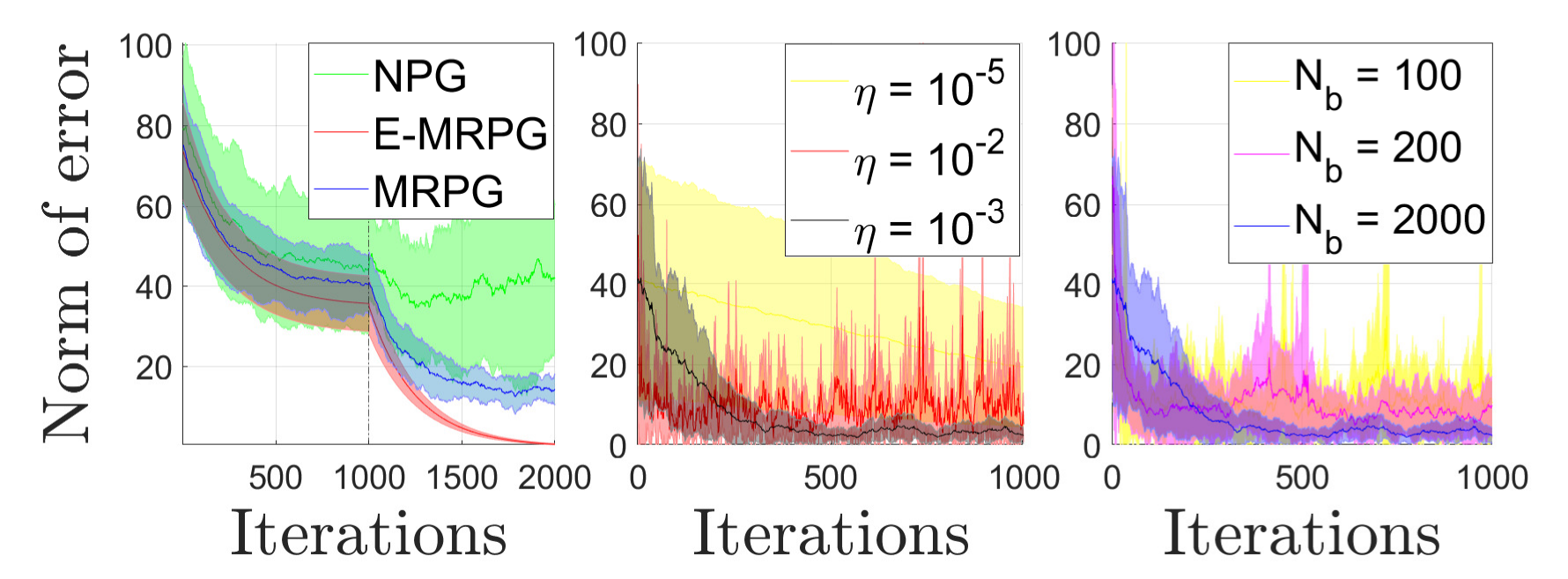}
    \caption{Numerical Analysis of MRNPG algorithm. \textbf{(left)} comparison with \emph{Vanilla} Natural Policy Gradient (NPG) and Exact-MRNPG, \textbf{(center)} performance \emph{with respect to} different values of learning rate $\eta^i_k$, and \textbf{(right)} mini-batch size $N_b$.}
    \label{fig:numer_anly}
\end{figure}
We first simulate the MRNPG algorithm for time horizon $T=2$, number of teams $N=2$, number of agents per team $M=1000$ and  agents having scalar dynamics. We note that in contrast to Zero-Sum Game studies in the literature \cite{jin2021v}, we consider a General-Sum setting where $N=2$, which essentially carries the same kind of difficulty as $N>2$ albeit with a different sample complexity. For each time-step the number of inner-loop iterations $K=1000$, the mini-batch size $N_b=5000$ and learning rate $\eta^i_k = 0.001$. In Figure \ref{fig:numer_anly}(left) we first compare the error convergence in the MRNPG algorithm with \emph{Vanilla} Natural Policy Gradient (NPG) which does not utilize the receding-horizon approach and the Exact-MRNPG which uses exact natural policy gradients for update. As expected, the Exact-MRNPG converges very well for each time-step and MRNPG also converges albeit with variance due to noise in the policy gradients. The NPG algorithm, on the other hand, is seen to be slow at convergence with high variance. The superior convergence is due to the fact that MRNPG decomposes the problem and solves each problem in retrograde-time. In Figure \ref{fig:numer_anly}~(center) we evaluate MRNPG performance for $N=1, T=1$ and different values of learning rate $\eta^i_k$. $\eta^i_l = 0.001$ is shown to provide the best convergence properties with fast decrease in error (unlike slow convergence with $\eta^i_k = 0.001$) and reliable performance (unlike the high variance with $\eta^i_k = 0.01$). Figure \ref{fig:numer_anly}~(right) shows that the variance of the natural policy gradient update decreases with increasing mini-batch size $N_b$ but at the cost of higher sample complexity.
\begin{figure}[t!]
    \centering
    \includegraphics[width=\linewidth]{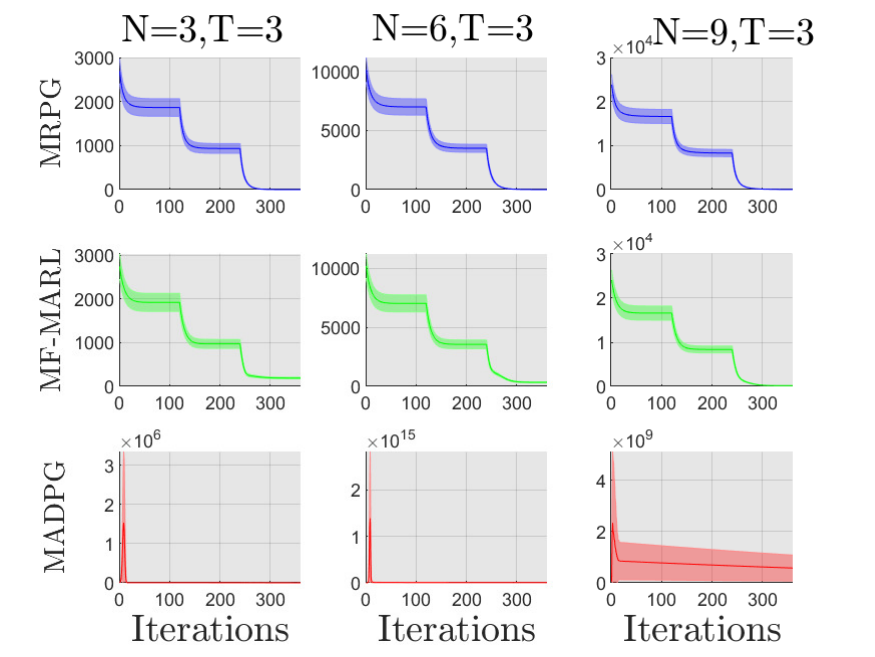}
    \caption{Error convergence for exact versions of MRNPG, MF-MARL and MADPG for $N=\{3,6,9\}$, $T=3$ and $m=p=2$.}
    \label{fig:comp_MRNPG_MADPG}
\end{figure}

{In Figure \ref{fig:comp_MRNPG_MADPG} we  provide a comparison of MRNPG with MADPG \cite{lowe2017multi} and MF-MARL \cite{yang2018mean} in CC setting. Notice that since CC setting is quite novel, direct comparison is only possible with a limited number of works. We use exact versions of all algorithms allowing faster convergence (compare the number of iterations in Figures \ref{fig:numer_anly} and \ref{fig:comp_MRNPG_MADPG}), and extension to data-driven stochastic versions appear to be straightforward. The MF-MARL algorithm is also a receding-horizon algorithm (albeit for finite state and action spaces) but assume a large number of competing players (notice that MRNPG can deal with any number of competing players). The MADPG algorithm \cite{lowe2017multi} is a variant of the actor-critic method where each agent has an actor-critic and the critic is learned in a centralized manner. Figure \ref{fig:comp_MRNPG_MADPG} compares the algorithms for increasing number of players/teams $N = \{3,6,9\}, T=3, m=p=2$. For smaller $N$ ($N=3,6$) MF-MARL shows an offset in error convergence compared to MRNPG. This is due to the fact that MF-MARL computes a mean-field equilibrium which will be $\epsilon$-Nash with $\epsilon \rightarrow 0$ as $N$ gets large. On the other hand, increasing $N$ is shown to cause an overshoot in error convergence of MADPG compared to the steady decrease in error for MRNPG. This is due to the fact that the error in earlier time-steps is significantly affected by error in the later time-steps (HJI). Due to the receding-horizon nature of MRNPG, it learns backwards-in-time, thus allowing MRNPG to control the error in later time-steps, and consequently avoiding the overshoot displayed by MADPG. Hence MRNPG shows good performance for a wide range of $N$.
} 

\section{Conclusion}
This paper has addressed the problem of achieving a Nash equilibrium in a General-Sum Mean-Field Type Game (GS-MFTG) within a Cooperative-Competitive (CC) multi-agent setting. The paper has developed the Multi-player Receding-horizon Natural Policy Gradient (MRNPG) algorithm. We have then shown linear convergence to the Nash equilibrium (NE) of the MFTG, relaxing the need of system noise conditions and covariance matrix estimation, with a diagonal dominance condition. The theoretical results have been corroborated through numerical analysis  and a comparison with benchmark algorithms (MADPG and MF-MARL), showing good convergence of MRNPG for a large range of $N$. 
The main limitation of the present work is that in order to have a full analysis, we worked within a linear-quadratic structure. In future work, we plan to study more complex CC settings, based on the multi-player receding-horizon algorithm we have developed here.

\appendix
\begin{center}
    \Large \bf Appendix
\end{center}

\renewcommand{\thesubsection}{\Alph{subsection}}

\section{Proof of Lemma \ref{lem:generality_of_DD}}\textbf{[Generality of the Diagonal Dominance Condition]}
Let us start by introducing some of the variables used in the SN condition \cite{hambly2023policy}. 
\begin{align*}
    \bar{\rho} & := \rho^* + N \gamma_{B,0} \sqrt{\frac{T \psi}{\underline{\sigma}_X \sigma_R}} + \frac{1}{20 T^2} \\
    \rho^* & := \max \bigg\{ \max_{0 \leq t \leq T-1} \bigg\lVert A_t + \sum_{t=1}^N B^i_t  K^{i*}_t \bigg\rVert, 1 + \delta \bigg\}, \delta > 0 \\
    \psi & := \max_{i \in [N]} \big\{ \tJ^i_{y,0}(K^{i,(0)},K^{-i*}) - \tJ^i_{y,0}(K^{i*},K^{-i*}) \big\} \\
    \underline{\sigma}_X & := \min \Big\{ \sigma_{\min} \big(\EE[x_0 x^\top_0]\big), \sigma_{\min} \big(\EE[\omega_t \omega^\top_t ]\big)\Big\} 
\end{align*}
Furthermore $\Sigma_{K^*} := \sum_{t=0}^{T-1} \Sigma^{K^*}_t, \hspace{0.2cm} \Sigma^{K^*}_t = \EE \big[x^{K^*}_t \big(x^{K^*}_t \big)^\top \big]$. The SN condition can be written down as,
\begin{align*}
    \frac{\big(\underline{\sigma}_X \big)^5}{ \big\lVert \Sigma_{K^*} \big\rVert} > 20 m (N-1)^2 T^2\frac{\gamma^4_{B,0} (\max_i (\tJ^i_{y,0}(K^{i*},K^{-i*}) + \psi)^4}{\sigma_Q^2 \sigma_R^2} \bigg( \frac{\bar{\rho}^{2T} - 1}{\bar{\rho}^{2} - 1} \bigg)^2
\end{align*}
The first thing to notice is that the term in the SN condition $\big(\frac{\bar{\rho}^{2T} - 1}{\bar{\rho}^{2} - 1}\big)^2 > 1$ because $\bar{\rho} > 1$ (by definition) hence can be discarded. Furthermore $\psi > 0$ and using Courant-Fischer theorem we can deduce $\lVert  \Sigma_{K^*} \rVert \geq \underline{\sigma}_X$, hence the SN condition can be simplified
\begin{align*}
    \sigma^2_y \geq \big(\underline{\sigma}_X \big)^2 > \sqrt{20 m} (N-1) \frac{\gamma^2_{B,0} (\max_i (\tJ^i_{y,0}(K^{i*},K^{-i*}))^2}{\sigma_Q \sigma_R} 
\end{align*}
This can equivalently be written as
\begin{align*}
    \sigma_R & > \sqrt{20 m} (N-1)\frac{\gamma^2_{B,0} (\max_i (\tJ^i_{y,0}(K^{i*},K^{-i*}))^2}{\sigma_Q \sigma^2_y}
    \geq \sqrt{20 m} (N-1)\frac{\gamma^2_{B,0} \sigma^2_y (\gamma^i_{P,1})^2}{\sigma_Q \sigma^2_y} \\
    & \geq \sqrt{20 m} (N-1)\gamma^2_{B,0} \gamma^i_{P,1}
\end{align*}
where the second inequality is obtained using Lemma 13 
in \cite{hambly2023policy} and the last one is obtained using the fact that $P^i_T = Q^i_T$ for any $T$. 
By observation $\sqrt{20 m} (N-1)\gamma^2_{B,0} \gamma^i_{P,1} > \sqrt{2m(N-1)} \gamma^2_{B,0} \gamma^i_{P,1}$ hence if the SN condition (Assumption 4 
in \cite{hambly2023policy}) is true then Assumption \ref{asm:diag_dom} is also true and hence is more general than the SN condition.

\section{Proof of Theorem \ref{thm:inner_loop_conv}} \label{sec:inner_loop_conv}\textbf{[Linear convergence of NPG]}
    We prove for a given $t \in \{0,\ldots,T-1\}$ the linear rate of convergence of natural policy gradient \eqref{eq:GDU} for the control policies $(K^i_t)_{i \in [N]}$ with the set of future controllers $(K^i_s)_{s > t, i \in [N]}$ fixed. The techniques for showing the same result for $(\bK^i_t)_{i \in [N]}$ is quite similar and hence is omitted. The linear convergence result follows due to the PL condition (Lemma \ref{lem:PL}) for a given $t \in\{0,\ldots,T-1\}$. We first introduce the set of \emph{local}-NE policies $(\tK^{i*}_t)_{i \in [N]}$ which satisfy the equation \eqref{eq:min_tJ_iyt} for all $i \in [N]$. These are essentially the targets that we want $(K^i_t)_{i \in [N]}$ to achieve.
    
    First we define the natural policy gradient for agent $i$ if all agents are following policies $((K^1_s)_{t \leq s \leq T-1},\dots,(K^N_s)_{t \leq s \leq T-1})$
    \begin{align*}
        E^i_t := \nabla^i_{y,t} (K^i,K^{-i}) \Sigma^{-1}_y = R^i_t K^i_t - (B^i_t)^\top P^i_{t+1} (A_t - \sum_{j=1}^N B^j_t K^j_t) .
    \end{align*}
    The policy update \eqref{eq:GDU} in Algorithm \ref{alg:RL_GS_MFTG} uses $\tnabla^i_{y,t} (K^i,K^{-i}) \Sigma^{-1}_y$ which is a stochastic approximation of $E^i_t$. Now we define another set of controls for a given $i \in [N]$, $(K^i,\tK^{-i*})$ where the player $i \in [N]$ has control policy $(K^i_s)_{t \leq s \leq T-1}$ but the other agents $j \neq i$ follow the local-NE only at time $t$, $(\tK^{j*}_t,K^j_{t+1},\ldots, K^j_{T-1})$. This set of policies $(K^i,\tK^{-i*})$ is if all players $j \neq i$ somehow achieved their respective local-NE policies but player $i$ is following policy $K^i_t$. This set of policies will be useful in the analysis of the algorithm. The natural policy gradient under this set of policies is given as
    \begin{align*}
        E^{i*}_t = \nabla^i_{y,t} (K^i,\tK^{-i*}) \Sigma^{-1}_y =  R^i_t K^i_t - (B^i_t)^\top P^i_{t+1} (A_t - B^i_t K^i_t - \sum_{j \neq i} B^j_t \tK^{j*}_t) .
    \end{align*}
    Notice that the policy update \eqref{eq:GDU} in Algorithm \ref{alg:RL_GS_MFTG} is \emph{not} an estimate of $E^{i*}_t$. Now we introduce some properties of the cost $\tJ^i_{y,t}$ for the set of controllers $(K^i,\tK^{-i*})$ including the PL condition
    \begin{lemma}\label{lem:PL_2} 
    Given that $K^{i\prime} = (K^{i \prime}_t,K^i_{t+1},\ldots)$, the cost function $\tJ^i_{y,t}$ satisfies the following smoothness property:
    \begin{align}
        & \tJ^i_{y,t}(K^{i \prime},\tK^{-i*}) - \tJ^i_{y,t}(K^{i},\tK^{-i*}) = \tr[((K^{i \prime}_t - K^i_t)^\top (R^i_t + (B^i_t)^\top P^i_{t+1} B^i_t) \nonumber \\
        & \hspace{5cm} (K^{i \prime}_t - K^i_t) + 2(K^{i \prime}_t - K^i_t)^\top E^{i*}_t) \Sigma_y].  \label{eq:almost_smooth}
    \end{align}
    Furthermore, the cost function $\tJ^i_{y,t}(\tK^i,K^{-i*})$ also satisfies the PL growth condition with respect to policy gradient $ \nabla^i_{y,t}(\tK^i,K^{-i*}) $ and natural policy gradient $E^{i*}_t$:
    \begin{align*} 
        \tJ^i_{y,t}(K^i,\tK^{-i*}) - \tJ^i_{y,t}(\tK^{i*},\tK^{-i*}) 
        & \leq \frac{\sigma_y}{\sigma_R} \lVert E^{i*}_t  \rVert^2_F \\
        & \leq \frac{1}{{\sigma}_R \sigma_y} \lVert \nabla^i_{y,t}(K^i,\tK^{-i*}) \rVert^2_F
    \end{align*}
    where the value function matrix $P^i_{y,t}$ is defined recursively by
    \begin{align} \label{eq:P_i_t2}
        P^i_{y,t} = Q^i_t + (K^i_t)^T R^i_t K^i_t + (A_t - \sum_{j=1}^N B^j_t K^j_t)^\top P^i_{y,t+1} (A_t - \sum_{j=1}^N B^j_t K^j_t), P^i_{y,T} = Q^i_T.
    \end{align}
    and $\sigma_y$ is the minimum eigenvalue of $\Sigma_y$.
    \end{lemma}
    The proof of this Lemma is similar to that of Lemma \ref{lem:PL} and thus omitted. Before starting the proof of Theorem \ref{thm:inner_loop_conv}, we state the bound on learning rates $\eta^i$ and smallest singular value $\sigma_y$ of matrix $\Sigma_y$
    \begin{align} 
        &\eta^i \leq \min \bigg (\frac{1}{4 \big\lVert R^i_t + (B^i_t)^\top P^i_{t+1} B^i_t \big\rVert^2 + 3 \big\lVert R^i_t + (B^i_t)^\top P^i_{t+1} B^i_t \big\rVert} \label{eq:suff_cond_conv}\\
        & \hspace{4cm} , \frac{1}{2 \big\lVert R^i_t + (B^i_t)^\top P^i_{t+1} B^i_t \big\rVert^2 + 1} \bigg) \nonumber 
    \end{align}
    where 
    $\sigma_R$ is the LSV of matrices $(R^i_t)_{t \in \{0,\ldots, T-1\}}$. For simplicity of analysis we will set $\Sigma_y = \sigma_y I$. The bound on learning rate $\eta^i$ is standard in Policy Gradient literature \cite{fazel2018global,malik2019derivative}. 
    The update \eqref{eq:GDU} in Algorithm \ref{alg:RL_GS_MFTG} employs stochastic natural policy gradient $\tE^i_t := \tnabla^i_{y,t} (K^i,K^{-i}) \Sigma^{-1}_y$ where $\tnabla^i_{y,t}(K^i,K^{-i})$ is the stochastic policy gradient \eqref{eq:policy_grad_1}. Using Lemma \ref{lemma:smoothness_bias}, if the smoothing radius $r = \Os(\epsilon)$ and mini-batch size $N_b = \Theta(\log(1/\delta)/\epsilon^{2})$, we can obtain the approximation error in stochastic gradient $\delta^i_t := \lVert \tnabla^i_{y,t} (K^i,K^{-i}) - \nabla^i_{y,t} (K^i,K^{-i}) \rVert = \Os(\epsilon)$ with probability $1-\delta$. 
    
    We denote the updated controller after one step of stochastic natural policy gradient as $K^{i\prime}$, which is defined as follows:
    \begin{align}
        K^{i \prime}_t & = K^i_t - \eta^i \tE^i_t, \nonumber \\
        & = K^i_t - \eta^i (E^i_t + \Delta E^i_t), \text{ where } \Delta E^i_t = \tE^i_t - E^i_t, \nonumber \\
        & = K^i_t - \eta^i (E^{i*}_t + \Delta E^i_t + \Delta E^{i*}_t), \text{ where } \Delta E^{i*}_t = E^i_t - E^{i*}_t \label{eq:Kprime-K}
    \end{align}
    Now we characterize the difference in the costs produced by the update of the controller for agent $i$ from $K^i_t$ to $K^{i \prime}_t$ (given that the other players follow the NE controllers $(K^{j*}_t)_{j \neq i}$) at the given time $t$. The future controllers $(K^i_s)_{s > t, i \in [N]}$ are assumed to be fixed and the set of value matrices $(P^i_{t+1})_{i \in [N]}$ is a function of the set of future controllers \eqref{eq:P_i_t2}. To conserve space in the following analysis we use the notation $\vertiii{A}^2 := A^\top A$ and $\vertiii{A}^2_B := A^\top B A$ for matrices $A$ and $B$ of appropriate dimensions. Using Lemma \ref{lem:PL_2} we get
    \begin{align}
        & \tJ^i_{y,t}(K^{i \prime},\tK^{-i*}) - \tJ^i_{y,t}(K^{i},\tK^{-i*})  \nonumber \\
        & = \sigma_y \tr[(K^{i \prime}_t - K^i_t)^\top (R^i_t + (B^i_t)^\top P^i_{t+1} B^i_t)(K^{i \prime}_t - K^i_t) + 2(K^{i \prime}_t - K^i_t)^\top E^{i*}_t) ], \nonumber \\
        & = \sigma_y \tr[(\eta^i)^2(E^{i*}_t + \Delta E^i_t + \Delta E^{i*}_t)^\top (R^i_t + (B^i_t)^\top P^i_{t+1} B^i_t)(E^{i*}_t + \Delta E^i_t + \Delta E^{i*}_t)] \nonumber \\
        & \hspace{1.5cm} - 2 \eta^i \tr [(E^{i*}_t + \Delta E^i_t + \Delta E^{i*}_t)^\top E^{i*}_t \Sigma_y ] \nonumber 
        \end{align}
        Simplifying the expression 
        \begin{align}
        & \tJ^i_{y,t}(K^{i \prime},\tK^{-i*}) - \tJ^i_{y,t}(K^{i},\tK^{-i*})  \nonumber \\
        & = \sigma_y\tr \big[(\eta^i)^2 \big(\vertiii{E^{i*}_t}^2_{(R^i_t + (B^i_t)^\top P^i_{t+1} B^i_t)} + 2 (\Delta E^{i*}_t + \Delta E^i_t)^\top (R^i_t + (B^i_t)^\top P^i_{t+1} B^i_t) E^{i*}_t  \nonumber \\
        & \hspace{0.25cm} + \vertiii{\Delta E^{i*}_t}^2_{(R^i_t + (B^i_t)^\top P^i_{t+1} B^i_t)} + 2 (\Delta E^i_t )^\top (R^i_t + (B^i_t)^\top P^i_{t+1} B^i_t) \Delta E^{i*}_t \nonumber \\
        & \hspace{0.25cm} + \vertiii{\Delta E^{i}_t}^2_{(R^i_t + (B^i_t)^\top P^i_{t+1} B^i_t)}\big) \big] - 2 \eta^i \sigma_y \tr \big[\vertiii{E^{i*}_t}^2 \big] - 2 \eta^i \tr \big[ \Sigma_y (\Delta E^i_t + \Delta E^{i*}_t)^\top E^{i*}_t  \big] \nonumber \\
        & \leq \sigma_y \big[ (\eta^i)^2 \tr \big( \vertiii{E^{i*}_t}^2_{(R^i_t + (B^i_t)^\top P^i_{t+1} B^i_t)} \big) + \frac{(\eta^i)^2}{2}\tr\big(\vertiii{E^{i*}_t}^2 \big)  \nonumber \\
        & \hspace{0.5cm} + 2 (\eta^i)^2 \tr\big(\vertiii{(\Delta E^{i*}_t + \Delta E^i_t)^\top (R^i_t + (B^i_t)^\top P^i_{t+1} B^i_t)}^2 \big) \nonumber \\
        & \hspace{0.5cm}  + \frac{3}{2} (\eta^i)^2 \tr\big( \vertiii{\Delta E^{i*}_t}^2_{(R^i_t + (B^i_t)^\top P^i_{t+1} B^i_t)} \big) + 3 (\eta^i)^2 \tr \big( \vertiii{\Delta E^{i}_t }^2_{(R^i_t + (B^i_t)^\top P^i_{t+1} B^i_t)} \big) \big]  \nonumber \\
        & \hspace{0.5cm} - 2 \eta^i \sigma_y \tr \big(\vertiii{E^{i*}_t}^2 \big) + \eta^i \sigma_y \tr \big( \vertiii{E^{i*}_t}^2 \big) + \frac{\eta^i}{\sigma_y} \tr \big(\vertiii{\Sigma_y(\Delta E^i_t + \Delta E^{i*}_t) }^2 \big) \nonumber 
        \end{align}
        where we have used the fact that $2\tr(A^\top B) \leq \tr(A^\top A) + \tr(B^\top B)$ to obtain the inequality. We further have the following inequalities (bounds):
        \begin{align}
        & \tJ^i_{y,t}(K^{i \prime},\tK^{-i*}) - \tJ^i_{y,t}(K^{i},\tK^{-i*})  \label{eq:cost_diff_stoc} \\
        & \leq \sigma_y \big[ (\eta^i)^2 \tr(\vertiii{E^{i*}_t}^2_{(R^i_t + (B^i_t)^\top P^i_{t+1} B^i_t)} ) \nonumber \\
        & \hspace{0.5cm} + \frac{(\eta^i)^2}{2}\tr(\vertiii{ E^{i*}_t}^2 ) + 2 (\eta^i)^2 \tr(\vertiii{(\Delta E^{i*}_t + \Delta E^i_t)^\top (R^i_t + (B^i_t)^\top P^i_{t+1} B^i_t)}^2) \nonumber \\
        & \hspace{0.5cm} + \frac{3}{2} (\eta^i)^2 \tr(\vertiii{\Delta E^{i*}_t}^2_{(R^i_t + (B^i_t)^\top P^i_{t+1} B^i_t)}) + 3 (\eta^i)^2 \tr(\vertiii{\Delta E^{i}_t }^2_{(R^i_t + (B^i_t)^\top P^i_{t+1} B^i_t)}) \big] \nonumber \\
        & \hspace{0.5cm}  -  \eta^i \sigma_y \tr( \vertiii{E^{i*}_t}^2) + \eta^i \sigma_y \tr(\vertiii{\Delta E^i_t}^2 + \vertiii{\Delta E^{i*}_t}^2) \nonumber \\
        & \leq \sigma_y \bigg((\eta^i)^2 \big\lVert R^i_t + (B^i_t)^\top P^i_{t+1} B^i_t \big\rVert^2 + \frac{(\eta^i)^2}{2} - \eta^i \bigg) \tr\big((E^{i*}_t)^\top E^{i*}_t \big) \nonumber \\
        & \hspace{0.5cm} + \sigma_y \Big(4(\eta^i)^2 \big\lVert R^i_t + (B^i_t)^\top P^i_{t+1} B^i_t \big\rVert^2 + 3 (\eta^i)^2 \big\lVert R^i_t + (B^i_t)^\top P^i_{t+1} B^i_t \big\rVert + \eta^i \Big) \nonumber \\
        & \hspace{7cm} \big(\lVert \Delta E^i_t \rVert^2_F + \lVert \Delta E^{i*}_t \rVert^2_F \big)   \nonumber 
    \end{align}
    By definition we know that $\eta^i \leq$
    \begin{align*}
        \min \bigg (\frac{1}{4 \big\lVert R^i_t + (B^i_t)^\top P^i_{t+1} B^i_t \big\rVert^2 + 3 \big\lVert R^i_t + (B^i_t)^\top P^i_{t+1} B^i_t \big\rVert}, \frac{1}{2 \big\lVert R^i_t + (B^i_t)^\top P^i_{t+1} B^i_t \big\rVert^2 + 1} \bigg).
    \end{align*}
    Thus using this bound \eqref{eq:cost_diff_stoc} can be written as
    \begin{align}
         & \tJ^i_{y,t}(K^{i \prime},\tK^{-i*}) - \tJ^i_{y,t}(K^{i},\tK^{-i*}) \nonumber \\
         & \hspace{2cm} \leq - \eta^i \sigma_y \tr \big( (E^{i*}_t)^\top E^{i*}_t \big) + 2 \eta^i  \sigma_y \big(\lVert \Delta E^i_t \rVert^2_F + \lVert \Delta E^{i*}_t \rVert^2_F \big)\label{eq:cost_diff_inter}
    \end{align}
    Using the definition of $\Delta E^{i*}_t$ we can deduce that
    \begin{align*}
        \lVert \Delta E^{i*}_t \rVert_F & = \bigg\lVert (B^i_t)^\top P^i_{t+1} \sum_{j \neq i} B^j_t (K^j_t - K^{j*}_t) \bigg\rVert_F  \\
        & \leq \lVert B^i_t \rVert_F \lVert P^i_{t+1} \rVert_F \sum_{j \neq i} \lVert B^j_t \rVert_F \lVert K^j_t - K^{j*}_t \rVert_F\leq \gamma^2_B \gamma_{P,t+1} \sum_{j \neq i} \lVert K^j_t - K^{j*}_t \rVert_F
    \end{align*}
    where $\gamma_B := \max_{i \in [N], t > 0} \lVert B^i_t \rVert_F$ and $\gamma_{P,t} := \max_{i \in [N]} \lVert P^i_t \rVert_F$. Using these bound we can re-write \eqref{eq:cost_diff_inter} as 
    \begin{align*}
         & \tJ^i_{y,t}(K^{i \prime},\tK^{-i*}) - \tJ^i_{y,t}(K^{i},\tK^{-i*}) \\
         & \leq - \eta^i \sigma_y \tr \big( (E^{i*}_t)^\top E^{i*}_t \big) +2 \eta^i \sigma_y \lVert \Delta E^i_t \rVert^2_F + 2\eta^i \gamma^4_B \gamma^2_{P,t+1} \sigma_y \bigg(\sum_{j \neq i} \lVert K^j_t - K^{j*}_t \rVert^2_F \bigg)^2 \\
         & \leq - \eta^i \sigma_R (\tJ^i_{y,t}(K^{i},\tK^{-i*}) - \tJ^i_{y,t}(\tK^{i*},\tK^{-i*})) + 2 \eta^i \sigma_y \lVert \Delta E^i_t \rVert^2_F \\
         & \hspace{7cm}+ 2\eta^i \gamma^4_B \gamma^2_{P,t+1} \sigma_y \bigg( \sum_{j \neq i} \lVert K^j_t - K^{j*}_t \rVert_F \bigg)^2 
    \end{align*}
    where the second inequality is due to the gradient domination condition in Lemma \ref{lem:PL_2}. 
    Next we characterize the cost difference $\tJ^i_{y,t}(K^{i \prime},\tK^{-i*}) - \tJ^i_{y,t}(\tK^{i*},\tK^{-i*})$:
    \begin{align*}
        &\tJ^i_{y,t}(K^{i \prime},\tK^{-i*}) - \tJ^i_{y,t}(\tK^{i*},\tK^{-i*}) \\
        &= \tJ^i_{y,t}(K^{i \prime},\tK^{-i*}) - \tJ^i_{y,t}(K^{i},\tK^{-i*}) + \tJ^i_{y,t}(K^{i},\tK^{-i*}) -  \tJ^i_{y,t}(\tK^{i*},\tK^{-i*})\\
        & \leq \bigg( 1 - \eta^i \sigma_R \bigg) \big(\tJ^i_{y,t}(K^{i},\tK^{-i*}) -  \tJ^i_{y,t}(\tK^{i*},\tK^{-i*}) \big) + 2 \eta^i \sigma_y \lVert \Delta E^i_t \rVert^2_F \\
        & \hspace{6cm} + 2\eta^i \gamma^4_B \gamma^2_{P,t+1} \sigma_y \bigg( \sum_{j \neq i} \lVert K^j_t - \tK^{j*}_t \rVert_F \bigg)^2\\
        & \leq \bigg( 1 - \eta^i \sigma_R \bigg) (\tJ^i_{y,t}(K^{i},\tK^{-i*}) -  \tJ^i_{y,t}(\tK^{i*},\tK^{-i*})) + 2 \eta^i \sigma_y \lVert \Delta E^i_t \rVert^2_F \\
        & \hspace{2cm} + 2\eta^i  \frac{\gamma^4_B \gamma^2_{P,t+1}(N-1)}{ \sigma_R} \sum_{j \neq i}  \tJ^j_{y,t}(K^{j},\tK^{-j*}) -  \tJ^j_{y,t}(\tK^{j*},\tK^{-j*})
    \end{align*}
    where the last inequality follows by utilizing the smoothness condition \eqref{eq:almost_smooth} in Lemma \ref{lem:PL_2} as shown below:
    \begin{align}
        \label{eq:reverse_PL} &\sum_{j \neq i} \tJ^j_{y,t}(K^{j},\tK^{-j*}) - \tJ^j_{y,t}(\tK^{j*},\tK^{-j*}) \\
        & \hspace{0.7cm} = \sum_{j \neq i} \tr[((K^{j}_t - \tK^{j*}_t)^\top (R^j_t + (B^j_t)^\top P^j_{t+1} B^j_t)(K^{j}_t - \tK^{j*}_t)) \Sigma_y] \nonumber \\
        & \hspace{0.7cm} \geq \sigma_y \sigma_R  \sum_{j \neq i}  \lVert K^j_t - \tK^{j*}_t\rVert^2_F \geq \frac{\sigma_y \sigma_R}{N-1} \bigg( \sum_{j \neq i}  \lVert K^j_t - \tK^{j*}_t\rVert_F \bigg)^2 \nonumber 
    \end{align}
    Assume that $\lVert \Delta E^i_t \rVert^2_F \leq \delta_t$ and define $\bar{\eta} := \max_{i \in [N]} \eta^i$. 
    Now if we characterize the sum of difference, of costs $\sum_{i=1}^N \big( \tJ^i_{y,t}(K^{i \prime},\tK^{-i*}) - \tJ^i_{y,t}(\tK^{i*},\tK^{-i*}) \big)$ we get
    \begin{align}
        & \sum_{i=1}^N \big( \tJ^i_{y,t}(K^{i \prime},\tK^{-i*}) - \tJ^i_{y,t}(\tK^{i*},\tK^{-i*}) \big) \nonumber \\
        & \leq \sum_{i=1}^N  \bigg( 1 - \eta^i \bigg( \sigma_R -  2\frac{m(N-1)}{ \sigma_R}\gamma^4_B \gamma^2_{P,t+1} \bigg)\bigg) \cdot \nonumber \\
        & \hspace{2cm} \big( \tJ^i_{y,t}(K^{i },\tK^{-i*}) - \tJ^i_{y,t}(\tK^{i*},\tK^{-i*}) \big) + 2 \eta^i \sigma_y \lVert \Delta E^i_t \rVert^2_F \nonumber \\
        & \leq \bigg( 1 - \underline{\eta}  \frac{\sigma_R}{2} \bigg) \sum_{i=1}^N (\tJ^i_{y,t}(K^i,\tK^{-i*}) - \tJ^i_{y,t}(\tK^{i*},\tK^{-i*})) + 2 \sum_{i=1}^N \eta^i \sigma_y \lVert \Delta E^i_t \rVert^2_F \nonumber \\
        & \leq \bigg( 1 - \underline{\eta}   \frac{\sigma_R}{2} \bigg) \sum_{i=1}^N (\tJ^i_{y,t}(K^i,\tK^{-i*}) - \tJ^i_{y,t}(\tK^{i*},\tK^{-i*})) + 2 N \bar{\eta}^2 \sigma_y \delta_t \label{eq:cost_update}
    \end{align}
    where the second inequality is obtained using 
    the diagonal dominance condition $\sigma^2_R \geq 2m(N-1) \gamma^4_B \gamma^2_{P,t+1} $ and $\underline{\eta} := \min_{i \in [N]} \eta^i$. Now let us define a sequence of controllers $(K^{i,k}_t)_{i \in [N]}$ for $k = \{0,1,2,\ldots\}$ such that $K^{i,k+1}_t = K^{i,k}_t - \eta^i \tE^{i,k}_t$ where $\tE^{i,k}_t$ is the stochastic policy gradient at iteration $k$. Using \eqref{eq:cost_update} we can write
    \begin{align}
        & \sum_{i=1}^N \tJ^i_{y,t}(K^{i,k},\tK^{-i*}) - \tJ^i_{y,t}(\tK^{i*},\tK^{-i*}) \label{eq:nash_gap_cost} \\
        & \hspace{0.7cm} \leq ( 1 - \underline{\eta}   \sigma_R/2 )^k \sum_{i=1}^N (\tJ^i_{y,t}(K^{i,0},\tK^{-i*}) - \tJ^i_{y,t}(\tK^{i*},\tK^{-i*})) \nonumber \\
        & \hspace{4cm} + 2 N \bar{\eta}^2 \sigma_y \delta_t \sum_{j=0}^k ( 1 - \underline{\eta} \sigma_R/(2  \sigma_y) )^j \nonumber \\
        & \hspace{0.7cm} \leq ( 1 - \underline{\eta} \sigma_R/2 )^k \sum_{i=1}^N (\tJ^i_{y,t}(K^{i,0},\tK^{-i*}) - \tJ^i_{y,t}(\tK^{i*},\tK^{-i*})) + \frac{4 N \bar{\eta}^2 \sigma^2_y }{\underline{\eta} \sigma_R} \delta_t \nonumber
    \end{align}
    We can use \eqref{eq:forward_PL} and 
    \eqref{eq:reverse_PL} to write down the following inequalities:
    \begin{align}
        & \tJ^i_{y,t}(K^{i},\tK^{-i*}) - \tJ^i_{y,t}(\tK^{i*},\tK^{-i*}) \geq \sigma_y \sigma_R \lVert K^i_t - \tK^{i*}_t\rVert^2_F, \nonumber \\
        & \tJ^i_{y,t}(K^{i},\tK^{-i*}) - \tJ^i_{y,t}(\tK^{i*},\tK^{-i*}) \nonumber \\
        & \hspace{3cm} \leq \sigma_y \big( (\gamma_R + \gamma^2_B \gamma_{P,t+1}) \lVert K^i_t - K^{i*}_t \rVert^2_F + \bar{e} \lVert K^i_t - K^{i*}_t \rVert_F \big) \label{eq:reverse_forward_PL}
    \end{align}
    These inequalities can be utilized to upper bound the difference between output of the algorithm and the Nash equilibrium controllers. Using \eqref{eq:reverse_forward_PL} and \eqref{eq:nash_gap_cost} we get
    \begin{align*}
        & \sum_{i=1}^N \lVert K^{i,k}_t - \tK^{i*}_t\rVert^2_F \leq \\
        & \hspace{0.5cm} \frac{4 N \bar{\eta}^2 \sigma_y }{\underline{\eta} \sigma^2_R} \delta_t + \bigg( 1 - \underline{\eta}   \frac{\sigma_R}{2} \bigg)^k \sum_{i=1}^N \frac{ (\gamma_R + \gamma^2_B \gamma_{P,t+1} )\lVert K^{i,0}_t - K^{i*}_t \rVert^2_F + \bar{e} \lVert K^{i,0}_t - K^{i*}_t \rVert_F}{\sigma_R}  \nonumber 
    \end{align*}
    Hence if $k = \Theta\big(\log\big(\frac{1}{\epsilon}\big)\big)$ and $\delta_t = \Os(\epsilon)$, then $\sum_{i=1}^N \tJ^i_{y,t}(K^{i,k},\tK^{-i*}) - \tJ^i_{y,t}(\tK^{i*},\tK^{-i*}) = \Os(\epsilon)$ and using \eqref{eq:almost_smooth} we can deduce that $\lVert K^{i,k}_t - \tK^{i*}_t \rVert^2 = \Os(\epsilon)$ for $i \in [N]$. This linear rate of convergence can also be proved for the cost $\tJ^i_{\bx,t}$ using similar techniques, and hence we do not include it here.

\section{Proof of Theorem \ref{thm:main_res}} \textbf{[Linear convergence of Alg. \ref{alg:RL_GS_MFTG}]}
\newline
    \textbf{Definitions:} Throughout the proof we refer to the output of the inner loop of Algorithm \ref{alg:RL_GS_MFTG} as the set of \emph{output controllers} $(K^i_t)_{i \in [N],t \in \{0,\ldots,T-1\}}$. In the proof we use two other sets of controllers as well. The first set $(K^{i*}_t)_{i \in [N],t \in \{0,\ldots,T-1\}}$ is the NE as characterized in Theorem \ref{thm:CLNE}. The second set is called the \emph{local}-NE (as in proof of Theorem \ref{thm:inner_loop_conv}) and is denoted by $(\tK^{i*}_t)_{i \in [N],t \in \{0,\ldots,T-1\}}$. The proof quantifies the error between the output controllers $(K^i_t)_{i \in [N]}$ and the corresponding NE controllers $(K^{i*}_t)_{i \in [N]}$ by utilizing the intermediate local-NE controllers $(\tK^{i*}_t)_{i \in [N]}$ for each time $t \in \{T-1,\ldots,0\}$. For each $t$ the error is shown to depend on error in future controllers $(K^i_s)_{s \geq t, i \in [N]}$ and the approximation error $\Delta_t$ introduced by the NPG update. If $\Delta_t = \Os(\epsilon)$, then the error between the output and NE controllers is shown to be $\Os(\epsilon)$.
    
    Let us start by denoting the NE value function matrices for agent $i \in [N]$ at time $t \in \{0,1,\ldots,T\}$, under the NE control matrices $(K^{i*}_s)_{i \in [N], s \in \{t+1,\ldots,T-1\}}$ by $P^{i*}_t$. The NE control matrices can be characterized as:
    \begin{align}
        K^{i*}_t & := \argmin_{K^i_t} \tJ^{i}_{y,t}((K^i_t,K^{i*}_{t+1},\ldots,K^{i*}_{T-1}),K^{-i*}) \nonumber \\
        & = -(R^i_t + (B^i_t)^\top P^{i*}_{t+1} B^i_t)^{-1} (B^i_t)^\top P^{i*}_{t+1} A^{i*}_t \label{eq:control_NE}
    \end{align}
    where $A^{i*}_t := A_t + \sum_{j \neq i} B^j_t K^{j*}_t$. 
    The NE value function matrices are defined recursively using the NE controllers as 
    \begin{align}
        P^{i*}_t & = Q^i_t + (A^{i*}_t)^\top P^{i*}_{t+1} A^{i*}_t \label{eq:Pi*t} \\
        & \hspace{2cm} - (A^{i*}_t)^\top P^{i*}_{t+1} B^i_t (R^i_t + (B^i_t)^\top P^{i*}_{t+1} B^i_t)^{-1} (B^i_t)^\top P^{i*}_{t+1} A^{i*}_t \nonumber \\
        & = Q^i_t + (A^{i*}_t)^\top P^{i*}_{t+1} ( A^{i*}_t + B^i_t K^{i*}_t), P^{i*}_T = Q^i_T \nonumber
    \end{align}
    The sufficient condition for existence and uniqueness of the set of matrices $K^{i*}_t$ and $P^{i*}_t$ is shown in Theorem \ref{thm:CLNE}. Let us now define the perturbed value matrices $P^i_t$ (resulting from the control matrices $(K^i_s)_{i \in [N], s \in \{t+1,\ldots,T-1\}}$). At time $t$, let us assume the existence and uniqueness of the \emph{target} control matrices $(\tK^{i*}_t)_{i \in [N]}$ such that
    \begin{align}
        \tK^{i*}_t & := \argmin_{K^i_t} \tJ^{i}_{y,t} (K^i,\tK^{-i*})
        = -(R^i_t + (B^i_t)^\top P^i_{t+1} B^i_t)^{-1} (B^i_t)^\top P^i_{t+1} \tA^i_t \label{eq:control_approx_NE}
    \end{align}
    where $\tK^{j*} = (\tK^{j*}_t, K^j_{t+1},\ldots,K^j_{T-1})$ and $\tA^i_t := A_t + \sum_{j \neq i} B^j_t \tK^{j*}_t$. We will show the existence and uniqueness of the target control matrices given that the inner-loop convergence in Algorithm \ref{alg:RL_GS_MFTG} (Theorem \ref{thm:inner_loop_conv}) is close enough. Using these matrices we define the value function matrices $(\tP^{i}_t)_{i \in [N]}$ as follows:
    \begin{align}
        \tP^i_t & =  Q^i_t + (\tK^{i*}_t)^\top R^i_t \tK^{i*}_t + \bigg(A_t + \sum_{j=1}^N B^j_t \tK^{i*}_t \bigg)^\top P^i_{t+1} \bigg(A_t + \sum_{j=1}^N B^{i}_t \tK^{j*}_t \bigg) \nonumber \\
        & = Q^i_t + (\tA^i_t)^\top P^i_{t+1} \tA^i_t - (\tA^i_t)^\top P^i_{t+1} B^i_t (R^i_t + (B^i_t)^\top P^i_{t+1} B^i_t)^{-1} (B^i_t)^\top P^i_{t+1} \tA^i_t \nonumber \\
        & = Q^i_t + (\tA^i_t)^\top P^i_{t+1} ( \tA^i_t + B^i_t \tK^{i*}_t), \tP^i_T = Q^i_T \label{eq:tPit}
    \end{align}
    Finally we define the perturbed value function matrices $P^i_t$ which result from the perturbed matrices $(K^i_s)_{i \in [N], s \in \{t+1,\ldots,T-1\}}$ obtained using the NPG \eqref{eq:GDU} in Algorithm \ref{alg:RL_GS_MFTG}:
    \begin{align}
        P^i_t = Q^i_t + (K^i_t)^\top R^i_t K^i_t + \bigg(A_t + \sum_{j=1}^N B^j_t K^j_t \bigg)^\top P^i_{t+1} \bigg(A_t + \sum_{j=1}^N B^j_t K^j_t \bigg) \label{eq:Pit}
    \end{align}
    Throughout this proof we assume that the output of the inner loop in Algorithm \ref{alg:RL_GS_MFTG}, also called the \emph{output matrices} $K^i_t$,
    are $\Delta_t$ away from the target matrices $\tK^{i*}_t$, such that $\lVert K^i_t - \tK^{i*}_t \rVert \leq \Delta_t$. We know that
    \begin{align*}
        \lVert K^i_t - K^{i*}_t \rVert \leq \lVert K^i_t - \tK^{i*}_t \rVert + \lVert \tK^{i*}_t - K^{i*}_t \rVert \leq \Delta_t + \lVert \tK^{i*}_t - K^{i*}_t \rVert.
    \end{align*}
    
{
\noindent\textbf{Motivation for proof:} Figure \ref{fig:MRNPG_flow_2} shows the flow of the MRNPG algorithm. The algorithm utilizes NPG to converge to the NE $\pi^{i*}_t, \forall i$, for $t=T-1$, then $t=T-2$ and continues in a receding horizon manner (backwards-in-time). Theorem \ref{thm:inner_loop_conv} proves that the NPG converges to the local NE under the diagonal dominance condition \ref{asm:diag_dom}. Figure \ref{fig:agent_i_costs} shows two distinct cost functions of a given agent $i \in [N]$ in a game with time horizon $T=2$. Let the cost $\tilde{C}^i(\cdot)$ be the cost under, MRNPG algorithm at timestep $t=0$, \emph{after} the NE has been approximated and fixed at timestep $t=1$ i.e. $\tK^{i*}_1 \approx K^{i*}_1, \forall i$. Hence this cost $\tilde{C}^i(K^i_0,K^{-i}_0) = \tJ^{i}_{y,0} \big((K^i_0,\tK^{i*}_1),(K^{-i}_0,\tK^{-i*}_1) \big) \approx \tJ^{i}_{y,0} \big((K^i_0,K^{i*}_1),(K^{-i}_0,K^{-i*}_1) \big) =: C^{i*}(K^i_0,K^{-i}_0)$. On the other hand, let cost $C^i(\cdot)$ be the cost of player $i$ under Vanilla PG method for timestep $t=0$, where the NE has \emph{not} been approximated for timestep $t=1$, hence $C^i(K^i_0,K^{-i}_0) = \tJ^{i}_{y,0} \big((K^i_0,K^{i}_1),(K^{-i}_0,K^{-i}_1) \big) \neq C^{i*}(K^i_0,K^{-i}_0)$ since $K^i_1 \neq K^{i*}_1, \forall i$. For each cost $\tilde{C}^i$ or $C^i$ NPG (\eqref{eq:GDU} Algorithm \ref{alg:RL_GS_MFTG}) converges to the target/minimum of the cost (Theorem \ref{thm:inner_loop_conv}). In Figure \ref{fig:agent_i_costs} we denote the minimum of $C^i$ by $K^{i\prime}_0$ and the minimum of $\tilde{C}^i$ by $\tilde{K}^{i*}_0$. The Hamilton-Jacobi-Isaacs equations \cite{bacsar1998dynamic} in the context of this two-shot LQ game states that 
$
    K^{i*}_0 = \argmin_K C^{i*}(K,K^{-i*}_0), \forall i
$
which coupled with $\tilde{C}^i \approx C^{i*} \implies \tK^i_0 \approx K^{i*}_0$. Hence MRNPG leads to a closer approximation of the NE than Vanilla PG algorithms.
\begin{figure}[h!]
     \hfill
     \begin{subfigure}[b]{0.42\textwidth}
         \centering
         \includegraphics[width=0.8\textwidth]{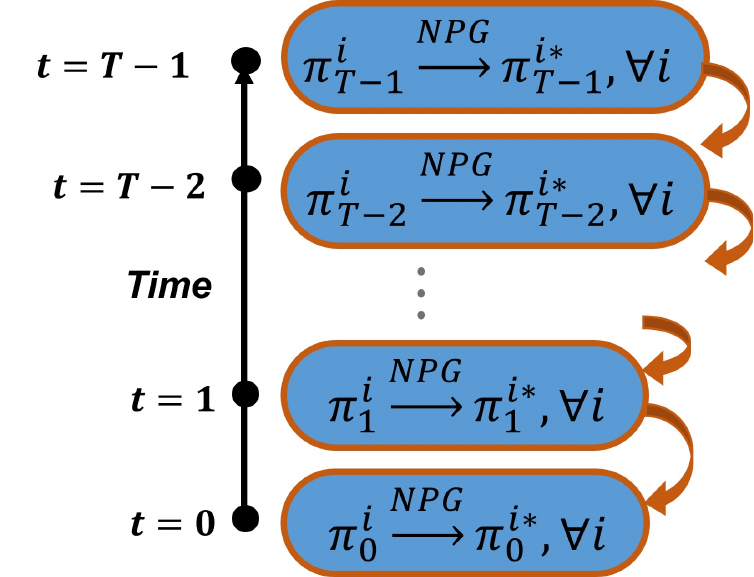}
         \caption{}
          \label{fig:MRNPG_flow_2}
     \end{subfigure}
     \hfill
     \begin{subfigure}[b]{0.4\textwidth}
         \centering
         \includegraphics[width=\textwidth]{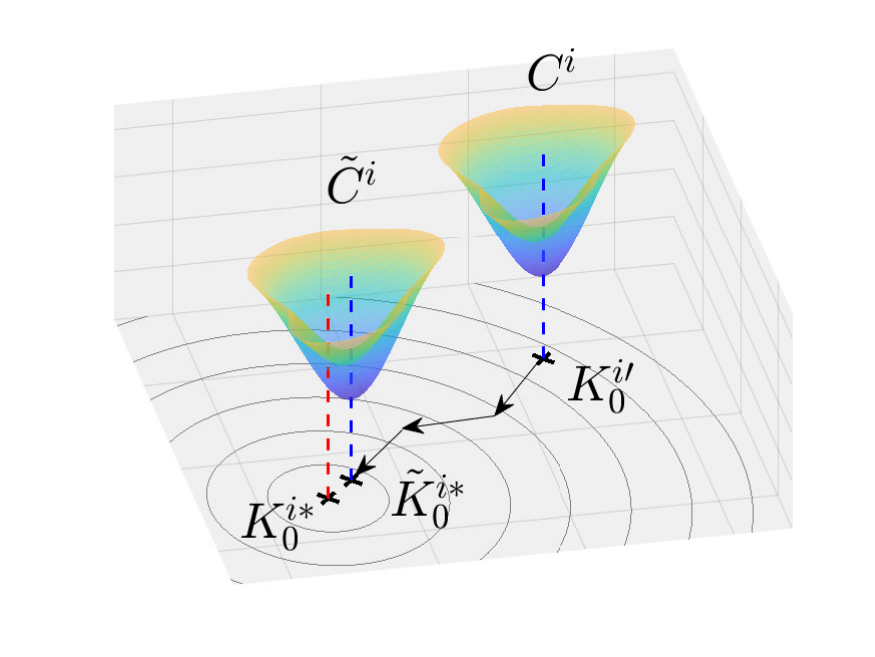}
         \caption{}
         \label{fig:agent_i_costs}
     \end{subfigure}
     \hfill
        \caption{(a) MRNPG Algorithm converges to the NE using Natural Policy Gradient (NPG) starting from $t=T-1$ and moving in a receding horizon manner (backwards-in-time). (b) Cost functions ($C^i$ and $\tilde{C}^i$) of the $i$th agent in a two-shot game ($T=2$). The arrows denote how NPG in the previous timestep $t=1$ converges to the \emph{true} NE.}
        \label{fig:MRNPG}
\end{figure}
}
    
    \noindent \textbf{Now we obtain an upper bound on the term $\lVert \tK^{i*}_t - K^{i*}_t \rVert$} using \eqref{eq:control_NE} and \eqref{eq:control_approx_NE}:
    \begin{align*}
        & \tK^{i*}_t - K^{i*}_t \nonumber \\
        & = -(R^i_t + (B^i_t)^\top P^i_{t+1} B^i_t)^{-1} (B^i_t)^\top P^i_{t+1} \tA^i_t + (R^i_t + (B^i_t)^\top P^{i*}_{t+1} B^i_t)^{-1} (B^i_t)^\top P^{i*}_{t+1} A^{i*}_t \nonumber \\
        & = -(R^i_t + (B^i_t)^\top P^i_{t+1} B^i_t)^{-1} (B^i_t)^\top P^i_{t+1} (\tA^i_t - A^{i*}_t) \nonumber \\
        & \hspace{2cm} - (R^i_t + (B^i_t)^\top P^i_{t+1} B^i_t)^{-1} (B^i_t)^\top (P^i_{t+1} - P^{i*}_{t+1}) A^{i*}_t \nonumber  \\
        & \hspace{2cm} - \big((R^i_t + (B^i_t)^\top P^i_{t+1} B^i_t)^{-1} - (R^i_t + (B^i_t)^\top P^{i*}_{t+1} B^i_t)^{-1} \big) (B^i_t)^\top P^{i*}_{t+1}  A^{i*}_t \nonumber \\
        & = -(R^i_t + (B^i_t)^\top P^i_{t+1} B^i_t)^{-1} (B^i_t)^\top P^i_{t+1} \sum_{j \neq i} B^j_t (\tK^{j*}_t - K^{j*}_t) \nonumber \\
        & \hspace{1cm} - (R^i_t + (B^i_t)^\top P^i_{t+1} B^i_t)^{-1} (B^i_t)^\top (P^i_{t+1} - P^{i*}_{t+1}) A^{i*}_t \nonumber \\
        & \hspace{1cm} - \big((R^i_t + (B^i_t)^\top P^i_{t+1} B^i_t)^{-1} - (R^i_t + (B^i_t)^\top P^{i*}_{t+1} B^i_t)^{-1} \big) (B^i_t)^\top P^{i*}_{t+1}  A^{i*}_t \nonumber \\
        & = -(R^i_t + (B^i_t)^\top P^i_{t+1} B^i_t)^{-1} (B^i_t)^\top P^i_{t+1} \sum_{j \neq i} B^j_t (\tK^{j*}_t - K^{j*}_t) \nonumber \\
        & \hspace{1cm} - (R^i_t + (B^i_t)^\top P^i_{t+1} B^i_t)^{-1} (B^i_t)^\top (P^i_{t+1} - P^{i*}_{t+1}) A^{i*}_t \nonumber \\
        & \hspace{1cm} + (R^i_t + (B^i_t)^\top P^i_{t+1} B^i_t)^{-1} (B^i_t)^{\top} \big( P^i_{t+1} - P^{i*}_{t+1} \big) \nonumber  \\
        & \hspace{3cm} B^i_t (R^i_t + (B^i_t)^\top P^{i*}_{t+1} B^i_t)^{-1} (B^i_t)^{-1} (B^i_t)^\top P^{i*}_{t+1}  A^{i*}_t \label{eq:tK-K*_inter}
    \end{align*}
    where the last equality is due to
    \begin{align*}
        & - (R^i_t + (B^i_t)^\top P^i_{t+1} B^i_t )^{-1} + (R^i_t + (B^i_t)^\top P^{i*}_{t+1} B^i_t )^{-1} \\
        & = - (R^i_t + (B^i_t)^\top P^i_{t+1} B^i_t )^{-1} (R^i_t + (B^i_t)^\top P^{i*}_{t+1} B^i_t ) (R^i_t + (B^i_t)^\top P^{i*}_{t+1} B^i_t )^{-1} \\
        & \hspace{2cm} + (R^i_t + (B^i_t)^\top P^i_{t+1} B^i_t )^{-1} (R^i_t + (B^i_t)^\top P^i_{t+1} B^i_t )(R^i_t + (B^i_t)^\top P^{i*}_{t+1} B^i_t )^{-1} \\
        & = (R^i_t + (B^i_t)^\top P^i_{t+1} B^i_t)^{-1} (B^i_t)^{\top} \big( P^i_{t+1} - P^{i*}_{t+1} \big) B^i_t (R^i_t + (B^i_t)^\top P^{i*}_{t+1} B^i_t)^{-1} (B^i_t)^{-1}
    \end{align*}
    Further analyzing \eqref{eq:tK-K*_inter} we get
    \begin{align}
        & (R^i_t + (B^i_t)^\top P^i_{t+1} B^i_t)(\tK^{i*}_t - K^{i*}_t) + (B^i_t)^\top P^i_{t+1} \sum_{j \neq i} B^j_t (\tK^{j*}_t - K^{j*}_t) \nonumber \\
        & = - (B^i_t)^\top (P^i_{t+1} - P^{i*}_{t+1}) (A^{i*}_t + B^i_t K^{i*}_t) \nonumber \\
        & \hspace{0.4cm} \underbrace{\begin{pmatrix}
            R^1_t + (B^1_t)^\top P^1_{t+1} B^1_t & (B^1_t)^\top P^1_{t+1} B^2_t & \hdots \\
            (B^2_t)^\top P^2_{t+1} B^1_t & R^2_t + (B^2_t)^\top P^2_{t+1} B^2_t & \hdots \\
            \vdots & \vdots & \ddots
        \end{pmatrix}}_{\tilde{\Phi}_{t+1}}
        \begin{pmatrix}
            \tK^{1*}_t - K^{1*}_t \\ \vdots \\ \tK^{N*}_t - K^{N*}_t
        \end{pmatrix} \nonumber \\
        & = 
        \begin{pmatrix}
            (B^1_t)^\top (P^{1*}_{t+1} - P^1_{t+1})(A^{1*}_t + B^1_t K^{1*}_t) \\ \vdots \\ (B^N_t)^\top (P^{N*}_{t+1} - P^N_{t+1})(A^{N*}_t + B^N_t K^{N*}_t)
        \end{pmatrix}  \label{eq:tK-K*}
    \end{align}
    where $\tilde\Phi^{-1}_{t+1}$ is guaranteed to exist as shown below. Using \eqref{eq:tK-K*} we now obtain an upper bound on $\max_{i \in [N]} \lVert \tK^{i*}_t - K^{i*}_t \rVert$:
    \begin{align}
        \max_{j \in [N]} \lVert \tK^{i*}_t - K^{i*}_t \rVert \leq \lVert \tilde\Phi^{-1}_{t+1} \rVert \lVert A^*_t \rVert \max_{j \in[N] } \lVert B^j_t \rVert_\infty \lVert P^{j*}_{t+1} - P^j_{t+1}\rVert_\infty \label{eq:tK-K*_1}
    \end{align}
    We also define
    \begin{align*}
         \hat\Phi_{t+1} := \tilde\Phi_{t+1} - \Phi^*_{t+1} = 
        \begin{pmatrix}
            (B^1_t)^\top (P^1_{t+1} - P^{1*}_{t+1}) \\ \vdots \\ (B^N_t)^\top (P^N_{t+1} - P^{N*}_{t+1})
        \end{pmatrix} \begin{pmatrix}
            B^1_t & \hdots & B^N_t
        \end{pmatrix}
    \end{align*}
    Next we characterize $\lVert \tilde\Phi^{-1}_{t+1} \rVert$:
    \begin{align}
        \lVert \tilde\Phi^{-1}_{t+1} \rVert & = \lVert (I + (\Phi^*_{t+1})^{-1} \hat\Phi_{t+1})^{-1} 
        (\Phi^*_{t+1})^{-1} \rVert \nonumber \\
        & \leq \lVert (\Phi^*_{t+1})^{-1} \rVert \sum_{k=0}^\infty \lVert \lVert (\Phi^*_{t+1})^{-1} \rVert^k \big( \max_{j \in [N]} \lVert B^j_t \rVert^2 \lVert P^i_{t+1} - P^{j*}_{t+1} \rVert \big)^k \nonumber \\
        & \leq 2 \lVert (\Phi^*_{t+1})^{-1} \rVert \leq 2 c^{(-1)}_\Phi \label{eq:tK-K*_2}
    \end{align}
    where the last inequality is possible due to the fact $\max_{j \in [N]} \lVert P^j_{t+1} - P^{j*}_{t+1} \rVert \leq 1/(2 c^{(-1)}_\Phi c^2_B)$ where $c^{(-1)}_\Phi = \max_{t \in \{0,\ldots,T-1\}} \lVert (\Phi^*_{t+1})^{-1} \rVert$ and $c_B = \max_{i \in [N],t \in \{0,\ldots,T-1\}} \lVert B^i_t \rVert$. Hence combining \eqref{eq:tK-K*_1}-\eqref{eq:tK-K*_2},
    \begin{align}
        \max_{j \in [N]} \lVert \tK^{i*}_t - K^{i*}_t \rVert \leq  \bc_1 \max_{j \in [N]}\lVert P^{j*}_{t+1} - P^j_{t+1} \rVert \label{eq:tKi_K*}
    \end{align}
    where { $\bc_1 := 2 c^{(-1)}_\Phi c_B c^*_A $ and $c^*_A = \lVert A^*_t \rVert$}. 
    
    \noindent \textbf{Now we characterize the difference $P^i_t - P^{i*}_t = P^i_t - \tP^i_t + \tP^i_t - P^{i*}_t$.} First we can characterize $\tP^i_t - P^{i*}_t$ using \eqref{eq:Pi*t} and \eqref{eq:tPit}:
    \begin{align*}
        & \tP^i_t - P^{i*}_t = (\tA^i_t)^\top P^i_{t+1} ( \tA^i_t + B^i_t \tK^{i*}_t) - (A^{i*}_t)^\top P^{i*}_{t+1} ( A^{i*}_t + B^i_t K^{i*}_t) \nonumber \\
        & = (\tA^i_t - A^{i*}_t)^\top P^i_{t+1} ( \tA^i_t + B^i_t \tK^{i*}_t) + (A^{i*}_t)^\top (P^i_{t+1} - P^{i*}_{t+1}) (\tA^i_t + B^i_t \tK^{i*}_t) \nonumber \\
        & \hspace{6cm} + (A^{i*}_t)^\top P^{i*}_{t+1} ( \tA^i_t + B^i_t \tK^{i*}_t - A^{i*}_t - B^i_t K^{i*}_t) \nonumber \\
        & = \bigg(\sum_{j \neq i} B^j_t (\tK^{j*}_t - K^{j*}_t) \bigg)^\top P^i_{t+1} ( \tA^i_t + B^i_t \tK^{i*}_t) + (A^{i*}_t)^\top (P^i_{t+1} - P^{i*}_{t+1}) (\tA^i_t + B^i_t \tK^{i*}_t) \nonumber \\
        & \hspace{6cm} + (A^{i*}_t)^\top P^{i*}_{t+1} \sum_{j =1}^N B^j_t (\tK^{j*}_t - K^{j*}_t)
    \end{align*}
    Using this characterization we bound $\lVert \tP^i_t - P^{i*}_t \rVert$ using the AM-GM, $\lVert \tP^i_t - P^{i*}_t \rVert$
    \begin{align}
        & \leq 2 \lVert A^*_t \rVert \lVert P^{i*}_{t+1} \rVert c_B \sum_{j=1}^N \lVert \tK^{j*}_t - K^{j*}_t \rVert + \frac{c^4_B}{2} \bigg( \sum_{j=1}^N \lVert \tK^{j*}_t - K^{j*}_t \rVert \bigg)^4 \label{eq:tPi_Pi*} \\
        &  + \big(\lVert A^*_t \rVert/2 + \lVert P^{i*}_{t+1} \rVert c_B + \lVert A^{i*}_t \rVert/2 \big) c_B \bigg(\sum_{j=1}^N \lVert \tK^{j*}_t - K^{j*}_t \rVert \bigg)^2  \nonumber \\
        & + \big(\lVert A^*_t \rVert/2 + 1/2 + \lVert A^{i*}_t \rVert/2 \big) \lVert P^i_{t+1} - P^{i*}_{t+1} \rVert^2 + \lVert A^{i*}_t \rVert \lVert A^*_t \rVert \lVert P^i_{t+1} - P^{i*}_{t+1} \rVert \nonumber \\
        & \leq \underbrace{\big( 2 c^*_A c^*_P c_B + (c^*_A /2 + c^*_P c_B + c^{i*}_A/2 ) c_B + c^4_B/2 \big)}_{\bc_2} \sum_{j=1}^N \lVert \tK^{j*}_t - K^{j*}_t \rVert \nonumber \\
        & \hspace{4cm} + \underbrace{c^*_A/2 + 1/2 + c^{i*}_A/2 + c^{i*}_A c^*_A}_{\bc_3} \lVert P^i_{t+1} - P^{i*}_{t+1} \rVert \nonumber \\
        & \leq \bc_2 N \max_{j \in [N]} \lVert \tK^{j*}_t - K^{j*}_t \rVert + \bc_3 \lVert P^i_{t+1} - P^{i*}_{t+1} \rVert \nonumber 
    \end{align}
    where { $c^{i*}_A := \max_{t \in \{0,\ldots,T-1\}} \lVert A^{i*}_t \rVert, c^*_P := \max_{i \in [N], t \in \{0,\ldots,T-1\}} \lVert P^{i*}_{t+1} \rVert$}, and the last inequality follows from the fact that { $\lVert P^i_{t+1} - P^{i*}_{t+1} \rVert, \lVert \tK^{j*}_t - K^{j*}_t \rVert \leq 1/N$}. Similarly $P^i_t - \tP^i_t$ can be decomposed using \eqref{eq:tPit} and \eqref{eq:Pit}:
    \begin{align*}
        P^i_t - \tP^i_t & = (K^i_t)^\top R^i_t K^i_t + \bigg(A_t + \sum_{j=1}^N B^j_t K^j_t \bigg)^\top P^i_{t+1} \bigg(A_t + \sum_{j=1}^N B^j_t K^j_t \bigg) \\
        & \hspace{1cm} - \bigg[ (\tK^{i*}_t)^\top R^i_t \tK^{i*}_t + \bigg(A_t + \sum_{j=1}^N B^j_t \tK^{i*}_t \bigg)^\top P^i_{t+1} \bigg(A_t + \sum_{j=1}^N B^{i}_t \tK^{j*}_t \bigg) \bigg].
    \end{align*}
    We start by analyzing the quadratic form $x^\top P^i_t x$:
    \begin{align*}
        & x^\top P^i_t x = x^\top \Bigg[ Q^i_t + (K^i_t)^\top R^i_t K^i_t + \bigg(A_t + \sum_{j=1}^N B^j_t K^j_t \bigg)^\top P^i_{t+1} \bigg(A_t + \sum_{j=1}^N B^j_t K^j_t \bigg) \bigg] x \\
        & = x^\top \bigg[ (K^i_t)^\top \big(R^i_t + (B^i_t)^\top P^i_{t+1} B^i_t \big) K^i_t + 2 (B^i_t K^i_t)^\top P^i_{t+1} \bigg( A_t + \sum_{j \neq i} B^j_t K^j_t \bigg) + Q^i_t \\
        & \hspace{4cm} + \bigg( A_t + \sum_{j \neq i} B^j_t K^j_t \bigg)^\top P^i_{t+1} \bigg( A_t + \sum_{j \neq i} B^j_t K^j_t \bigg)\bigg] x \\
        & = x^\top \bigg[ (K^i_t)^\top \big(R^i_t + (B^i_t)^\top P^i_{t+1} B^i_t \big) K^i_t + 2 (B^i_t K^i_t)^\top P^i_{t+1} \bigg( A_t + \sum_{j \neq i} B^j_t \tK^{j*}_t \bigg) + Q^i_t \\
        & \hspace{0.4cm} + 2 (B^i_t K^i_t)^\top P^i_{t+1} \sum_{j \neq i} B^j_t (K^j_t - \tK^{j*}_t)  + \bigg( A_t + \sum_{j \neq i} B^j_t \tK^{j*}_t \bigg)^\top P^i_{t+1} \bigg( A_t + \sum_{j \neq i} B^j_t \tK^{j*}_t \bigg) \\
        & \hspace{0.4cm} + \bigg( \sum_{j \neq i} B^j_t (K^j_t - \tK^{j*}_t) \bigg)^\top P^i_{t+1} \bigg( \sum_{j \neq i} B^j_t (K^j_t - \tK^{j*}_t) \bigg) \\
        & \hspace{0.4cm} + 2 \bigg( A_t + \sum_{j \neq i} B^j_t \tK^{j*}_t \bigg) P^i_{t+1} \bigg( \sum_{j \neq i} B^j_t (K^j_t - \tK^{j*}_t) \bigg) \bigg] x
    \end{align*}
    Completing squares, we get
    \begin{align}
        \label{eq:xPix} & x^\top P^i_t x = x^\top \Bigg[ (K^i_t - \tK^{i*}_t)^\top \big(R^i_t + (B^i_t )^\top P^i_{t+1} B^i_t \big) (K^i_t - \tK^{i*}_t) \\
        & \hspace{0.4cm} + \bigg( A_t + \sum_{j \neq i} B^j_t \tK^{j*}_t \bigg)^\top P^i_{t+1} \bigg( A_t + \sum_{j \neq i} B^j_t \tK^{j*}_t \bigg) + Q^i_t \nonumber \\
        & \hspace{0.4cm} - \bigg( A_t + \sum_{j \neq i} B^j_t \tK^{j*}_t \bigg)^\top P^i_{t+1} B^i_t  \big(R^i_t + (B^i_t )^\top P^i_{t+1} B^i_t \big)^{-1} \nonumber \\
        & \hspace{5cm}(B^i_t)^\top P^i_{t+1} \bigg( A_t + \sum_{j \neq i} B^j_t \tK^{j*}_t \bigg) \nonumber 
    \end{align}
    \begin{align*}
        & \hspace{0.4cm} + \bigg(2 \bigg(A_t + \sum_{j = 1}^N B^j_t \tK^{j*}_t \bigg) + \sum_{j=1}^N B^j_t (K^j_t - \tK^{j*}_t ) \bigg)^\top \nonumber  \\
        & \hspace{5cm}P^i_{t+1} \bigg( \sum_{j \neq i} B^j_t (K^j_t - \tK^{j*}_t \bigg) \bigg] x. \nonumber
    \end{align*}
    Now we take a look at the quadratic $ x^\top \tP^i_t x$:
    \begin{align}
        & x^\top \tP^i_t x \nonumber \\
        & = x^\top \bigg[ Q^i_t + (\tK^{i*}_t)^\top R^i_t \tK^{i*}_t + \bigg(A_t + \sum_{j=1}^N B^j_t \tK^{j*}_t \bigg)^\top P^i_{t+1} \bigg(A_t + \sum_{j=1}^N B^{j}_t \tK^{j*}_t \bigg) \bigg] x \nonumber \\ 
        & = x^\top \bigg[ (\tK^{i*}_t)^\top \big( R^i_t + (B^i_t)^\top P^i_{t+1} B^i_t \big) \tK^{i*}_t + 2 (B^i_t \tK^{i*}_t)^\top P^i_{t+1} \bigg(A_t + \sum_{j \neq i} B^j_t \tK^{j*}_t \bigg) \nonumber \\
        & \hspace{0.4cm} + Q^i_t + \bigg(A_t + \sum_{j \neq i} B^j_t \tK^{j*}_t \bigg)^\top P^i_{t+1} \bigg(A_t + \sum_{j \neq i}^N B^{j}_t \tK^{j*}_t \bigg) \bigg] x \nonumber  \\
        \label{eq:xtPix}  & = x^\top \bigg[ \bigg(A_t + \sum_{j \neq i} B^j_t \tK^{j*}_t \bigg)^\top P^i_{t+1} \bigg(A_t + \sum_{j \neq i}^N B^{j}_t \tK^{j*}_t \bigg) + Q^i_t \\
        & \hspace{0.4cm}  - \bigg( A_t + \sum_{j \neq i} B^j_t \tK^{j*}_t \bigg)^\top P^i_{t+1} B^i_t  \big(R^i_t + (B^i_t )^\top P^i_{t+1} B^i_t \big)^{-1} \nonumber \\
        & \hspace{6cm}(B^i_t)^\top P^i_{t+1} \bigg( A_t + \sum_{j \neq i} B^j_t \tK^{j*}_t \bigg) \bigg] x. \nonumber 
    \end{align}
    Using \eqref{eq:xPix} and \eqref{eq:xtPix}, we get
    \begin{align*}
        & x^\top (P^i_t - \tP^i_t) x = x^\top \Bigg[ (K^i_t - \tK^{i*}_t)^\top \big(R^i_t + (B^i_t )^\top P^i_{t+1} B^i_t \big) (K^i_t - \tK^{i*}_t) \nonumber \\
        & \hspace{0.4cm} + \bigg(2 \bigg(A_t + \sum_{j = 1}^N B^j_t \tK^{j*}_t \bigg) + \sum_{j=1}^N B^j_t (K^j_t - \tK^{j*}_t ) \bigg)^\top P^i_{t+1} \bigg( \sum_{j \neq i} B^j_t (K^j_t - \tK^{j*}_t \bigg) \bigg] x \\
        & = x^\top \Bigg[ (K^i_t - \tK^{i*}_t)^\top \big(R^i_t + (B^i_t )^\top P^i_{t+1} B^i_t \big) (K^i_t - \tK^{i*}_t) \nonumber \\
        & \hspace{0.4cm} + \bigg(2 \bigg(A_t + \sum_{j = 1}^N B^j_t K^{j*}_t \bigg) + 2 \sum_{j=1}^N B^j_t (\tK^{j*}_t - K^{j*}_t ) + \sum_{j=1}^N B^j_t (K^j_t - \tK^{j*}_t ) \bigg)^\top \\
        & \hspace{7cm} P^i_{t+1} \bigg( \sum_{j \neq i} B^j_t (K^j_t - \tK^{j*}_t \bigg) \bigg] x
    \end{align*}
    Using this characterization, we bound $\lVert P^i_t - \tP^i_t \rVert$:
    \begin{align*}
        & \lVert P^i_t - \tP^i_t \rVert \leq \big( \lVert R^i_t + (B^i_t)^\top P^{i*}_{t+1} B^i_t \rVert + \lVert (B^i_t)^\top \big( P^{i*}_{t+1} - P^{i*}_{t+1} \big) B^i_t \rVert \big) \lVert K^i_t - \tK^{i*}_t \rVert
    \end{align*}
    \begin{align*}
        & + (2 c^{i*}_A + 2 c_B \sum_{j=1}^N \big( \lVert \tK^{j*}_t - K^*_t \rVert + \lVert K^j_t - \tK^{j*}_t \rVert \big) \big( \lVert P^{i*}_{t+1} \rVert + \lVert P^i_{t+1} - P^{i*}_{t+1} \rVert \big) \\
        & \hspace{7cm}c_B \sum_{j=1}^N \lVert K^j_t - \tK^{j*}_t \rVert
    \end{align*}
    As before, assuming { $\lVert P^i_{t+1} - P^{i*}_{t+1} \rVert, \lVert \tK^{j*}_t - K^{j*}_t \rVert \leq 1/N$},
    \begin{align}
        \lVert P^i_t - \tP^i_t \rVert & \leq \underbrace{\big( \bc^* + c^2_B/2 + 2 c^2_B (c^*_P + 1) + 2 c^{i*}_A \big)}_{\bc_4} \sum_{j=1}^N \lVert K^j_t - \tK^{j*}_t \rVert  \nonumber \\
        & \hspace{1cm} + \underbrace{\big( c^2_B/2 + c^{i*}_A + c_B \big)}_{\bc_5} \lVert P^i_{t+1} - P^{i*}_{t+1} \rVert + \underbrace{c_B (c^*_P + 1)}_{\bc_6} \sum_{j=1}^N \lVert \tK^{j*}_t - K^{j*}_t \rVert \nonumber \\
        & \leq \bc_4 N \Delta_t + \bc_5 \lVert P^i_{t+1} - P^{i*}_{t+1} \rVert + \bc_6 \sum_{j=1}^N \lVert \tK^{j*}_t - K^{j*}_t \rVert \label{eq:Pi_tPi}
    \end{align}
    where { $\bc^* := \max_{i \in [N], t \in \{0,\ldots,T-1\}} \lVert R^i_t + (B^i_t)^\top P^{i*}_{t+1} B^i_t \rVert$}. Let us define  $e^K_t := \max_{j \in [N]} \lVert K^j_t - K^{j*}_t \rVert, e^P_t := \max_{j \in [N]} \lVert P^j_t - P^{j*}_t \rVert$. Using \eqref{eq:tKi_K*}, \eqref{eq:tPi_Pi*} and \eqref{eq:Pi_tPi} we get
    \begin{align*}
        e^K_t & \leq \bc_1 e^P_{t+1} + \Delta_t \\
        e^P_t & \leq (\bc_2 + \bc_6) N e^K_t + (\bc_3 + \bc_5) e^P_{t+1} + \bc_4 N \Delta \\
        & \leq \underbrace{(\bc_1 (\bc_2 + \bc_6) N + \bc_3 + \bc_5)}_{\bc_7} e^P_{t+1} + \underbrace{(\bc_2 + \bc_4 + \bc_6)N}_{\bc_8} \Delta_t
    \end{align*}
    Using this recursive definition, we deduce
    \begin{align*}
        e^P_t & \leq \bc^{T-t}_7 e^P_T + \bc_8 \sum_{s=0}^{T-1} \bc^s_7  \Delta_{t+s} = \bc_8 \sum_{s=0}^{T-1} \bc^s_7  \Delta_{t+s}
    \end{align*}
    Hence if $\Delta = \Os(\epsilon)$, in particular $\Delta_t \leq \epsilon/(2 \bc_1 \bc^t_7 \bc_8 T)$, then $e^P_t \leq \epsilon/2 \bc_1$ and $e^K_t \leq \epsilon$ for $t \in \{ 0, \ldots, T-1 \}$.

\bibliographystyle{unsrt}
\bibliography{references}

\newpage

\begin{center}
    \Large \bf Supplementary Materials
\end{center}

\section{Adapted Open-Loop Analysis of GS-MFTG}
\label{sec:open_loop_analysis}
In this section we characterize the Adapted Open-Loop Nash equilibrium (in short OLNE \cite{moon2018risk}), which is shown to satisfy the necessary conditions for Nash Equilibrium (Theorem \ref{thm:OLNE}). The structure of the OLNE inspires a  decomposition which simplifies the computation of the Nash equilibrium (NE) and enables us to establish its existence and uniqueness (Theorem \ref{thm:CLNE}). The required conditions for this result are  similar to those of LQ Games. We consider a set of controls referred to as \emph{adapted open-loop} $\Us^i_o$ \cite{moon2018risk} where control action at time $t$, $u^i_t$, is adapted to the noise process $\{\omega^0_0, \omega_0, \dots, \omega^0_t, \omega_t \}$. If $\Us^i = \Us^i_o$ then the corresponding NE is called \emph{Adapted Open-Loop Nash Equilibrium} (in short OLNE). Below we characterize the OLNE and then prove that the OLNE is a broader class of NE than the NE. This allows a decomposition of the GS-MFTG which simplifies the characterization of the NE. This result is distinct from other results in literature such as LQ-MFGs and Zero-Sum MFTGs, due to the general-sum competitive-cooperative nature of the problem.

\begin{theorem} \label{thm:OLNE}
(1) All OLNE policies $(\mcu^{i*})_{i \in [N]}$ are linear in the adjoint processes $(p^i_t)_{i \in [N], 0 \leq t < T}$ \eqref{eq:FB} and \eqref{eq:FB_inter},
\begin{align*}
	\bu^{i*}_t = - (L^{i}_t + \bL^{i}_t) \bp^i_t, \hspace{0.2cm} u^{i*}_t = - L^{i}_t p^i_t + \bL^{i}_t \bp^i_t,
\end{align*}
where $\forall i \in [N], t \in \{0,\ldots.T-1\}$ the adjoint processes depend on solutions of Riccati equations \eqref{eq:OLNE_Riccati_bar} and \eqref{eq:OLNE_Riccati}. 

(2) Furthermore, the OLNE's feedback representation is linear in $(x_t - \bx_t)$ and $\bx_t$,
\begin{align} \label{eq:OLNE_back_rep}
	u^{i}_t = -L^i_t P^i_t(x_t - \bx_t) - (L^i_t + \bL^i_t) \bP^i_t \bx_t,
\end{align}
$\forall i \in [N], t \in \{0,\ldots.T-1\}$.
 
(3) Finally, every OLNE satisfies the necessary conditions for NE.
\end{theorem}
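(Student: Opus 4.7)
The plan is to apply the discrete-time stochastic Pontryagin minimum principle, adapted to the McKean--Vlasov mean-field setting, to each player's optimization problem. Fixing $\mcu^{-i*}$, player $i$ faces a mean-field-type control problem whose Hamiltonian $H^i_t$ is the sum of the running cost in \eqref{eq:gen_agent_cost} and the inner product of the adjoint $p^i_t$ with the state drift in \eqref{eq:gen_agent_dyn}. I would derive: (a) the forward state dynamics; (b) a backward adjoint equation for $p^i_t$ from $\partial_x H^i_t$; and (c) the first-order stationarity condition $\partial_{u^i_t} H^i_t = 0$. Because the cost depends on $\bu^i_t = \EE[u^i_t \mid (\omega^0_s)_{s \leq t}]$ and the drift on $\bx_t$, taking conditional expectation with respect to the common-noise filtration in the backward equation yields a companion backward equation for $\bp^i_t$. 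The resulting coupled forward--backward system determines $(p^i_t, \bp^i_t)$, and the first-order condition---affine in $u^i_t$ and $\bu^i_t$---decouples into one equation for $u^i_t - \bu^i_t$ and one for $\bu^i_t$, producing the open-loop expressions $u^{i*}_t = -L^i_t p^i_t + \bL^i_t \bp^i_t$ and $\bu^{i*}_t = -(L^i_t + \bL^i_t)\bp^i_t$ claimed in part (1), with $L^i_t, \bL^i_t$ assembled from $R^i_t, \bR^i_t, B^i_t, \bB^i_t$.

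For part (2), I would convert the open-loop representation into feedback form via a Riccati ansatz. Guided by the decomposition \eqref{eq:dyn_decomp}--\eqref{eq:cost_decomp}, I posit $p^i_t = P^i_t(x_t - \bx_t) + \bP^i_t \bx_t$ and substitute into the forward--backward system. Matching coefficients of the mean-zero component $x_t - \bx_t$ yields a discrete Riccati-type recursion for $P^i_t$, and matching coefficients of $\bx_t$ yields an analogous recursion for $\bP^i_t$; both terminate at the quadratic terminal cost matrices $Q^i_T$ and $\bQ^i_T$. Substituting the ansatz back into the open-loop formula from part (1) produces the feedback law \eqref{eq:OLNE_back_rep}.

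For part (3), I would match the Riccati recursions for $P^i_t$ and $\bP^i_t$ obtained in part (2) against the coupled Riccati equations \eqref{eq:Riccati_NE} that characterize the closed-loop NE in Theorem \ref{thm:CLNE}. Because these recursions coincide up to identification of the feedback gains, the OLNE feedback form automatically satisfies the Hamiltonian stationarity conditions that any closed-loop NE must satisfy---these are precisely the necessary conditions obtained by varying each player's strategy in the feedback class. Hence every OLNE is a candidate NE, justifying the use of OLNE as a stepping stone toward the existence/uniqueness result of Theorem \ref{thm:CLNE}.

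The main obstacle will be the careful bookkeeping of conditional expectations arising from the McKean--Vlasov structure: the adjoint equation for $p^i_t$ inherits additional terms from the $\bx_t$ and $\bu^i_t$ dependencies in both the drift and the cost, which must be split into mean and deviation components by conditioning on the common-noise filtration. A subsidiary technical point is that solving the first-order condition for $u^i_t$ in closed form requires invertibility of matrices analogous to $\Phi_t$ and $\bPhi_t$ from Theorem \ref{thm:CLNE}, and verifying that the Riccati ansatz remains consistent across time steps relies on the block-diagonal structure of $Q^i_t$ and $\bQ^i_t$ induced by the team-decoupled cost \eqref{eq:gen_agent_cost}.
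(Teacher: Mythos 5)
Your parts (1) and (2) follow essentially the same route as the paper: a discrete-time stochastic minimum principle with an $\Fs_t$-adapted adjoint $p^i_t$, a Gateaux-derivative stationarity condition that is split into mean and deviation components by conditioning on the common-noise filtration, and a Riccati ansatz (your combined $p^i_t = P^i_t(x_t-\bx_t)+\bP^i_t\bx_t$ is equivalent to the paper's pair $\bp^i_t=\bP^i_t\bx_t$ and $p^i_t-\bp^i_t=P^i_t(x_t-\bx_t)$). Two smaller omissions there: the paper also proves that the stationarity conditions are \emph{sufficient} for OLNE via a second-order expansion exploiting the positive-definiteness of the Hessian of $h^i_t$ in $(u^i_t,\bu^i_t)$, whereas you only address necessity; and no invertibility of $\Phi_t$-type matrices is needed for part (1), since the first-order condition is solved player-by-player using only $R^i_t,\bR^i_t\succ 0$ (the gains $L^i_t,\bL^i_t$ in \eqref{eq:L_i_bL_i} involve only $(R^i_t)^{-1}$ and $(\bR^i_t)^{-1}$, not a coupled block system).

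The genuine gap is in part (3). You propose to conclude by matching the OLNE Riccati recursions against the closed-loop NE Riccati equations \eqref{eq:Riccati_NE} and assert that they ``coincide up to identification of the feedback gains.'' They do not. The OLNE recursions \eqref{eq:OLNE_Riccati_bar}--\eqref{eq:OLNE_Riccati} have the non-symmetric form $P^i_t = (A_t^\top P^i_{t+1}+2Q^i_t)(A_t-\sum_j B^j_t L^j_t P^j_t)$, while \eqref{eq:Riccati_NE} has the symmetric feedback form $P^{i*}_t = L_t^\top P^{i*}_{t+1}L_t + (K^{i*}_t)^\top R^i_t K^{i*}_t + Q^i_t$; in general-sum dynamic games these produce different gains, and the paper itself notes (citing Chapter 6.2 of \cite{bacsar1998dynamic}) that open-loop and feedback NE are unrelated in the deterministic case. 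The paper's actual argument recomputes the Gateaux derivative of $J^i$ with the opponents' controls $u^j_t$, $j\neq i$, treated as feedback functions of $(x_t,\bx_t)$; this injects the extra terms $\sum_{j\neq i}\big(\tfrac{\delta u^j_t}{\delta x_t}B^j_t+\tfrac{\delta \bu^j_t}{\delta x_t}\bB^j_t\big)^\top p^i_t$ into the backward adjoint relation \eqref{eq:FB_adjoint}, observes that the resulting stationarity condition has the same form as in Proposition \ref{prop:nec_cond_OLNE}, and concludes because every solution of the OLNE adjoint equation \eqref{eq:costate} is compatible with \eqref{eq:FB_adjoint}. Without this adjoint-level argument (or some substitute), your Riccati-matching step would fail, and with it the claim that every OLNE satisfies the necessary conditions for NE.
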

In the proof we first develop necessary conditions for OLNE using the Stochastic Maximum Principle and these conditions are shown to be sufficient due to the quadratic structure of the cost. The OLNE is then shown to satisfy the necessary conditions for NE. This fact coupled with the feedback structure of OLNE \eqref{eq:OLNE_back_rep} suggests that the NE will be a function of the state deviation from mean-field, $(x_t - \bx_t)$, and the mean-field $\bx_t$. 

\begin{proof}
We will use the Stochastic Minimum Principle to first characterize the necessary conditions for the OLNE in terms of an adjoint process. We then study the adjoint process and provide sufficient conditions for its  existence and uniqueness in terms of solutions to a set of Riccati equations. We then prove that the necessary conditions for the OLNE are also sufficient due to the quadratic nature of the cost. This analysis is a generalization of the Zero-Sum MFTG adapted open-loop analysis in \cite{carmona2021linear} to the General-Sum setting. Additionally we show that OLNE satisfies the necessary conditions of the NE, by first characterizing the necessary conditions for NE and then showing that every OLNE satisfies these conditions.

\textbf{We first characterize the necessary conditions for OLNE.} Let us write down the expression for Adapted Open-Loop Nash Equilibrium (OLNE). An OLNE is a tuple of policies $(\mcu^{i*})_{i \in [N]}$ (where each $\mcu^{i*} \in \Us^i_o$ is measurable with respect to the noise process) such that,
	\begin{align*}
		J^i(\mcu^{i*},\mcu^{-i*}) \leq J^i (\mcu^{i},\mcu^{-i*}), \hspace{0.2cm} \forall \mcu^i \in \Us^i_o.
	\end{align*}
First we find the equilibrium condition of the OLNE. Let us first define $\zeta_t = (x_t, \bx_t, (u^i_t, \bu^i_t)_{i \in [N]})$ and
\begin{align*}
	b_t(\zeta_t) = (A_t - I)x_t + \bA_t \bx_t + \sum_{j=1}^N (B^i_t u^i_t + \bB^i_t \bu^i_t) = x_{t+1} -x_t -\omega^0_t - \omega_t
\end{align*}
for $t \in \{0,\ldots, T-1\}$. Now we introduce the adjoint process $p^i = (p^i_t)_{0 \leq t < T}$ which is an $\Fs_t$-adapted process and define-
\begin{align*}
	& h^i_t(\zeta_t, p^i_t) = b_t(\zeta_t)^\top p^i_t + c^i_t(\zeta_t) \\
	& = b_t(\zeta_t)^\top p^i_t + (x_t - \bx_t)^\top Q^i_t (x_t - \bx_t) + \bx_t^\top \bQ^i_t \bx_t + (u^i_t - \bu^i_t)^\top R^i_t (u^i_t - \bu^i_t) + (\bu^i_t)^\top \bR^i_t \bu^i_t
\end{align*}
Now we evaluate the form of derivatives of $h^i_t$ and convexity of $h^i_t$ with respect to $(u^i, \bu^i)$.
\begin{lemma} \label{lem:part_der}
	If $R^i_t \succ 0$ and $\bR^i_t \succ 0$, then $h^i_t$ is convex with respect to $(u^i, \bu^i)$ and
	\begin{align*}
		\delta_{x_t} h^i_t(\zeta_t, p^i_t) & = (p^i_t)^\top (A_t - I) + 2 (x_t - \bx_t )^\top Q^i_t, \\
		\delta_{\bx_t} h^i_t(\zeta_t, p^i_t) & = (p^i_t)^\top \bA_t - 2 (x_t - \bx_t)^\top Q^i_t + 2 \bx^\top_t \bQ^i_t, \\
		\delta_{u^i_t} h^i_t(\zeta_t, p^i_t) & = (p^i_t)^\top B^i_t + 2(u^i_t - \bu^i_t)^\top R^i_t, \\
		\delta_{u^j_t} h^i_t(\zeta_t, p^i_t) & = (p^i_t)^\top B^j_t, \hspace{0.4cm} j \neq i,\\
		\delta_{\bu^i_t} h^i_t(\zeta_t, p^i_t) & = (p^i_t)^\top \bB^i_t - 2(u^i_t - \bu^i_t)^\top R^i_t + 2 (\bu^i_t)^\top \bR^i_t, \\
		\delta_{\bu^j_t} h^i_t(\zeta_t, p^i_t) & = (p^i_t)^\top \bB^j_t,  \hspace{0.4cm} j \neq i.
	\end{align*}
\end{lemma}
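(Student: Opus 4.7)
The plan is to verify both the derivative formulas and the convexity by direct calculation on the explicit decomposition $h^i_t(\zeta_t, p^i_t) = b_t(\zeta_t)^\top p^i_t + c^i_t(\zeta_t)$, where $b_t$ is affine in $\zeta_t$ (linear in each of $x_t, \bx_t, u^j_t, \bu^j_t$) and $c^i_t$ is quadratic in $\zeta_t$. Since $p^i_t$ is held fixed when differentiating with respect to components of $\zeta_t$, the two pieces can be differentiated independently and summed.

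First I would compute the derivatives of $b_t(\zeta_t)^\top p^i_t$. Reading off from the definition of $b_t$, differentiation with respect to $x_t$ gives $(p^i_t)^\top(A_t-I)$, with respect to $\bx_t$ gives $(p^i_t)^\top \bA_t$, with respect to $u^j_t$ gives $(p^i_t)^\top B^j_t$, and with respect to $\bu^j_t$ gives $(p^i_t)^\top \bB^j_t$, for every $j \in [N]$. Next I would differentiate $c^i_t$ term by term. The term $(x_t-\bx_t)^\top Q^i_t (x_t-\bx_t)$ contributes $2(x_t-\bx_t)^\top Q^i_t$ to $\delta_{x_t}$ and the opposite-sign $-2(x_t-\bx_t)^\top Q^i_t$ to $\delta_{\bx_t}$; $\bx_t^\top \bQ^i_t \bx_t$ adds $2\bx_t^\top \bQ^i_t$ to $\delta_{\bx_t}$; $(u^i_t-\bu^i_t)^\top R^i_t (u^i_t-\bu^i_t)$ contributes $2(u^i_t-\bu^i_t)^\top R^i_t$ to $\delta_{u^i_t}$ and $-2(u^i_t-\bu^i_t)^\top R^i_t$ to $\delta_{\bu^i_t}$; and $(\bu^i_t)^\top \bR^i_t \bu^i_t$ adds $2(\bu^i_t)^\top \bR^i_t$ to $\delta_{\bu^i_t}$. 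Note that derivatives of $c^i_t$ with respect to $u^j_t$ or $\bu^j_t$ for $j \neq i$ vanish, since $c^i_t$ only penalizes player $i$'s own control. Combining the two contributions yields exactly the six formulas claimed.

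For the convexity statement, the observation is that $b_t(\zeta_t)^\top p^i_t$ is linear in $(u^i_t, \bu^i_t)$, so the Hessian of $h^i_t$ in $(u^i_t, \bu^i_t)$ coincides with that of the quadratic form $q(u^i_t,\bu^i_t) := (u^i_t-\bu^i_t)^\top R^i_t (u^i_t-\bu^i_t) + (\bu^i_t)^\top \bR^i_t \bu^i_t$. I would then rewrite $q$ directly as a manifest sum of two PSD quadratic forms, namely $q(v_1,v_2) = (v_1 - v_2)^\top R^i_t (v_1 - v_2) + v_2^\top \bR^i_t v_2$, which is nonnegative under $R^i_t, \bR^i_t \succ 0$ and vanishes only at $v_2 = 0$ and $v_1 = v_2$, i.e.\ at the origin. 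This establishes that $q$ is strictly convex, hence so is $h^i_t(\zeta_t, p^i_t)$ as a function of $(u^i_t,\bu^i_t)$.

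The calculation has no real obstacle; the only point requiring care is bookkeeping the sign flips produced by differentiating the consensus-error terms $(x_t-\bx_t)^\top Q^i_t (x_t-\bx_t)$ and $(u^i_t-\bu^i_t)^\top R^i_t (u^i_t-\bu^i_t)$ with respect to the mean variables $\bx_t$ and $\bu^i_t$, which flip to the negative of the corresponding derivatives with respect to $x_t$ and $u^i_t$. The sum-of-squares rewrite for convexity is also the cleanest route, as it sidesteps needing to explicitly reason about positive-definiteness of the $2\times 2$ block Hessian $\bigl(\begin{smallmatrix} 2R^i_t & -2R^i_t \\ -2R^i_t & 2R^i_t+2\bR^i_t \end{smallmatrix}\bigr)$ via Schur complements.
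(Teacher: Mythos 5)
Your proposal is correct and follows essentially the same route as the paper: the six formulas are obtained by direct termwise differentiation of $b_t(\zeta_t)^\top p^i_t + c^i_t(\zeta_t)$, and convexity in $(u^i_t,\bu^i_t)$ reduces to positive definiteness of the block Hessian $\bigl(\begin{smallmatrix} 2R^i_t & -2R^i_t \\ -2R^i_t & 2(R^i_t+\bR^i_t) \end{smallmatrix}\bigr)$, which the paper simply asserts and which your sum-of-squares rewrite of the quadratic part verifies cleanly.
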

\begin{proof}
	The partial derivatives follow by direct computation and the convexity is due to the fact that
	\begin{align*}
		\delta^2_{(u^i_t, \bu^i_t),(u^i_t, \bu^i_t)} h^i_t(\zeta_t, p^i_t) = \begin{pmatrix}
			2 R^i_t & -2 R^i_t \\ -2 R^i_t & \hspace{0.1cm} 2(R^i_t + \bR^i_t)
		\end{pmatrix} \succ 0.
	\end{align*}
 This completes the proof.
\end{proof}
The convexity property will be used later to characterize the sufficient conditions for the OLNE. We hypothesize that the adjoint process for each player follows the backwards difference equations
\begin{align} \label{eq:costate}
	p^i_t = \EE \big[ A^\top_{t+1} p^i_{t+1} + \bA^\top_{t+1} \bp^i_{t+1} + 2 Q^i_{t+1} (x_{t+1} - \bx_{t+1})  + 2 \bQ^i_{t+1} \bx_{t+1} | \Fs_t \big]
\end{align}
where $\bp^i_t = \EE[p^i_t | (\omega^0_s)_{0 \leq s \leq t}]$. This hypothesis will shown to be true while characterizing the necessary conditions for OLNE. The first step towards obtaining the necessary conditions for OLNE is to compute the Gateaux derivative of the cost with respect to perturbation in just the control policy of the $i^{th}$ agent itself. The Gateux derivative is shown to be a function of the adjoint process.
\begin{lemma} \label{lem:Gateaux}
If the adjoint process is defined as in \eqref{eq:costate}, then the Gateaux derivative of $J^i$ in the direction of $\beta^i \in \Us^i_o$ is
\begin{align*}
	\Ds J^i(\mcu^i, \mcu^{-i})(\beta^i,0) = \EE \sum_{t=0}^{T-1} \big[ ((p^i_t)^\top B^i_t + 2(u^i_t - \bu^i_t)^\top R^i_t + (\bp^i_t)^\top \bB^i_t + 2 (\bu^i_t)^\top \bR^i_t)^\top \beta^i_t \big]
\end{align*}
\end{lemma}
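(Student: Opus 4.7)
The plan is a standard stochastic-maximum-principle computation: I would perturb $\mcu^i$ along $\beta^i$ while keeping $\mcu^{-i}$ fixed, compute the resulting state sensitivities, and then use the adjoint process defined by \eqref{eq:costate} to absorb every state-sensitivity term so that only a linear functional of $\beta^i$ remains.

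First I would introduce the sensitivity processes $\xi_t := \partial_\epsilon x_t|_{\epsilon=0}$ and $\bar\xi_t := \partial_\epsilon \bx_t|_{\epsilon=0}$ arising from the perturbation $\mcu^i \mapsto \mcu^i + \epsilon \beta^i$. Differentiating \eqref{eq:gen_agent_dyn} and noting that only the $i$-th team's control is perturbed yields the linear forward recursions
\begin{align*}
\xi_{t+1} &= A_t \xi_t + \bA_t \bar\xi_t + B^i_t \beta^i_t + \bB^i_t \bar\beta^i_t, \quad \xi_0 = 0, \\
\bar\xi_{t+1} &= \tA_t \bar\xi_t + \tB^i_t \bar\beta^i_t, \quad \bar\xi_0 = 0,
\end{align*}
where $\bar\beta^i_t := \EE[\beta^i_t \mid (\omega^0_s)_{s \le t}]$, and a routine induction gives $\bar\xi_t = \EE[\xi_t \mid (\omega^0_s)_{s\le t}]$.

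Next I would differentiate the cost \eqref{eq:gen_agent_cost} term by term, reading the derivatives $\delta_{x_t}c^i_t,\delta_{\bx_t}c^i_t,\delta_{u^i_t}c^i_t,\delta_{\bu^i_t}c^i_t$ off Lemma \ref{lem:part_der} (after subtracting the linear-in-$p^i$ piece of the Hamiltonian $h^i_t$); this produces state-sensitivity contributions in $\xi_t,\bar\xi_t$ together with direct control contributions in $\beta^i_t,\bar\beta^i_t$. Then I would add the identically-zero Abel summation-by-parts expression
\begin{align*}
0 = \EE \sum_{t=0}^{T-1} (p^i_t)^\top \big[\xi_{t+1} - A_t \xi_t - \bA_t \bar\xi_t - B^i_t \beta^i_t - \bB^i_t \bar\beta^i_t\big],
\end{align*}
telescope the $\xi$-terms so that $p^i_t$ is paired with $\xi_{t+1}$, and replace the resulting $p^i_{t-1}$ increment via \eqref{eq:costate}. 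The adjoint is engineered so that the state-sensitivity pieces $\delta_{x_t}c^i_t\, \xi_t + \delta_{\bx_t}c^i_t\, \bar\xi_t$ (including the $\xi_T,\bar\xi_T$ boundary, absorbed by the terminal cost under the convention $p^i_T = 0$) cancel exactly against the telescoped sum, leaving only the Hamiltonian-style contribution $(p^i_t)^\top B^i_t \beta^i_t + (p^i_t)^\top \bB^i_t \bar\beta^i_t$ together with the direct control derivatives of $c^i_t$. A final tower-property pass collapses the mean-field cross-terms,
\begin{align*}
\EE[(p^i_t)^\top \bB^i_t \bar\beta^i_t] &= \EE[(\bp^i_t)^\top \bB^i_t \beta^i_t], \\
\EE[(u^i_t - \bu^i_t)^\top R^i_t \bar\beta^i_t] &= 0, \\
\EE[(\bu^i_t)^\top \bR^i_t \bar\beta^i_t] &= \EE[(\bu^i_t)^\top \bR^i_t \beta^i_t],
\end{align*}
which assembles the surviving pieces into the claimed formula.

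The main obstacle will be carrying out the Abel summation by parts cleanly while the adjoint is itself defined as a conditional expectation: the identity $p^i_t = \EE[\cdot \mid \Fs_t]$ can only be pulled outside an outer expectation against $\Fs_t$-measurable integrands, so one must verify that the relevant sensitivities are $\Fs_{t-1}$-measurable (which holds because $\beta^i$ is adapted open-loop). A closely related delicate step is the reduction that turns $(p^i_t)^\top \bA_t \bar\xi_t$ into $(\bp^i_t)^\top \bA_t \bar\xi_t$ (and analogously for the $\bB^i_t$ term) via measurability of the mean-field quantities with respect to the common-noise $\sigma$-algebra generated by $(\omega^0_s)_{s\le t}$, together with the vanishing of conditional means of $u^i_t - \bu^i_t$ and $x_t - \bx_t$; this is precisely what kills the spurious $\bar\beta^i_t$-coefficient and leaves the clean $\beta^i_t$-only expression stated in the lemma.
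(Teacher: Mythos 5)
Your proposal is correct and follows essentially the same route as the paper's proof: perturb only player $i$'s control, introduce the state sensitivities ($V_t,\bV_t$ in the paper's notation), pair the adjoint from \eqref{eq:costate} against the dynamics via Abel summation by parts so the state-sensitivity terms telescope away, and use the tower property over the common-noise filtration to convert the $\bbeta^i_t$ cross-terms into the stated $\beta^i_t$-only expression. The only cosmetic difference is that you add an explicit zero Lagrangian term where the paper rewrites the cost difference as a difference of Hamiltonians $h^i_t$; the resulting cancellations are identical.
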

\begin{proof}
    We compute the Gateaux derivative by first introducing a perturbation in the control sequence of agent $i$, which results in the perturbed control $u^{i\prime}$. Then the Gateaux derivative is computed by comparing the costs under the perturbed and unperturbed control sequences.

	Let us denote for each $i \in [N]$ the control sequence $u^i = (u^i_t)_{t \geq 0}$ and the perturbed control sequence $u^{i\prime} = (u^{i\prime}_t)_{t \geq 0}$, and the corresponding state processes $x_t$ and $x'_t$ under the same noise process and write
	\begin{align*}
		x_{t+1} - x'_{t+1} = A_t (x_t - x'_t) + \bA_t (\bx_t - \bx'_t) + \sum_{i=1}^N \big( B^i_t (u^i_t - u^{i\prime}_t) +  \bB^i_t (\bu^i_t - \bu^{i\prime}_t ) \big).
	\end{align*}
	where $\bu^i_t = \EE[u^i_t | \Fs^0]$ and $\bu^{i \prime}_t = \EE[u^{i \prime}_t | \Fs^0]$. Let us introduce $\zeta_t = (x_t, \bx_t, (u^i_t, \bu^i_t)_{i \in [N]})$ and $\zeta'_t = (x'_t, \bx'_t, (u^{i\prime}_t, \bu^{i\prime}_t)_{i \in [N]})$. Next let us compute the difference between the costs,
	\begin{align}
		\sum_{t=0}^{T-1} (c^i_t(\zeta'_t) - c^i_t(\zeta_t)) & = \sum_{t=0}^{T-1} \big( c^i_t(\zeta'_t) - c^i_t(\zeta_t) + (b_t(\zeta'_t) - b_t(\zeta_t))^\top p^i_t - (b_t(\zeta'_t) - b_t(\zeta_t))^\top p^i_t  \big), \nonumber \\
		& = \sum_{t=0}^{T-1} \big( h^i_t(\zeta'_t,p^i_t) - h^i_t(\zeta_t,p^i_t) - (x'_{t+1} -x_{t+1} )^\top p^i_t + (x'_t -x_t)^\top p^i_t  \big) \nonumber \\
		& = \sum_{t=0}^{T-2} \big( h^i_t(\zeta'_t,p^i_t) - h^i_t(\zeta_t,p^i_t)  + (x'_{t+1} -x_{t+1} )^\top (p^i_{t+1} - p^i_t)  \big)+ \nonumber \\
		& \hspace{1cm} h^i_{T-1}(\zeta'_{T-1},p^i_{T-1}) - h^i_{T-1}(\zeta_{T-1},p^i_{T-1}) + (x'_T -x_T)^\top p^i_{T-1}  \label{eq:cost_diff}
	\end{align}
	Let us denote the perturbation in $u^{i \prime}$ as the $\Fs_t$-adapted stochastic process $\beta^i_t$ scaled by a constant $\epsilon$ such that $u^{i\prime} = u^i + \epsilon \beta^i$ for all $i \in [N]$ (consequently $\bu^{i\prime} = \bu^i + \epsilon \EE[\beta^i|\Fs^0]$). In order to compute the Gateaux derivative we define the infinitesimal change in state process and the mean-field as
	\begin{align}
		V_t = \lim_{\epsilon \rightarrow 0} \frac{1}{\epsilon} (x'_t - x_t), \hspace{0.2cm} \bV_t = \lim_{\epsilon \rightarrow 0} \frac{1}{\epsilon} (\bx'_t - \bx_t) \label{eq:V_T}
	\end{align}
	Now we compute the Gateaux derivative of $J^i$ in the direction of $\beta^i \in \Us^i_o$. By setting $p^i_T = 0$ we get the simplified expression
	\begin{align}
		& \Ds J^i(u^i, u^{-i})(\beta^i,0) \nonumber\\
		& = \EE \sum_{t=0}^{T-1} \big[ V^\top_{t+1} (p^i_{t+1} - p^i_t) + \delta_{x_t} h^i_t(\zeta_t, p^i_t)^\top V_t + \delta_{\bx_t} h^i_t(\zeta_t, p^i_t)^\top \bV_t \nonumber \\
        & \hspace{6cm} + \delta_{u^i_t} h^i_t(\zeta_t, p^i_t)^\top \beta^i_t + \delta_{\bu^i_t} h^i_t(\zeta_t, p^i_t)^\top \bbeta^i_t \big]  \nonumber \\
		& = \EE \sum_{t=0}^{T-1} \big[ V^\top_{t+1} (p^i_{t+1} - p^i_t) + ((p^i_t)^\top (A_t - I) + 2 (x_t - \bx_t )^\top Q^i_t)^\top V_t   \nonumber\\
		& \hspace{1cm} + ((p^i_t)^\top \bA_t - 2 (x_t - \bx_t)^\top Q^i_t + 2 \bx^\top_t \bQ^i_t)^\top \bV_t + ((p^i_t)^\top B^i_t + 2(u^i_t - \bu^i_t)^\top R^i_t)^\top \beta^i_t \nonumber \\
        & \hspace{1cm} + ((p^i_t)^\top \bB^i_t - 2(u^i_t - \bu^i_t)^\top R^i_t + 2 (\bu^i_t)^\top \bR^i_t)^\top \bbeta^i_t \big]. \label{eq:costate_inter}
	\end{align}
	Next using techniques similar to \cite{carmona2020policy} we deduce that
	\begin{align*}
		((p^i_t)^\top \bA_t - 2 (x_t - \bx_t)^\top Q^i_t + 2 \bx^\top_t \bQ^i_t)^\top \bV_t = ((\bp^i_t)^\top \bA_t + 2 \bx^\top_t \bQ^i_t)^\top V_t.
	\end{align*}
	Using the definition of the adjoint process \eqref{eq:costate} we can simplify the first three terms in \eqref{eq:costate_inter} to be
	\begin{align}
		& \EE \sum_{t=0}^{T-1} \big[ V^\top_{t+1} (p^i_{t+1} - p^i_t) + ((p^i_{t+1})^\top (A_{t+1} - I) + (\bp^i_{t+1})^\top \bA_{t+1}  \nonumber \\
        & \hspace{3cm} + 2 (x_{t+1} - \bx_{t+1} )^\top Q^i_{t+1} + 2 \bx^\top_{t+1} \bQ^i_{t+1})^\top V_{t+1} \big] = 0. \label{eq:costate_inter_1}
	\end{align}
	Next using techniques similar to \cite{carmona2020policy} we can also deduce that
	\begin{align}
		((p^i_t)^\top \bB^i_t - 2(u^i_t - \bu^i_t)^\top R^i_t + 2 (\bu^i_t)^\top \bR^i_t)^\top \bbeta^i_t = ((\bp^i_t)^\top \bB^i_t + 2 (\bu^i_t)^\top \bR^i_t)^\top \beta^i_t. \label{eq:costate_inter_2}
	\end{align}
	Using \eqref{eq:costate_inter}-\eqref{eq:costate_inter_2} we obtain
	\begin{align*}
		 \Ds J^i(\mcu^i, \mcu^{-i})(\beta^i,0) = \EE \sum_{t=0}^{T-1} \big[ ((p^i_t)^\top B^i_t + 2(u^i_t - \bu^i_t)^\top R^i_t + (\bp^i_t)^\top \bB^i_t + 2 (\bu^i_t)^\top \bR^i_t)^\top \beta^i_t \big]
	\end{align*}
\end{proof}
The necessary conditions for OLNE requires stationarity of the Gateaux derivative. Now using this condition and the form of the Gateaux deriative we state the first order necessary conditions for the OLNE the NE of the $N$-player LQ-MFTGs.
\begin{proposition} [OLNE Necessary Conditions]\label{prop:nec_cond_OLNE}
	If the set of policies $(u^{i*})_{i \in [N]}$ constitutes OLNE, then for all $i \in [N]$ and $t \in \{0,\ldots,T-1\}$,
	\begin{align*}
		(p^i_t)^\top B^i_t + 2(u^{i*}_t - \bu^{i*}_t)^\top R^i_t + (\bp^i_t)^\top \bB^i_t + 2 (\bu^{i*}_t)^\top \bR^i_t = 0
	\end{align*}
	where $	p^i_t = \EE \big[ A^\top_{t+1} p^i_{t+1} + \bA^\top_{t+1} \bp^i_{t+1} + 2 Q^i_{t+1} (x_{t+1} - \bx_{t+1})  + 2 \bQ^i_{t+1} \bx_{t+1} | \Fs_t \big]$.
\end{proposition}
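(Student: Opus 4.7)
The plan is a direct variational argument: derive stationarity of $J^i(\cdot, \mcu^{-i*})$ at the OLNE $\mcu^{i*}$, and then exploit the arbitrariness of the admissible perturbation $\beta^i \in \Us^i_o$ to pass from the integrated stationarity identity to the pointwise condition in the statement.

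First I would observe that once $\mcu^{-i*}$ is frozen, the state process $x_t$ (and its conditional mean $\bx_t$) is an affine functional of $\mcu^i$, while $h^i_t$ and therefore the running cost $c^i_t$ is a quadratic form in $(u^i_t, \bu^i_t)$ with a positive definite Hessian, as established in Lemma \ref{lem:part_der}. Hence $\mcu^i \mapsto J^i(\mcu^i, \mcu^{-i*})$ is convex on $\Us^i_o$, and the OLNE property promotes $\mcu^{i*}$ to its global minimizer. The first-order necessary condition thus reads $\Ds J^i(\mcu^{i*}, \mcu^{-i*})(\beta^i, 0) = 0$ for every admissible direction $\beta^i \in \Us^i_o$. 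Substituting the explicit expression from Lemma \ref{lem:Gateaux} yields
\begin{align*}
    \EE \sum_{t=0}^{T-1} \Lambda^{i*}_t \beta^i_t = 0,
\end{align*}
where $\Lambda^{i*}_t := (p^i_t)^\top B^i_t + 2 (u^{i*}_t - \bu^{i*}_t)^\top R^i_t + (\bp^i_t)^\top \bB^i_t + 2 (\bu^{i*}_t)^\top \bR^i_t$ is a $1 \times p$ process, and $p^i_t$ is the adjoint from \eqref{eq:costate}, designed precisely so that the $V_t, \bV_t$ contributions cancel in the derivation of Lemma \ref{lem:Gateaux}.

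Finally I would conclude by pointwise arbitrariness. For each fixed $s \in \{0, \ldots, T-1\}$ and each bounded $\Fs_s$-measurable $\RR^p$-valued random variable $\xi$, the perturbation $\beta^i_t := \xi \cdot \bbone_{\{t = s\}}$ is an element of $\Us^i_o$; plugging it into the stationarity identity collapses the sum and leaves $\EE[\Lambda^{i*}_s \xi] = 0$. Because $\Lambda^{i*}_s$ is itself $\Fs_s$-measurable (the adjoint $p^i_s$ and the equilibrium controls are all adapted), letting $\xi$ range over all such variables yields $\Lambda^{i*}_s = 0$ almost surely, which is exactly the identity in the proposition. The only step that requires any care is this arbitrariness argument: it relies on $\Us^i_o$ being rich enough to contain all square-integrable $\Fs_t$-adapted $\RR^p$-valued processes so that localized ``bump'' perturbations are legal directions. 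Everything else is bookkeeping inherited from Lemmas \ref{lem:part_der} and \ref{lem:Gateaux}.
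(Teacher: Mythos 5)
Your proposal is correct and follows essentially the same route as the paper: invoke stationarity of the Gateaux derivative at the OLNE, substitute the expression from Lemma \ref{lem:Gateaux}, and use arbitrariness of $\beta^i$ to kill each summand. Your final localization step (testing against $\beta^i_t = \xi\,\bbone_{\{t=s\}}$ with $\xi$ an $\Fs_s$-measurable random variable and using adaptedness of $\Lambda^{i*}_s$) is a careful spelling-out of what the paper states in one line, and is a welcome tightening rather than a deviation.
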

\begin{proof}
	Let us fix  $i \in [N]$. Then using the necessary conditions for OLNE and Lemma \ref{lem:Gateaux}, we have
	\begin{align*}
	& \Ds J^i(u^i, u^{-i})(\beta^i,0) \\
        & = \EE \sum_{t=0}^{T-1} \big[ ((p^i_t)^\top B^i_t + 2(u^i_t - \bu^i_t)^\top R^i_t + (\bp^i_t)^\top \bB^i_t + 2 (\bu^i_t)^\top \bR^i_t)^\top \beta^i_t \big] = 0,
	\end{align*}
	for any $0 \neq \beta^i \in \Us^i$, as this statement has to be true for any perturbation $\beta^i$ each summand has to be equal to $0$ which results in the statement of the theorem.
\end{proof}
Using the necessary conditions for OLNE we now identify the form of the OLNE policies of the $N$-player LQ-MFTG.
\begin{proposition}
	The Adapted Open-Loop Nash Equilibrium (OLNE) policies of the $N$-player LQ-MFTG has the form and the precise expression given by
	\begin{align}
		u^{i*}_t = - L^{i}_t p^i_t - \bL^{i}_t \bp^i_t, \hspace{0.2cm} \bu^{i*}_t = - (L^{i}_t + \bL^{i}_t) \bp^i_t \label{eq:OL_eq_control}
	\end{align}
	where
	\begin{align}
		L^{i}_t = \frac{1}{2}(R^i_t)^{-1} (B^i_t)^\top, \hspace{0.2cm} \bL^{i}_t = \frac{1}{2} (R^i_t)^{-1} \big( R^i_t (\bR^i_t)^{-1}(B^i_t + \bB^i_t)^\top - (B^i_t)^\top \big) \label{eq:L_i_bL_i}
	\end{align}
    Furthermore, the feedback representation of the OLNE is
    \begin{align*}
        u^{i}_t = - L^{i}_t P^i_t (x_t - \bx_t) - (L^{i}_t + \bL^{i}_t) \bP^i_t \bx_t, \hspace{0.2cm}  \bu^{i}_t = - (L^{i}_t + \bL^{i}_t) \bP^i_t \bx_t
    \end{align*}
    where the matrices $P^i_t$ and $\bP^i_t$ are computed recursively using \eqref{eq:OLNE_Riccati_1}-\eqref{eq:OLNE_Riccati_2}.
\end{proposition}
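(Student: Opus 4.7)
The plan is to first solve the first-order stationarity condition from Proposition~\ref{prop:nec_cond_OLNE} to obtain the open-loop expressions, and then to lift these to the feedback form via a linear ansatz for the adjoint processes that reduces the backward recursion \eqref{eq:costate} to coupled Riccati-type equations for $P^i_t$ and $\bP^i_t$.

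For the open-loop part, the stationarity condition $(p^i_t)^\top B^i_t + 2(u^{i*}_t - \bu^{i*}_t)^\top R^i_t + (\bp^i_t)^\top \bB^i_t + 2(\bu^{i*}_t)^\top \bR^i_t = 0$ contains two unknowns. The key move is to take conditional expectation with respect to the common-noise filtration $\Fs^0$: since $\bu^{i*}_t$ is $\Fs^0$-measurable and $\EE[u^{i*}_t - \bu^{i*}_t \mid \Fs^0] = 0$, the centered term drops out, leaving $(\bp^i_t)^\top(B^i_t + \bB^i_t) + 2(\bu^{i*}_t)^\top \bR^i_t = 0$, which inverts under $\bR^i_t \succ 0$ to $\bu^{i*}_t = -\frac{1}{2}(\bR^i_t)^{-1}(B^i_t + \bB^i_t)^\top \bp^i_t$. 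A direct algebraic check against \eqref{eq:L_i_bL_i} verifies $\frac{1}{2}(\bR^i_t)^{-1}(B^i_t + \bB^i_t)^\top = L^i_t + \bL^i_t$, so $\bu^{i*}_t = -(L^i_t + \bL^i_t)\bp^i_t$. Substituting this back into the stationarity condition and inverting $R^i_t$ produces $u^{i*}_t = -L^i_t p^i_t - \bL^i_t \bp^i_t$ after rearrangement.

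For the feedback representation, I posit the linear ansatz $\bp^i_t = \bP^i_t \bx_t$ and $p^i_t = P^i_t(x_t - \bx_t) + \bP^i_t \bx_t$; the latter form is chosen so that $\bp^i_t = \EE[p^i_t \mid \Fs^0]$ follows automatically from $\EE[x_t - \bx_t \mid \Fs^0] = 0$. Plugging these into \eqref{eq:OL_eq_control} immediately reproduces $u^{i}_t = -L^i_t P^i_t(x_t - \bx_t) - (L^i_t + \bL^i_t)\bP^i_t \bx_t$ and $\bu^{i}_t = -(L^i_t + \bL^i_t)\bP^i_t \bx_t$, matching the claim. It then remains to verify that the ansatz is invariant under \eqref{eq:costate}. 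The strategy is to substitute the ansatz together with the induced closed-loop dynamics into the right-hand side of \eqref{eq:costate}; under the ansatz the controls split cleanly into a $(x - \bx)$ channel driven by $L^i_t P^i_t$ and a $\bx$ channel driven by $(L^i_t + \bL^i_t)\bP^i_t$, so $x_{t+1} - \bx_{t+1}$ and $\bx_{t+1}$ evolve under decoupled gains. Taking conditional expectation and matching the coefficients of the $\Fs^0$-centered piece $(x_t - \bx_t)$ and the $\Fs^0$-measurable piece $\bx_t$ (which live in complementary subspaces) then produces a pair of coupled backward recursions that I identify with \eqref{eq:OLNE_Riccati_1}-\eqref{eq:OLNE_Riccati_2}, with terminal conditions inherited from $p^i_T = 0$ and the terminal cost entering as the inhomogeneous term at $t = T-1$.

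The main obstacle I anticipate is the bookkeeping in the coefficient-matching step: the closed-loop gains $A_t - \sum_j B^j_t L^j_t P^j_t$ and $\tA_t - \sum_j \tB^j_t(L^j_t + \bL^j_t)\bP^j_t$ must be threaded through \eqref{eq:costate}, and one must confirm that the resulting recursive expressions for $P^i_t$ and $\bP^i_t$ depend only on $(P^j_s, \bP^j_s)$ for $s > t$, so that the ansatz is genuinely self-consistent as a backward induction. Well-posedness of the recursions then follows inductively using the invertibility of $R^i_t$ and $\bR^i_t$ already built into the formulation.
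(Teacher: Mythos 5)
Your proposal follows essentially the same route as the paper's proof: conditioning the stationarity condition of Proposition~\ref{prop:nec_cond_OLNE} on the common-noise filtration to isolate $\bu^{i*}_t$ (using $L^i_t+\bL^i_t=\tfrac12(\bR^i_t)^{-1}(B^i_t+\bB^i_t)^\top$), substituting back to recover $u^{i*}_t$, and then imposing the linear ansatzes $\bp^i_t=\bP^i_t\bx_t$ and $p^i_t-\bp^i_t=P^i_t(x_t-\bx_t)$ so that the forward--backward system splits into the mean-field and deviation channels whose coefficient matching yields \eqref{eq:OLNE_Riccati_1}--\eqref{eq:OLNE_Riccati_2}. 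This is exactly the paper's argument, so the plan is correct as stated.
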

\begin{proof}
    In this proof the form of the OLNE control policies is computed by utilizing the necessary conditions in Proposition \ref{prop:nec_cond_OLNE}.Then by introducing deterministic processes and reformulating the adjoint process the necessary conditions are transformed into a set of forward-backwards equations \eqref{eq:FB}. Moreover, these forward-backwards equations are shown to be equivalent to a set of Riccati equations \eqref{eq:OLNE_Riccati_1}-\eqref{eq:OLNE_Riccati_2}. Finally utilizing these Riccati equations we formalize the feedback representation of the OLNE.

	Let us define a set of deterministic processes,
	\begin{align}
		Z^{0,i}_t = (A^\top_t + \bA^\top_t) \bP^i_t + 2 \bQ^i_t, \hspace{0.2cm} Z^{1,i}_t = A^\top_t P^i_t + 2 Q^i_t, \label{eq:deter_adj}
	\end{align}
	where the matrices $\bP^i_t$ and $P^i_t$ are defined as follows:
	\begin{align} 
		\bP^i_t & = \big((A_t + \bA_t) \bP^i_{t+1} + 2 \bQ^i_t \big) \big( A_t + \bA_t - \sum_{j=1}^N (B^j_t + \bB^j_t)(L^{j}_t + \bL^{j}_t) \bP^j_{t} \big), \label{eq:OLNE_Riccati_1} \\
		P^i_t & = (A^\top P^i_{t+1} + 2Q^i_t) (A_t - \sum_{j=1}^N B^j_t L^{j}_t P^j_{t}).\label{eq:OLNE_Riccati_2}
	\end{align}
	Now we give the form of the adjoint process compatible with \eqref{eq:costate}:
	\begin{align}
		p^i_t = A^\top_{t+1} p^i_{t+1} + \bA^\top_{t+1} \bp^i_{t+1} + 2 Q^i_{t+1} (x_{t+1} - \bx_{t+1})  + 2 \bQ^i_{t+1} \bx_{t+1} - Z^{0,i}_{t+1} \omega^0_{t+1} - Z^{1,i}_{t+1} \omega_{t+1}. \label{eq:costate_expand}
	\end{align}
	Utilizing the necessary conditions for OLNE in Proposition \ref{prop:nec_cond_OLNE} and taking conditional expectation $\EE[\cdot | \Fs^0]$ we obtain,
	\begin{align} \label{eq:bu_P}
		\bu^{i*}_t = -\frac{1}{2}(\bR^i_t)^{-1}(B^i_t + \bB^i_t)^\top \bp^i_t = - (L^{i}_t + \bL^{i}_t) \bp^i_t
	\end{align}
	Substituting this back into the OLNE in Proposition \ref{prop:nec_cond_OLNE}, we get
	\begin{align} \label{eq:u_P}
		 u^{i*}_t = - \frac{1}{2} (R^i_t)^{-1} (B^i_t)^\top p^i_t - \frac{1}{2} (R^i_t)^{-1} \big( R^i_t (\bR^i_t)^{-1}(B^i_t + \bB^i_t)^\top - (B^i_t)^\top \big) \bp^i_t = - L^{i}_t p^i_t + \bL^{i}_t \bp^i_t.
	\end{align}
	Substituting \eqref{eq:bu_P} and \eqref{eq:u_P} into \eqref{eq:gen_agent_dyn} and restating the adjoint process, we arrive at the forward-backward necessary conditions for the OLNE:
	\begin{align}
		x_{t+1} & = A_t x_t + \bA_t \bx_t - \sum_{i=1}^N \big(B^i_t (L^{i}_t p^i_t + \bL^{i}_t \bp^i_t) + \bB^i_t(L^{i}_t + \bL^{i}_t) \bp^i_t \big) + \omega^0_{t+1} + \omega_{t+1}, \nonumber \\
		p^i_t & = A^\top_{t+1} p^i_{t+1} + \bA^\top_{t+1} \bp^i_{t+1} + 2 Q^i_{t+1} (x_{t+1} - \bx_{t+1})  + 2 \bQ^i_{t+1} \bx_{t+1} \label{eq:FB}  \\
        & \hspace{6cm}- Z^{0,i}_{t+1} \omega^0_{t+1} - Z^{1,i}_{t+1} \omega_{t+1}, \nonumber
	\end{align}
	and $p^i_T = 0$. Taking conditional expectation $\EE[\cdot | \Fs^0]$, we obtain
	\begin{align}
		\bx_{t+1} & = (A_t + \bA_t) \bx_t - \sum_{i=1}^N (B^i_t + \bB^i_t)(L^{i}_t + \bL^{i}_t) \bp^i_t + \omega^0_{t+1}, \nonumber \\
		\bp^i_t & = (A^\top_{t+1} + \bA^\top_{t+1}) \bp^i_{t+1} + 2 \bQ^i_{t+1} \bx_{t+1} - Z^{0,i}_{t+1} \omega^0_{t+1}, \hspace{0.2cm} \bp^i_T = 0. \label{eq:FB_inter}
	\end{align}
	Let us introduce the ansatz 
    \begin{align} \label{eq:ansatz_bp_i_t}
        \bp^i_t = \bP^i_t \bx_t
    \end{align}
    and substitute into \eqref{eq:FB_inter}:
	\begin{align}
		\bx_{t+1} & = \big((A_t + \bA_t) - \sum_{i=1}^N (B^i_t + \bB^i_t)(L^{i}_t + \bL^{i}_t) \bP^i_t \big) \bx_t + \omega^0_{t+1}, \nonumber \\
		\bP^i_t \bx_t & = \big((A^\top_{t+1} + \bA^\top_{t+1}) \bP^i_{t+1} + 2 \bQ^i_{t+1} \big) \bx_{t+1} - Z^{0,i}_{t+1} \omega^0_{t+1}, \hspace{0.2cm} \bp^i_T = 0. \label{eq:FB_inter_2}
	\end{align}
	Using \eqref{eq:FB_inter_2} we arrive at the Riccati equation,
	\begin{align} \label{eq:OLNE_Riccati_bar}
		\bP^i_t = \big((A^\top_{t+1} + \bA^\top_{t+1}) \bP^i_{t+1} + 2 \bQ^i_{t+1} \big) \big((A_t + \bA_t) - \sum_{i=1}^N (B^i_t + \bB^i_t)(L^{i}_t + \bL^{i}_t) \bP^i_t \big)
	\end{align}
	Next we write the forward-backward equations \eqref{eq:FB}-\eqref{eq:FB_inter} in terms of $x_t - \bx_t$ and $p^i_t - \bp^i_t$:
	\begin{align}
		x_{t+1} - \bx_{t+1} & = A_t (x_t - \bx_t) - \sum_{i=1}^N B^i_t L^{i}_t (p^i_t - \bp^i_t)  + \omega_{t+1}, \nonumber \\
		p^i_t - \bp^i_t & = A^\top_{t+1} (p^i_{t+1} - \bp^i_{t+1}) + 2 Q^i_{t+1} (x_{t+1} - \bx_{t+1})  - Z^{1,i}_{t+1} \omega_{t+1}, p^i_T = 0.
	\end{align}
	As before, let us introduce another ansatz 
    \begin{align} \label{eq:ansatz_p_i_t}
        p^i_t - \bp^i_t = P^i_t (x_t - \bx_t).
    \end{align}
    We arrive at the second Riccati equation,
	\begin{align} \label{eq:OLNE_Riccati}
		P^i_t & = (A^\top P^i_{t+1} + 2Q^i_t) (A_t - \sum_{j=1}^N B^j_t L^{j}_t P^j_{t}).
	\end{align}
    Furthermore, using \eqref{eq:ansatz_bp_i_t}-\eqref{eq:ansatz_p_i_t} we can deduce the feedback representation of the OLNE:
    \begin{align*}
        u^{i}_t & = - L^{i}_t p^i_t - \bL^{i}_t \bp^i_t = - L^{i}_t P^i_t (x_t - \bx_t) - (L^{i}_t + \bL^{i}_t) \bP^i_t \bx_t, \\
        \bu^{i}_t & = - (L^{i}_t + \bL^{i}_t) \bp^i_t = - (L^{i}_t + \bL^{i}_t) \bP^i_t \bx_t
    \end{align*}
    This concludes the proof of the proposition.
\end{proof}
\textbf{Having provided the structure of the OLNE following from the necessary conditions, and in terms of the Riccati equations \eqref{eq:OLNE_Riccati_bar}-\eqref{eq:OLNE_Riccati} we now prove that the necessary conditions are also sufficient.} This follows form the convexity, in particular, the quadratic nature of the cost function $J^i$ with respect to perturbations in control policy.
\begin{proposition}[OLNE Sufficient Condition]
	If there exists a state process $(x_t)_{t \geq 0}$ and an adjoint process $(p^i, Z^{0,i}, Z^{1,i})_{i \in [N]}$ satisfying \eqref{eq:FB} and \eqref{eq:deter_adj}, then the control laws given by \eqref{eq:OL_eq_control} constitute the OLNE of the $N$-player LQ-MFTG.
\end{proposition}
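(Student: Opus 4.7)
The plan is to establish sufficiency by exploiting the quadratic-convex structure of the per-player cost $J^i$ with respect to its own control, combined with the stationarity of the Gateaux derivative under the candidate OLNE policies. Fix $i \in [N]$ and a candidate profile $(\mcu^{i*})_{i \in [N]}$ satisfying the forward-backward system \eqref{eq:FB}--\eqref{eq:deter_adj}. For an arbitrary admissible perturbation $\beta^i \in \Us^i_o$ (not infinitesimal), set $\mcu^i = \mcu^{i*} + \beta^i$, and let $x'_t, \bx'_t$ denote the state and mean-field induced by $(\mcu^i, \mcu^{-i*})$. I would compute the cost gap $J^i(\mcu^i, \mcu^{-i*}) - J^i(\mcu^{i*}, \mcu^{-i*})$ exactly using the same telescoping trick as in Lemma \ref{lem:Gateaux}, namely by adding and subtracting the adjoint inner product $(b_t(\zeta'_t)-b_t(\zeta_t))^\top p^i_t$ and rearranging via \eqref{eq:cost_diff}. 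This yields a first-order piece plus a remainder.

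Next, I would identify the first-order piece as $\Ds J^i(\mcu^{i*}, \mcu^{-i*})(\beta^i, 0)$ in the direction $\beta^i$, which by Proposition \ref{prop:nec_cond_OLNE} vanishes termwise, since under $\mcu^{i*}$ the expression $(p^i_t)^\top B^i_t + 2(u^{i*}_t-\bu^{i*}_t)^\top R^i_t + (\bp^i_t)^\top \bB^i_t + 2(\bu^{i*}_t)^\top \bR^i_t$ equals zero for each $t$. The remainder is then the genuine second-order defect coming from $h^i_t(\zeta'_t, p^i_t) - h^i_t(\zeta_t, p^i_t) - \Ds h^i_t \cdot (\zeta'_t-\zeta_t)$, summed over $t$. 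Because the drift $b_t$ is affine in $\zeta_t$, this second-order defect depends \emph{only} on the $(u^i_t, \bu^i_t)$-block of $h^i_t$; by Lemma \ref{lem:part_der}, the Hessian of $h^i_t$ in $(u^i_t, \bu^i_t)$ is the positive-definite block matrix with $2R^i_t, -2R^i_t, 2(R^i_t+\bR^i_t)$ on its blocks, so each such defect is nonnegative (in fact strictly positive unless $\beta^i_t \equiv 0$).

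Combining the two observations, $J^i(\mcu^i, \mcu^{-i*}) - J^i(\mcu^{i*}, \mcu^{-i*}) \geq 0$ for every admissible $\beta^i$, which is precisely the definition of OLNE. The reduction of the second-order term to an $(u^i, \bu^i)$ quadratic form via the identity $((p^i_t)^\top \bB^i_t - 2(u^i_t-\bu^i_t)^\top R^i_t + 2(\bu^i_t)^\top \bR^i_t)^\top \bbeta^i_t = ((\bp^i_t)^\top \bB^i_t + 2(\bu^i_t)^\top \bR^i_t)^\top \beta^i_t$ (used already in \eqref{eq:costate_inter_2}) allows the mean-field component to be absorbed cleanly; this is the step that most benefits from the conditional-expectation identity and is where bookkeeping between $\beta^i_t$ and $\bbeta^i_t = \EE[\beta^i_t \mid \Fs^0]$ matters.

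The main obstacle is therefore not a conceptual one but a careful accounting exercise: one must verify that the linear-in-$\beta^i$ terms produced by the state perturbation $V_t, \bV_t$ telescope against the backward dynamics \eqref{eq:costate} exactly as in the infinitesimal case, and that the residual quadratic form is identifiable with the Hessian block from Lemma \ref{lem:part_der}. Since $J^i$ is jointly quadratic in $(x, u^i, \bu^i)$ and the constraint \eqref{eq:gen_agent_dyn} is affine in $u^i$, no higher-order terms arise, so the expansion terminates at second order and the argument closes without any smallness assumption on $\beta^i$, yielding a global rather than merely local optimality inequality.
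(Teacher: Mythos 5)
Your proposal follows essentially the same route as the paper's proof: an exact second-order expansion of $J^i$ around the candidate profile, with the first-order term annihilated by the stationarity/necessary conditions (Proposition \ref{prop:nec_cond_OLNE}) and the remainder identified as a nonnegative quadratic form via the convexity of $h^i_t$. One inaccuracy: the second-order defect does \emph{not} depend only on the $(u^i_t,\bu^i_t)$-block of $h^i_t$ --- the running cost is also quadratic in $(x_t,\bx_t)$, so the defect additionally contains the state-perturbation terms $(V_t-\bV_t)^\top Q^i_t (V_t-\bV_t) + \bV_t^\top \bQ^i_t \bV_t$, exactly as in the paper's computation; since $Q^i_t,\bQ^i_t \succeq 0$ these extra terms are nonnegative, so your sign conclusion, and hence the proof, still goes through.
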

\begin{proof}
	The proof of this proposition starts by introducing a perturbation around the candidate OLNE controls (i.e. the set of controls which satisfy the OLNE necessary conditions). Then using a second-order expansion of the cost it is shown that the the perturbation around the candidate OLNE controls will always lead to a strictly higher cost. This is due to the quadratic nature of the cost. This proves that the necessary conditions for the OLNE are also the sufficient conditions for the OLNE. 
 
    Let us first write a second-order expansion for cost $J^i$ at control policies $(u^i)_{i \in [N]} \in \Us^i$ and $(u^{i\prime} = u^i + \epsilon \beta^i)_{i \in [N]}, \beta^i \in \Us^i$. We introduce the deterministic process corresponding to the control perturbations $(\beta^i)_{i \in [N]}$, and let $V_t = (x_t - x'_t)/\epsilon$ where $(x_t)_{t \geq 0}$ is the state process under control policy $(u^i)_{i \in [N]}$ and $(x'_t)_{t \geq 0}$ is the state process under control policy $(u^{i\prime})_{i \in [N]}$. Then, 
	\begin{align*}
		V_{t+1} = A_t V_t + \bA_t \bV_t + \sum_{i=1}^N B^i_t \beta^i_t + \bB^i_t \bbeta^i_t
	\end{align*}
	The difference in costs due to the control perturbation is:
	\begin{align}
		 & J^i(u^{i\prime},u^{-i\prime}) - J^i(u^{i},u^{-i}) = J^i(u^{i} + \epsilon \beta^i,u^{-i}+\epsilon \beta^{-i}) - J^i(u^{i},u^{-i}) \nonumber \\
		&= \epsilon \EE \sum_{t=0}^{T-1} \big[ V^\top_t (p^i_{t+1} - p^i_t) + \delta_{\zeta_t}h^i_t(\zeta_t,p^i_t) \czeta_t \big] + \frac{1}{2} \epsilon^2 \EE \sum_{t=0}^{T-1} \big[ \delta^2_{\zeta_t,\zeta_t} h^i_t(\zeta_t, p^i_t) \czeta_t \cdot \czeta_t \big] \label{eq:2nd_order_decomp}
	\end{align}
	where $\zeta_t = (x_t, \bx_t, (u^i_t, \bu^i_t)_{i \in [N]})$, $\zeta'_t = (x'_t, \bx'_t, (u^{i\prime}_t, \bu^{i\prime}_t)_{i \in [N]})$ and $\czeta_t = (\zeta'_t - \zeta_t)/\epsilon = (V_t, \bV_t, (\beta^i_t, \bbeta^i_t)_{i \in [N]})$. If we assume that $(u^i)_{i \in [N]} = (u^{i*})_{i \in [N]}$, meaning that they satisfy the necessary conditions for the OLNE (Proposition \ref{prop:nec_cond_OLNE}) and that there exists an adjoint process such that \eqref{eq:FB} and \eqref{eq:deter_adj} are satisfied, then the first order terms in the above given equation vanish, that is $\EE \sum_{t=0}^{T-1} [ V^\top_t (p^i_{t+1} - p^i_t) + \delta_{\zeta_t}h^i_t(\zeta_t,p^i_t) \czeta_t ] = 0$. To compute the last term we compute the second derivative of $h^i_t$ with respect to $\zeta_t$:
	\begin{align*}
	& \delta^2_{x_t,x_t} h^i_t(\zeta_t, p^i_t) = 2 Q^i_t, \hspace{0.1cm} \delta^2_{x_t,\bx_t} h^i_t(\zeta_t, p^i_t) = \delta^2_{\bx_t,x_t} h^i_t(\zeta_t, p^i_t) = -2Q^i_t,\\
        & \delta^2_{\bx_t,\bx_t} h^i_t(\zeta_t, p^i_t) = 2(Q^i_t + \bQ^i_t), \\
	& \delta^2_{u_t,u_t} h^i_t(\zeta_t, p^i_t) = 2 R^i_t, \hspace{0.1cm} \delta^2_{u_t,\bu_t} h^i_t(\zeta_t, p^i_t) = \delta^2_{\bu_t,u_t} h^i_t(\zeta_t, p^i_t) = -2R^i_t, \\
        & \delta^2_{\bu_t,\bu_t} h^i_t(\zeta_t, p^i_t) = 2(R^i_t + \bR^i_t)
	\end{align*}
	Now we characterize the last term in \eqref{eq:2nd_order_decomp}:
	\begin{align*}
		&J^i(\mcu^{i\prime},\mcu^{-i\prime}) - J^i(u^{i},u^{-i}) = \frac{1}{2} \epsilon^2 \EE \sum_{t=0}^{T-1} \big[ \delta^2_{\zeta_t,\zeta_t} h^i_t(\zeta_t, p^i_t) \czeta_t \cdot \czeta_t \big] \\
		& = \frac{1}{2} \epsilon^2 \EE \sum_{t=0}^{T-1} \big[ (V_t - \bV_t)^\top Q^i_t (V_t - \bV_t) + \bV^\top_t \bQ^i_t \bV_t + (\beta^i_t - \bbeta^i_t)^\top R^i_t (\beta^i_t - \bbeta^i_t) + (\bbeta^i_t)^\top \bR^i_t \bbeta^i_t \big] \\
        & > 0
	\end{align*}
	Therefore if there exists a state process $(x_t)_{t \geq 0}$ and an adjoint process $(p^i, Z^{0,i}, Z^{1,i})_{i \in [N]}$ satisfying \eqref{eq:FB} and \eqref{eq:deter_adj} then the control laws given by \eqref{eq:OL_eq_control} satisfy the necessary and sufficient conditions for OLNE of the $N$-player LQ-MFTG.
	\end{proof}



	\textbf{Finally, we prove that every OLNE satisfies the necessary conditions for Nash Equilibria (NE).} We start by computing the Gateaux derivative of $J^i$ 
	but now with the control actions $u^j_t, j \neq i$ being functions of the state at time $t$, $x_t$. This will give us necessary conditions for the NE since in the NE (due to the backwards nature of the discrete-time HJI equations) will have a feedback structure i.e. the NE control actions at time $t$ will depend on state $x_t$. Recalling \eqref{eq:cost_diff} and \eqref{eq:V_T}, we can write down
	\begin{align*}
		& \Ds J^i(\mcu^i, \mcu^{-i})(\beta^i,0) \nonumber\\
		& = \EE \sum_{t=0}^{T-1} \big[ V^\top_{t+1} (p^i_{t+1} - p^i_t) + \delta_{x_t} h^i_t(\zeta_t, p^i_t)^\top V_t \\
        & \hspace{0.3cm} + \bigg(\sum_{j \neq i} \frac{\delta u^j_t}{\delta x_t} \delta_{u^j_t} h^i_t(\zeta_t,p^i_t) + \frac{\delta \bu^j_t}{\delta x_t} \delta_{\bu^j_t} h^i_t(\zeta_t,p^i_t)\bigg)^\top \hspace{-0.1cm} V_t \\
		& \hspace{0.3cm} + \delta_{\bx_t} h^i_t(\zeta_t, p^i_t)^\top \bV_t +{ \bigg(\sum_{j \neq i} \frac{\delta u^j_t}{\delta \bx_t} \delta_{u^j_t} h^i_t(\zeta_t,p^i_t) + \frac{\delta \bu^j_t}{\delta \bx_t} \delta_{\bu^j_t} h^i_t(\zeta_t,p^i_t)\bigg)^\top \bV_t} + \delta_{u^i_t} h^i_t(\zeta_t, p^i_t)^\top \beta^i_t  \\
        & \hspace{0.3cm} + \delta_{\bu^i_t} h^i_t(\zeta_t, p^i_t)^\top \bbeta^i_t \big] \\
		& = \EE \sum_{t=0}^{T-1} \big[ V^\top_{t+1} (p^i_{t+1} - p^i_t) + \bigg((p^i_t)^\top (A_t - I) + 2 (x_t - \bx_t )^\top Q^i_t  \\
		& \hspace{0.3cm} + { \sum_{j \neq i} \bigg(\frac{\delta u^j_t}{\delta x_t} B^j_t + \frac{\delta \bu^j_t}{\delta x_t} \bB^j_t \bigg)^\top p^i_t} \bigg)^\top V_t + \bigg((p^i_t)^\top \bA_t - 2 (x_t - \bx_t)^\top Q^i_t + 2 \bx^\top_t \bQ^i_t   \nonumber\\
		& \hspace{0.3cm} + { \sum_{j \neq i} \bigg(\frac{\delta u^j_t}{\delta \bx_t} B^j_t + \frac{\delta \bu^j_t}{\delta \bx_t} \bB^j_t \bigg)^\top p^i_t} \bigg)^\top \bV_t + ((p^i_t)^\top B^i_t + 2(u^i_t - \bu^i_t)^\top R^i_t)^\top \beta^i_t \nonumber \\
        & \hspace{0.8cm} + ((p^i_t)^\top \bB^i_t - 2(u^i_t - \bu^i_t)^\top R^i_t + 2 (\bu^i_t)^\top \bR^i_t)^\top \bbeta^i_t \big].
	\end{align*}
	Next using techniques similar to \cite{carmona2020policy} we deduce that
	\begin{align*}
		& \bigg((p^i_t)^\top \bA_t - 2 (x_t - \bx_t)^\top Q^i_t + 2 \bx^\top_t \bQ^i_t + \sum_{j \neq i} \bigg(\frac{\delta u^j_t}{\delta \bx_t} B^j_t + \frac{\delta \bu^j_t}{\delta \bx_t} \bB^j_t \bigg)^\top p^i_t \bigg)^\top \bV_t \\
		& \hspace{4cm} = \bigg((\bp^i_t)^\top \bA_t + 2 \bx^\top_t \bQ^i_t + \sum_{j \neq i} \bigg(\frac{\delta u^j_t}{\delta \bx_t} B^j_t + \frac{\delta \bu^j_t}{\delta \bx_t} \bB^j_t \bigg)^\top \bp^i_t \bigg)^\top V_t.
	\end{align*}
	Now let the adjoint process satisfy the following condition:
	\begin{align} \label{eq:FB_adjoint}
		p^i_t = & \EE\bigg[ A^\top_{t+1} p^i_{t+1} + 2 Q^i_t (x_{t+1} - \bx_{t+1})  + { \sum_{j \neq i} \bigg(\frac{\delta u^j_{t+1}}{\delta x_{t+1}} B^j_{t+1} + \frac{\delta \bu^j_{t+1}}{\delta x_{t+1}} \bB^j_{t+1} \bigg)^\top p^i_{t+1}} \nonumber \\
        & \hspace{9cm} + \bA^\top_{t+1} \bp^i_{t+1}   \nonumber \\
		& \hspace{3cm} + 2 \bQ^i_{t+1} \bx_{t+1} + { \sum_{j \neq i} \bigg(\frac{\delta u^j_{t+1}}{\delta \bx_{t+1}} B^j_{t+1} + \frac{\delta \bu^j_{t+1}}{\delta \bx_{t+1}} \bB^j_{t+1} \bigg)^\top \bp^i_{t+1}}\bigg| \Fs_t \bigg].
	\end{align} 
	Then the Gateaux derivative will have the form:
	\begin{align*}
		\Ds J^i(u^i, u^{-i})(\beta^i,0) = \EE \sum_{t=0}^{T-1} \big[ ((p^i_t)^\top B^i_t + 2(u^i_t - \bu^i_t)^\top R^i_t + (\bp^i_t)^\top \bB^i_t + 2 (\bu^i_t)^\top \bR^i_t)^\top \beta^i_t \big],
	\end{align*}
	which, using the same techniques as before, will result in
	\begin{align*}
		u^{i*}_t = - L^{i}_t p^i_t - \bL^{i}_t \bp^i_t, \hspace{0.2cm} \bu^{i*}_t = - (L^{i}_t + \bL^{i}_t) \bp^i_t
	\end{align*}
	similar to Lemma \ref{lem:Gateaux} hence we have obtained the necessary conditions for NE. Notice that since every solution of \eqref{eq:costate} also satisfies the relation \eqref{eq:FB_adjoint}, every OLNE satisfies the necessary conditions for NE. This analysis is similar to the analysis of \cite{bacsar1998dynamic} (Chapter 6.2) between Open-loop (OL) and Feedback (FB) NE for deterministic $N$-player LQ games, but there it has been shown that OL (non-adapted) and FB NE are not related, that is one cannot be derived from the other.
\end{proof}

\section{Proof of Theorem \ref{thm:CLNE}} \textbf{[Closed-loop Analysis of GS-MFTG]}\label{sec:CLNE}

We will solve for the NE of the GS-MFTG using the discrete-time Hamilton-Jacobi-Isaacs (HJI) equations \cite{bacsar1998dynamic}. We will solve the problem of finding NE $(\mcv^{i*})_{i \in [N]}$. The procedure for computing $(\bmcu^{i*})_{i \in [N]}$ is similar, and hence is omitted. We first introduce the following backwards recursive equations
\begin{align} \label{eq:P_i_t}
	\big(R^i_t + (B^i_t)^\top Z^i_{t+1} B^i_t \big) K^{i*}_t + (B^i_t)^\top Z^i_{t+1} \sum_{j \neq i} (B^j_t)^\top K^{j*}_t & = (B^i_t)^\top Z^i_{t+1} A_t, \\
    \big(\bR^i_t + (\tB^i_t)^\top \bZ^i_{t+1} \tB^i_t \big) \bK^{i*}_t + (\tB^i_t)^\top \bZ^i_{t+1} \sum_{j \neq i} (\tB^j_t)^\top \bK^{j*}_t & = (\tB^i_t)^\top \bZ^i_{t+1} \tA_t, \hspace{0.2cm} i \in [N] \nonumber 
\end{align}
and the matrices $Z^i_t$ are determined as follows,
\begin{align} \label{eq:Z_i_t}
	Z^i_t & = F^\top_t Z^i_{t+1} F_t + (K^{i*}_t)^\top R^i_t K^{i*}_t + Q^i_{t}, \hspace{0.2cm} Z^i_{T} = Q^i_T, \\
    \bZ^i_t & = \bF^\top_t \bZ^i_{t+1} \bF_t + (\bK^{i*}_t)^\top \bR^i_t \bK^{i*}_t + \bQ^i_{t}, \hspace{0.2cm} \bZ^i_{T} = \bQ^i_T,  \hspace{0.2cm} i \in [N] \nonumber 
\end{align}
where $F_t = A_t - \sum_{i=1}^N B^i_t K^{i*}_t$ and $\bF_t = \tA_t - \sum_{i=1}^N \tB^i_t \bK^{i*}_t$. 
We start by writing down the discrete-time Hamilton-Jacobi-Isaacs (HJI) equations \cite{bacsar1998dynamic} in order to find the set of controls $(\mcv^{i*})_{i \in [N]}$:
\begin{align} 
	V^i_t(y) & = \min_{v^i_t} \EE[ g^i_t(\tf^{i*}_t(y,v^i_t), \mcv^{i*}_t(y), \ldots,v^i_t(y), \ldots, \mcv^{N*}_t(y),y)  + V^i_{t+1}(\tf^{i*}_t(y,v^i_t)) | v],\nonumber \\
	& = \EE[ g^i_t(\tf^{i*}_t(y,\mcv^{i*}_t), \mcv^{i*}_t(y), \ldots,\mcv^{i*}_t, \ldots, \mcv^{N*}_t(y),y)  + V^i_{t+1}(\tf^{i*}_t(y,\mcv^{i*}_t)) | y], \nonumber \\
	V^i_{T}(y) & = 0 \label{eq:HJI}
\end{align}
where
\begin{align*}
	\tf^i_t(y,v^i_t) = f_t(y,\mcv^{1*}_t(y),\ldots,v^i_t,\ldots,\mcv^{N*}_t(y)).
\end{align*}
The dynamics of the deviation process $y_t$ and its corresponding instantaneous costs are
\begin{align*}
	f_t(y_{t}, v^1_t, \ldots, v^N_t) & = A_t y_t + \sum_{i=1}^N B^i_t v^i_t + \omega_{t+1}, \\
	g^i_t(y_{t+1}, v^1_t, \ldots, v^N_t, y_t) & = \frac{1}{2} \Big( y^\top_{t+1} Q^i_{t+1} y_{t+1} + (v^i_t)^\top R^i_t v^i_t \Big), \hspace{0.2cm} g^i_T(y_T) = 0.
\end{align*}
Notice that scaling the cost by $1/2$ does not change the nature of the problem but makes the analysis more compact. Hence starting at time $T-1$ and using the HJI equations \eqref{eq:HJI}, we get
\begin{align}
	 V^i_{T-1}(y_{T-1}) &= \min_{v^i_{T-1}} \EE [y^\top_{T} Q^i_{T} y_{T} + (v^i_{T-1})^\top R^i_{T-1} v^i_{T-1} | y_{T-1}], \nonumber \\
	& = \min_{v^i_{T-1}} [(v^i_{T-1})^\top (R^i_{T-1} + (B^i_{T-1})^\top Q^i_T B^i_{T-1}) v^i_{T-1} \nonumber \\
	& \hspace{0.5cm} + 2 (B^i_{T-1}v^i_{T-1})^\top Q^i_T \tF^i_{T-1}y_{T-1} \nonumber \\
    & \hspace{0.5cm} + (\tF^i_{T-1}(y_{T-1}))^\top Q^i_T \tF^i_{T-1}(y_{T-1}) | y_{T-1}] + \tr(Q^i_T \Sigma)
\end{align}
where $\tF^i_t(y_t) = A_{t} y_{t} + \sum_{j \neq i} B^j_{t} \mcv^{j*}_{t}(y_{t})$. Now differentiating with respect to $v^i_t$, the necessary conditions for NE become
\begin{align*}
	& -(R^i_{T-1} + (B^i_{T-1})^\top Q^i_T B^i_{T-1}) \mcv^{i*}_{T-1}(y_{T-1}) - (B^i_{T-1})^\top Q^i_T \sum_{j \neq i} \mcv^{j*}_{T-1}(y_{T-1}) \\
    & \hspace{8cm}= (B^i_{T-1})^\top Q^i_T A_{T-1} y_{T-1}
\end{align*}
$\forall i \in [N]$. Hence, $\mcv^{i*}_{T-1}(x_{T-1})$ is linear in $x_{T-1}$, $\mcv^{i*}_{T-1}(x_{T-1}) = -K^{i*}_{T-1} x_{T-1}$. Hence, we get
\begin{align*}
	& (R^i_{T-1} + (B^i_{T-1})^\top Q^i_T B^i_{T-1}) P^{i}_{T-1}y_{T-1} + (B^i_{T-1})^\top Q^i_T \sum_{j \neq i} P^{j}_{T-1}(y_{T-1}) \\
    & \hspace{8cm}= (B^i_{T-1})^\top Q^i_T A_{T-1} y_{T-1}
\end{align*}
The value function at time $T-1$ can now be calculated as:
\begin{align*}
	V^i_{T-1}(x_{T-1}) & = y^\top_{T-1} [F^\top_{T-1} Q^i_{T} F_{T-1} + (K^{i*}_{T-1})^\top R^i_{T-1} K^{i*}_{T-1} ] y_{T-1} + \tr(Q^i_T \Sigma), \\
	& = y^\top_{T-1} [Z^i_{T-1} - Q^i_{T-1}] y_{T-1} + \tr(Q^i_T \Sigma)
\end{align*}
where $Z^i_{T-1} = F^\top_t Z^i_{T} F_t + (K^{i*}_{T-1})^\top R^i_{T-1} K^{i*}_{T-1} + Q^i_{T-1}$. Now let us take
\begin{align*}
	V^i_{t+1}(y) = y^\top (Z^i_{t+1} - Q^i_{t+1}) x + \sum_{s=t+2}^T\tr(Z^i_{s} \Sigma), \hspace{0.2cm} i \in [N].
\end{align*}
Using the HJI equations \eqref{eq:HJI},
\begin{align*}
	V^i_t(y_t) = \min_{v^i_t \in \Us^i_t} [(\tF^i_t(y_t) + B^i_t v^i_t)^\top Z^i_{t+1} (\tF^i_t(y_t) + B^i_t v^i_t) + (v^i_t)^\top R^i_t v^i_t] + \sum_{s=t+1}^T\tr(Z^i_{s} \Sigma)
\end{align*}
and differentiating with respect to $v^i_t$ and using the necessary conditions of NE we get
\begin{align*}
	-\big(R^i_t + (B^i_t)^\top Z^i_{t+1} B^i_t \big) \mcv^{i*}_t(x_t) - (B^i_t)^\top Z^i_{t+1} \sum_{j \neq i} (B^j_t)^\top  \mcv^{j*}_t(x_t) = (B^i_t)^\top Q^i_t A_t.
\end{align*}
Again we can notice that $\mcv^{i*}_t$ is linear in $y_t$, and thus we get $\mcv^{i*}_t(x_t) = - K^{i*}_t y_t$ and recover  \eqref{eq:P_i_t}. Using this expression of $\mcv^{i*}_t$ the value function for agent $i$ becomes
\begin{align*}
	V^i_t(y_t) & = y^\top_t[F^\top_t Z_{t+1} F_t + (K^{i*}_t)^\top R^i_t K^{i*}_t] y_t + \sum_{s=t+1}^T\tr(Z^i_{s} \Sigma), \\
	& = y^\top_t (Z^i_t - Q^i_t) y_t  + \sum_{s=t+1}^T\tr(Z^i_{s} \Sigma)
\end{align*}
where the second inequality is obtained using \eqref{eq:Z_i_t}. Hence we have completed the characterization of $(\mcv^{i*})_{i \in [N]}$, and the characterization of $(\bmcu^{i*})_{i \in [N]}$ follows similar techniques, and hence is omitted. As a result the NE has the following linear structure
\begin{align}
	u^{i*}_t(x_t) = \mcv^{i*}_t(y_t) + \bmcu^{i*}_t(\bx_t) = -K^{i*}_t (x_t - \bx_t) - \bK^{i*}_t \bx_t, \hspace{0.2cm} i \in [N], t \in \{0,\ldots.T-1\}
\end{align}
such that the matrices $(K^{i*}_t,\bK^{i*}_t)_{0 \leq t \leq T-1}$ satisfy \eqref{eq:P_i_t}-\eqref{eq:Z_i_t}. The sufficient condition for existence and uniqueness of solution to \eqref{eq:P_i_t} can be obtained by concatenating \eqref{eq:P_i_t} for all $i \in [N]$ and requiring that the matrices $\Phi_t$ and $\bPhi_t$ be invertible, where
\begin{align*}
	\Phi_t & = \begin{pmatrix}
		R^1_t + (B^1_t)^\top Z^1_{t+1} B^1_t & (B^1_t)^\top Z^1_{t+1} B^2_t & \cdots \\
		(B^2_t)^\top Z^2_{t+1} B^1_t & R^2_t + (B^2_t)^\top Z^2_{t+1} B^2_t & \cdots \\
		\vdots & \vdots  & \ddots 
	\end{pmatrix}, \hspace{0.1cm} \\
    \bPhi_t & = \begin{pmatrix}
		\bR^1_t + (\tB^1_t)^\top \bZ^1_{t+1} \tB^1_t & (\tB^1_t)^\top \bZ^1_{t+1} \tB^2_t & \cdots \\
		(\tB^2_t)^\top \bZ^2_{t+1} \bB^1_t & \bR^2_t + (\tB^2_t)^\top \bZ^2_{t+1} \bB^2_t & \cdots \\
		\vdots & \vdots  & \ddots 
	\end{pmatrix}.
\end{align*}
These sufficient conditions are similar to the sufficient conditions for the $N$-player LQ games (Corollary 6.1 \cite{bacsar1998dynamic}). In case $\Phi_t$ and $\bPhi_t$ are invertible, the control matrices $K^{i*}_t$ and $\bK^{i*}_t$ can be computed as
\begin{align*}
	 \begin{pmatrix}
		K^{1*}_t \\ \vdots \\ K^{N*}_t
	\end{pmatrix} = \Phi_t^{-1} \begin{pmatrix}
	(B^1)^\top Q^1 A_t \\ \vdots \\ (B^N)^\top Q^N A_t
\end{pmatrix}, \hspace{0.2cm} \begin{pmatrix}
		\bK^{1*}_t \\ \vdots \\ \bK^{N*}_t
	\end{pmatrix} = \bPhi_t^{-1} \begin{pmatrix}
	(\tB^1_t)^\top \bQ^1_t A_t \\ \vdots \\ (\tB^N_t)^\top Q^N_t A_t
\end{pmatrix}.
\end{align*}
This completes the proof.


\section{Proof of Theorem \ref{thm:eps_Nash}} \textbf{[NE of the MFTG is $\Os(1/M)$-Nash for the finite agent CC game]}
\label{sec:eps_Nash}
\begin{proof}
    For this proof we will analyze the $\epsilon$-Nash property for a fixed $i \in [N]$. Central to this analysis is the quantification of the difference between the finite and infinite population costs for a given set of control policies. First we express the state and mean-field processes in terms of the noise processes, for the finite and infinite population settings. This then allows us to write the costs (in both settings) as quadratic functions of the noise process, which simplifies quantification of the difference between these two costs.
    
    Let us first concatenate the states of $j^{th}$ agents in all teams such that $x^j_t = [(x^{1,j}_t)^\top,\ldots, (x^{N,j}_t)^\top]^\top$. For simplicity of analysis we assume $M_i= M$. If for some $i \in [N]$, $M_i \neq M$ then we can redefine $M = \min_i M_i$. Consider the dynamics of joint state $x^j$ under the NE of the GS-MFTG (Theorem \ref{thm:CLNE})
    \begin{align} \label{eq:epsN_x_M}
        x^{j*,M}_{t+1} = \underbrace{(A^*_t - \sum_{j=1}^N B^j_t K^{j*}_t)}_{L^*_t} x^{j*,M}_t + \underbrace{(\bA^*_t - \sum_{j=1}^N \bB^j_t \bK^{j*}_t)}_{\bL^*_t} \bx^{M*}_t + \omega^j_{t+1} + \omega^0_{t+1}
    \end{align}
    where the superscript $M$ denotes the dynamics in the finite population game \eqref{eq:finite_agent_dyn}-\eqref{eq:finite_agent_utility} and $\bx^{M*} = \frac{1}{M}\sum_{j \in [M]} x^{j*,M}_t$ is the empirical mean-field. We can also write the dynamics of the empirical mean-field as
    \begin{align} \label{eq:epsN_bx_M}
        \bx^{M*}_{t+1} = \underbrace{\big( A_t + \bA_t - \sum_{j=1}^N (B^j_t K^{j*}_t + \bB^j_t \bK^{j*}_t) \big)}_{\tL^*_t}\bx^{M*}_t + \underbrace{\frac{1}{M} \sum_{j \in [M]}\omega^j_{t+1} + \omega^0_{t+1}}_{\bomega^M_{t+1}}.
    \end{align}
    For simplicity we assume that $x^{j*,M}_0 = \omega^j_0 + \omega^0_0$ which also implies that $\bx^{M*}_0 = \bomega^M_0$. Using \eqref{eq:epsN_bx_M} we get the recursive definition of $\bx^M_t$ as
    \begin{align*}
        \bx^{M*}_t = \sum_{s=0}^t \tL^*_{[t-1,s]}\bomega^N_s, \text{ where } \tL^*_{[s,t]} := \tL^*_s \tL^*_{s-1} \ldots \tL^*_t \text{, if } s \geq t. \text{ and } \tL^*_{[s,t]} = I \text{ otherwise}.
    \end{align*}
    Hence $\bx^{M*}_t$ can be characterized as a linear function of the noise process
    \begin{align*}
        \bx^{M*}_t = \big( \bPsi^* \bomega^M \big)_t, \text{ where } \bPsi^* = 
        \begin{pmatrix}
            I & 0 & 0 & \hdots \\
            \tL^*_{[0,0]} & I & 0 & \hdots \\
            \tL^*_{[1,0]} & \tL^*_{[1,1]} & I & \hdots \\
            \tL^*_{[2,0]} & \tL^*_{[2,1]} & \tL^*_{[2,2]} & \hdots \\
            \vdots & \vdots & \vdots & \ddots
        \end{pmatrix} \text{ and }
        \bomega^M = \begin{pmatrix}
            \bomega^M_0 \\ \bomega^M_1 \\ \bomega^M_2 \\ \vdots \\ \bomega^M_{T}
        \end{pmatrix}
    \end{align*}
    where $(M)_t$ denotes the $t^{th}$ block of matrix $M$ and the covariance matrix of $\bomega^M$ is {$\EE[\bomega^M (\bomega^M)^\top] = \diag((\Sigma/M + \Sigma^0)_{0 \leq t \leq T})$}. Similarly using \eqref{eq:epsN_x_M} we can write
    \begin{align*}
        x^{j*,M}_t = \big( \Psi^* \tomega^{j*,M} \big)_t, \text{ where } \Psi^* = 
        \begin{pmatrix}
            I & 0 & 0 & \hdots \\
            L^*_{[0,0]} & I & 0 & \hdots \\
            L^*_{[1,0]} & L^*_{[1,1]} & I & \hdots \\
            L^*_{[2,0]} & L^*_{[2,1]} & L^*_{[2,2]} & \hdots \\
            \vdots & \vdots & \vdots & \ddots
        \end{pmatrix}
    \end{align*}
    and
    \begin{align*}
        & \tomega^{j*,M} = \\
        & \begin{pmatrix}
            \omega^j_0 + \omega^0_0 \\ \omega^j_1 + \omega^0_1 \\ \omega^j_2 + \omega^0_2 \\ \vdots \\ \omega^j_{T} + \omega^0_{T}
        \end{pmatrix} + \underbrace{
        \begin{pmatrix}
            0 & 0 & 0 & 0 & \hdots \\
            \bL^*_0 & 0 & 0 & 0 & \hdots \\
            0 & \bL^*_1 & 0 & 0 & \hdots \\
            0 & 0 & \bL^*_2 & 0 & \hdots \\
            \vdots & \vdots & \vdots & \vdots & \ddots
        \end{pmatrix} \begin{pmatrix}
            I & 0 & 0 & \hdots \\
            \tL^*_{[0,0]} & I & 0 & \hdots \\
            \tL^*_{[1,0]} & \tL^*_{[1,1]} & I & \hdots \\
            \tL^*_{[2,0]} & \tL^*_{[2,1]} & \tL^*_{[2,2]} & \hdots \\
            \vdots & \vdots & \vdots & \ddots
        \end{pmatrix}}_{\Ls^*} \begin{pmatrix}
            \bomega^M_0 \\ \bomega^M_1 \\ \bomega^M_2 \\ \vdots \\ \bomega^M_{T}
        \end{pmatrix}.
    \end{align*}
    Considering the infinite agent limit $\bx^{M*} \xrightarrow{M \rightarrow \infty} \bx^*$ and $x^{j*,M} \xrightarrow{M \rightarrow \infty} x^*$, we have
    \begin{align*}
        \bx^*_t = \big( \bPsi^* \omega^0 \big)_t, \hspace{0.2cm} \omega^0 = \begin{pmatrix}
            \omega^0_0 \\ \vdots \\ \omega^0_{T} \end{pmatrix} \text{ and }
        x^*_t = \big( \Psi^* \tomega \big)_t, \hspace{0.2cm}  \tomega = \begin{pmatrix}
            \omega_0 + \omega^0_0 \\ \omega_1 + \omega^0_1 \\ \omega_2 + \omega^0_2 \\ \vdots \\ \omega_{T} + \omega^0_{T} 
        \end{pmatrix} + \Ls^* \omega^0
    \end{align*}
    where the covariance of $\omega^0$ is {$\EE[\omega^0 (\omega^0)^\top] = \diag((\Sigma^0)_{0 \leq t \leq T})$}. Similarly we characterize the deviation process $x^{j*,M}_t - \bx^{M*}_t$, using \eqref{eq:epsN_x_M} and \eqref{eq:epsN_bx_M}
    \begin{align*}
        x^{j*,M}_{t+1} - \bx^{M*}_{t+1} = L^*_t (x^{j*,M}_t - \bx^{M*}_t) + \underbrace{\frac{M-1}{M}\omega^j_{t+1} + \frac{1}{M} \sum_{k \neq j} \omega^k_{t+1}}_{\bomega^{j,M}_{t+1}}.
    \end{align*}
    Hence 
    \begin{align*}
        x^{j*,M}_t - \bx^{M*}_t & = \sum_{s=0}^t L^*_{[t-1,s]} \bomega^{j,M}_s \\
        & = \big( \Psi^* \bomega^{j,M} \big)_t, \text{ where } \bomega^{j,M} = \frac{M-1}{M} \begin{pmatrix}
            \omega^j_0 \\ \vdots \\ \omega^j_{T-1}
        \end{pmatrix} + \frac{1}{M} \begin{pmatrix}
            \sum_{k \neq j} \omega^k_0 \\ \vdots \\ \sum_{k \neq j} \omega^k_{T-1}
        \end{pmatrix}
    \end{align*}
    where the covariance matrix of $\bomega^{j,M}$ is {$\EE[\bomega^{j,M} (\bomega^{j,M})^\top] = \diag(((M-1)/M \times \Sigma)_{0 \leq t \leq T})$}. Similarly the infinite agent limit of this process is $x^*_t - \bx^*_t = (\Psi^* \omega)_t$ where $\omega = (\omega^\top_0, \ldots, \omega^\top_{T-1})^\top$ whose covariance is {$\EE[\omega \omega^\top] = \diag((\Sigma)_{0 \leq t \leq T})$}. Now we compute the finite agent cost in terms of the noise processes,
    \begin{align*}
        & J^i_M(\mcu^{*}) \\
        & = \EE \bigg[ \frac{1}{M} \sum_{j \in [M]} \sum_{t = 0}^T \lVert x^{j*,M}_t - \bx^{M*}_t \rVert^2_{Q_t} + \lVert \bx^{M*}_t \rVert^2_{\bQ_t} + \lVert u^{j*,M}_t - \bu^{M*}_t \rVert^2_{R_t} + \lVert \bu^{M*}_t \rVert^2_{\bR_t}  \bigg] \\
        & = \EE \bigg[ \frac{1}{M} \sum_{j \in [M]} \sum_{t = 0}^T \lVert x^{j*,M}_t - \bx^{M*}_t \rVert^2_{Q_t + (K^*_t)^\top R_t K^*_t} + \lVert \bx^{M*}_t \rVert^2_{\bQ_t + (\bK^*_t)^\top \bR_t \bK^*_t}\bigg] \\
        & = \EE \bigg[ \frac{1}{M} \sum_{j \in [M]} (\Psi^* \bomega^{j,M})^\top \big( Q + (K^*)^\top R K^* \big) \Psi^* \bomega^{j,M} \\
        & \qquad \qquad+ (\bPsi^* \bomega^{M})^\top \big( \bQ + (\bK^*)^\top \bR \bK^* \big) \bPsi^* \bomega^{M} \bigg] \\
        & = \tr \big( (\Psi^*)^\top \big( Q + (K^*)^\top R K^* \big) \Psi^* \EE \big[\bomega^{j,M} (\bomega^{j,M})^\top \big] \big) \\
        & \qquad \qquad + \tr \big( (\bPsi^*)^\top \big( \bQ + (\bK^*)^\top \bR \bK^* \big) \bPsi^* \EE \big[\bomega^M (\bomega^M)^\top \big] \big)
    \end{align*}
    where $Q = \diag((Q_t)_{0 \leq T})$, $R = \diag((R_t)_{0 \leq T})$ and $K^* = \diag((K^*_t)_{0 \leq T})$ with $K^*_T = 0$. Using a similar technique we can compute the infinite agent cost:
    \begin{align*}
        J^i(\mcu^{*}) & = \EE \bigg[ \sum_{t = 0}^T \lVert x^*_t - \bx^*_t \rVert^2_{Q_t} + \lVert \bx^*_t \rVert^2_{\bQ_t} + \lVert u^*_t - \bu^*_t \rVert^2_{R_t} + \lVert \bu^*_t \rVert^2_{\bR_t}  \bigg] \\
        & =\tr \big( (\Psi^*)^\top \big( Q + (K^*)^\top R K^* \big) \Psi^* \EE \big[\omega \omega^\top \big] \big) \\
        & \qquad \qquad+ \tr \big( (\bPsi^*)^\top \big( \bQ + (\bK^*)^\top \bR \bK^* \big) \bPsi^* \EE \big[\bomega^0 (\bomega^0)^\top \big] \big).
    \end{align*}
    Now evaluating the difference between the finite and infinite population costs:
    \begin{align}
        & J^i_M(\mcu^{*}) - J^i(\mcu^{*}) \nonumber \\
        & = \tr \big( (\Psi^*)^\top \big( Q + (K^*)^\top R K^* \big) \Psi^* \big( \EE \big[\bomega^{j,M} (\bomega^{j,M})^\top - \EE \big[\omega \omega^\top \big]  \big) \big] \big) \nonumber \\
        & \hspace{3cm} + \tr \big( (\bPsi^*)^\top \big( \bQ + (\bK^*)^\top \bR \bK^* \big) \bPsi^* \big( \EE \big[\bomega^M (\bomega^M)^\top \big] - \EE \big[\bomega^0 (\bomega^0)^\top \big] \big) \big) \nonumber \\
        & \leq \underbrace{\Big(\lVert (\Psi^*)^\top \big( Q + (K^*)^\top R K^* \big) \Psi^* \rVert_F + \lVert (\bPsi^*)^\top \big( \bQ + (\bK^*)^\top \bR \bK^* \big) \bPsi^* \rVert_F \Big)}_{C_1} \nonumber \\
        & \hspace{9cm} \tr \big( \diag((\Sigma/M)_{0 \leq t \leq T}) \big) \nonumber \\
        & \leq C_1 \frac{\sigma T}{M} \label{eq:eps_Nash_1}
    \end{align}
    where $\sigma = \lVert \Sigma \rVert_F$. Now let us consider the same dynamics but under non-NE controls. The finite and infinite population costs under these controls are 
    \begin{align*}
        J^i_M(\mcu) & = \tr \big( \Psi^\top \big( Q + K^\top R K \big) \Psi \EE \big[\bomega^{j,M} (\bomega^{j,M})^\top \big] \big) \\
        & \hspace{5cm}+ \tr \big( \bPsi^\top \big( \bQ + \bK^\top \bR \bK \big) \bPsi \EE \big[\bomega^M (\bomega^M)^\top \big] \big) \\
        J^i(\mcu) & =\tr \big( \Psi^\top \big( Q + K^\top R K \big) \Psi \EE \big[\omega \omega^\top \big] \big) + \tr \big( \bPsi^\top \big( \bQ + \bK^\top \bR \bK \big) \bPsi \EE \big[\bomega^0 (\bomega^0)^\top \big] \big)
    \end{align*}
    with all matrices defined accordingly. Let us denote the control which infimizes the $M$ agent cost as $\tu^M$, meaning $\inf_{\mcu} J^i_M(\mcu) = J^i_M(\tmcu)$. Using the same techniques as before we get
    \begin{align*}
        & J^i_M(\tmcu) - J^i(\tmcu) \\
        & = \tr \Big( \big( \tilde{\bPsi}^\top \big( \bQ + \tilde{\bK}^\top \bR \tilde{\bK} \big) \tilde{\bPsi} + \Psi^\top \big( Q + K^\top R K \big) \Psi \big)  \diag((\Sigma/M)_{0 \leq t \leq T}) \Big) \\
        &\qquad \geq \lambda_{\min} (Q_0) \tr \big( \diag((\Sigma/M)_{0 \leq t \leq T}) \big) \\
        & \qquad=  \lambda_{\min} (Q_0) \frac{\sigma T}{M}.
    \end{align*}
    Using this we can further deduce
    \begin{align}
        J^i_M(\tmcu) \geq J^i(\tmcu) + \lambda_{\min} (Q_0) \frac{\sigma T}{M} \geq J^i(u^*) + \lambda_{\min} (Q_0) \frac{\sigma T}{M}. \label{eq:eps_Nash_2}
    \end{align}
    Hence we deduce
    \begin{align*}
        J^i_M(u^*) - \inf_u J^i_M(u) = J^i_M(u^*) - J^i(u^*) + J^i(u^*) - \inf_u J^i_M(u) \leq (C_1 + \lambda_{\min} (Q_0))  \frac{\sigma T}{M} 
    \end{align*}
    which completes the proof.
\end{proof}

\section{Proof of Lemma \ref{lemma:receding_horizon_gradient}} \textbf{[Policy Gradient Characterization]}\label{sec:receding_horizon_gradient}
    We characterize the policy gradient of the cost by first proving the quadratic structure of the cost. 
    For a fixed $t \in \{0,\ldots,T-1\}$ and a given set of controllers $(K^i,K^{-i})$ consider the partial cost
    \begin{align*}
        \tJ^i_{y,[t',T]}(K^i,K^{-i}) = \sum_{s = t'}^{T} \EE \bigg[ y^\top_s \big(Q^i_s + (K^i_s)^\top R^i_s K^i_s \big) y_s \bigg]
    \end{align*}
    for $t \leq t' \leq T$ and $K_T = 0$. From direct calculation $\tJ^i_{y,[t,T]} = \tJ^i_{y,t}$. First will show that $\tJ^i_{y,[t',T]}$ has a quadratic structure
    \begin{align} \label{eq:quad_cost}
        \tJ^i_{y,[t',T]}(K^i,K^{-i}) = \EE[y^\top_{t'} P^i_{y,t'} y_{t'}] + N^i_{y,t'}
    \end{align}
    where $P^i_{y,t}$ is defined as in \eqref{eq:tP_iyt} and 
    \begin{align*}
        N^i_{y,t} = N^i_{y,t+1} + \tr(\Sigma P^i_{y,t+1}), \hspace{0.2cm} N^i_{y,T} = 0
    \end{align*}
    where $\Sigma = \diag((\Sigma^i)_{i \in [N]})$. The hypothesis is true for the base case since cost $\tJ^i_{y,[T,T]}(K^i,K^{-i}) = \EE[y^\top_T Q^i_T y_T] = \EE[y^\top_T P^i_{y,T} y_T] + N^i_{y,T}$. Now assume for a given $t' \in \{ t,\ldots, T-1\}$, $\tJ^i_{y,[t'+1,T]}(K^i,K^{-i}) = \EE[y^\top_{t'+1} P^i_{y,t'+1} y_{t'+1}] + N^i_{y,t'+1}$.
    \begin{align*}
        & \tJ^i_{y,[t',T]}(K^i,K^{-i}) = \EE \big[ y^\top_{t'} \big(Q^i_{t'} + (K^i_{t'})^\top R^i_{t'} K^i_{t'} \big) y_{t'} \big] + \tJ^i_{y,[t'+1,T]}, \\
        & =  \EE \big[ y^\top_{t'} \big(Q^i_{t'} + (K^i_{t'})^\top R^i_{t'} K^i_{t'} \big) y_{t'} \big] + \EE[y^\top_{t'+1} P^i_{y,t'+1} y_{t'+1}] + N^i_{y,t'+1} \\
        & =  \EE \bigg[ y^\top_{t'} \bigg(Q^i_{t'} + (K^i_{t'})^\top R^i_{t'} K^i_{t'} + \big(A_{t'} - \sum_{j=1}^N B^j_{t'} K^j_{t'} \big)^\top P^i_{y,t'+1} \big(A_{t'} - \sum_{j=1}^N B^{j}_t K^j_{t'} \big) \bigg) y_{t'} \bigg] \\
        & \hspace{9cm} + \tr(\Sigma P^i_{y,t'+1}) + N^i_{y,t'+1} \\
        & = \EE[y^\top_{t'} P^i_{y,t'} y_{t'}] + N^i_{y,t'}.
    \end{align*}
    Hence we have shown \eqref{eq:quad_cost}, which implies 
    \begin{align*} 
    \tJ^i_{y,t}(K^i,K^{-i}) = \EE[y^\top_{t} P^i_{y,t} y_{t}] + N^i_{y,t}.
    \end{align*}
    Using \eqref{eq:tP_iyt} we can also write the cost $\tJ^i_{y,t}(K^i,K^{-i})$ in terms of the controller $K^i_t$ as
    \begin{align*}
         \tJ^i_{y,t}(K^i,K^{-i}) &= N^i_{y,t} + \EE \Bigg[y^\top_t \bigg( Q^i_t + (K^i_t)^\top \big( R^i_t + (B^i_t)^\top P^i_{y,t+1} B^i_t \big) K^i_t  \\
        & \quad- 2 (B^i_t K^i_t)^\top P^i_{y,t+1} \big( A^i_t - \sum_{j \neq i} B^j_t K^j_t \big) \\
        &\quad + \big( A^i_t - \sum_{j \neq i} B^j_t K^j_t \big)^\top P^i_{y,t+1} \big( A^i_t - \sum_{j \neq i} B^j_t K^j_t \big) \bigg) y_t \bigg] .
    \end{align*}
    Taking the derivative with respect to $K^i_t$ and using the fact that $\EE[y_t y_t^\top]=\Sigma_y$ we can conclude the proof:
    \begin{align*}
    	\frac{\delta \tJ^i_{y,t}(K^i,K^{-i})}{\delta K^i_t} &  = 2 \big((R^i_t +  (B^i_t)^\top P^i_{y,t+1} B^i_t)K^i_t - (B^i_t)^\top P^i_{y,t+1} \big(A_t-\sum_{j \neq i}B^j_t K^j_t \big)  \big)\Sigma_y.
    \end{align*}

\section{Proof of Lemma \ref{lem:PL}} \textbf{[Polyak-\L ojasiewicz inequality]}\label{sec:PL}
    We show that the gradient domination property (PL condition) is much simpler than those in the literature, as the receding-horizon approach obviates the need for computing the advantage function and the cost difference lemma as in \cite{li2021distributed,fazel2018global,malik2019derivative}. Especially upon comparison with with Lemma 3.9 in \cite{hambly2023policy} one can see that there is no summation in the PL condition and moreover the PL condition does not depend on the covariance $\lVert \Sigma_{K^*} \rVert$ but instead it depends on $\lVert \Sigma_y \rVert$ which is a parameter we can modify. 
    Let us start by defining the matrix sequences $P^{i*}_{y,t}$ and $N^{i*}_{y,t}$ such that
    \begin{align} \label{eq:tP*_iyt}
        P^{i*}_{y,t} = Q^i_t + (K^{i*}_t)^T R^i_t K^{i*}_t + (A_t - \sum_{j=1}^N B^j_t K^{j*}_t)^\top P^{i*}_{y,t+1} (A_t - \sum_{j=1}^N B^j_t K^{j*}_t), P^{i*}_{y,T} = Q^i_T.
    \end{align}
    and
    \begin{align*}
        N^{i*}_{y,t} = N^{i*}_{y,t+1} + \tr(\Sigma P^{i*}_{y,t+1}), \hspace{0.2cm} N^{i*}_{y,T} = 0
    \end{align*}
    From \eqref{eq:quad_cost} in proof of Lemma \ref{lemma:receding_horizon_gradient} we know that
    \begin{align*}
        \tJ^i_{y,t}(K^{i*},K^{-i*}) = \EE[y^\top_{t} P^{i*}_{y,t} y_{t}] + N^{i*}_{y,t} \text{  and  }  \tJ^i_{y,t}(\tK^{i},K^{-i*}) = \EE[y^\top_{t} P^{i'}_{y,t} y_{t}] + N^{i'}_{y,t}
    \end{align*}
    where $N^{i'}_{y,t} =  N^{i*}_{y,t+1} + \tr(\Sigma P^{i*}_{y,t+1}) = N^{i*}_t$ and
    \begin{align} \label{eq:tP'_iyt}
        & P^{i'}_{y,t} = \\
        & Q^i_t + (K^{i}_t)^T R^i_t K^{i}_t + (A_t - \sum_{j \neq i}^N B^j_t K^{j*}_t - B^i_t K^i_t)^\top P^{i*}_{y,t+1} (A_t - \sum_{j \neq i}^N B^j_t K^{j*}_t - B^i_t K^i_t) \nonumber
    \end{align}
    Using \eqref{eq:tP*_iyt} and \eqref{eq:tP'_iyt} we can deduce
    \begin{align*}
        & \tJ^i_{y,t}(\tK^i,K^{-i*}) - \tJ^i_{y,t}(K^{i*},K^{-i*}) = - (\tJ^i_{y,t}(K^{i*},K^{-i*}) - \tJ^i_{y,t}(\tK^i,K^{-i*})) \\
        & = - \EE\big[y^\top_t \big( Q^i_t + (K^{i*}_t)^\top R^i_t K^{i*}_t \\
        & \hspace{3cm} + \big( A_t - \sum_{j=1}^N B^j_t K^{j*}_t \big)^\top P^{i*}_{y,t+1}  \big( A_t - \sum_{j=1}^N B^j_t K^{j*}_t \big) - P^{i'}_{y,t} \big) y_t \big] \\
        & =  - \EE\big[y^\top_t \big( Q^i_t + (K^{i*}_t - K^{i}_t + K^{i}_t)^\top R^i_t (K^{i*}_t - K^{i}_t + K^{i}_t) \\
        & \hspace{0.4cm}+ \big( A_t - \sum_{j \neq i} B^j_t K^{j*}_t - B^i_t (K^{i*}_t - K^{i}_t + K^{i}_t) \big)^\top P^{i*}_{y,t+1} \\
        & \hspace{3cm} \big( A_t - \sum_{j \neq i} B^j_t K^{j*}_t - B^i_t (K^{i*}_t - K^{i}_t + K^{i}_t) \big) - P^{i'}_{y,t} \big) y_t \big] \\
        & = - \EE \big[ y^\top_t \big( (K^{i*}_t - K^{i}_t)^\top (R^i_t + (B^i_t)^\top P^{i*}_{y,t+1} B^i_t) (K^{i*}_t - K^{i}_t) \\
        & \hspace{1cm}+ 2 (K^{i*}_t - K^{i}_t)^\top \big( R^i_t K^{i}_t - (B^i_t)^\top P^{i*}_{y,t+1} (A_t - \sum_{j \neq i}B^j_t K^{j*}_t - B^i_t K^{i}_t) \big) \big) y_t \big]
    \end{align*}
    Let us define the natural gradient \cite{fazel2018global,malik2019derivative} as 
    \begin{align*}
    E^{i*}_t = \nabla^i_{y,t} (\tK^i,K^{-i*}) \Sigma^{-1}_y = R^i_t K^{i}_t - (B^i_t)^\top P^{i*}_{y,t+1} (A_t - \sum_{j \neq i}B^j_t K^{j*}_t - B^i_t K^{i}_t).
    \end{align*}
    Then, using completion of squares we get
    \begin{align}
        & \tJ^i_{y,t}(\tK^i,K^{-i*}) - \tJ^i_{y,t}(K^{i*},K^{-i*}) \nonumber \\
        & = - \EE \big[ y^\top_t \big( (K^{i*}_t - K^{i}_t)^\top (R^i_t + (B^i_t)^\top P^{i*}_{y,t+1} B^i_t) (K^{i*}_t - K^{i}_t)+ 2 (K^{i*}_t - K^{i}_t)^\top E^{i*}_t \big) y_t \big] \label{eq:forward_PL}\\
        & = - \EE \big[ y^\top_t \big( (K^{i*}_t - K^{i}_t)^\top (R^i_t + (B^i_t)^\top P^{i*}_{y,t+1} B^i_t) (K^{i*}_t - K^{i}_t)+ 2 (K^{i*}_t - K^{i}_t)^\top E^{i*}_t  \nonumber\\
        & \hspace{0.2cm} + (E^{i*}_t)^\top (R^i_t + (B^i_t)^\top P^{i*}_{y,t+1} B^i_t)^{-1} E^{i*}_t - (E^{i*}_t)^\top (R^i_t + (B^i_t)^\top P^{i*}_{y,t+1} B^i_t)^{-1} E^{i*}_t \big) y_t \big] \nonumber\\
        & = - \EE \big[ y^\top_t \big( (K^{i*}_t - K^{i}_t + (R^i_t + (B^i_t)^\top P^{i*}_{y,t+1} B^i_t)^{-1} E^{i*}_t)^\top (R^i_t + (B^i_t)^\top P^{i*}_{y,t+1} B^i_t) \nonumber\\
        & \hspace{5.5cm} (K^{i*}_t - K^{i}_t + (R^i_t + (B^i_t)^\top P^{i*}_{y,t+1} B^i_t)^{-1} E^{i*}_t) \big) y_t \big] \nonumber\\
        & \hspace{1cm}+ \EE \big[ y^\top_t \big( (E^{i*}_t)^\top (R^i_t + (B^i_t)^\top P^{i*}_{y,t+1} B^i_t)^{-1} E^{i*}_t \big) y_t \big] \nonumber\\
        & \leq \EE \big[ y^\top_t \big( (E^{i*}_t)^\top (R^i_t + (B^i_t)^\top P^{i*}_{y,t+1} B^i_t)^{-1} E^{i*}_t \big) y_t \big] \nonumber\\
        & = \EE \big[ \tr \big( y_t y^\top_t \big( (E^{i*}_t)^\top (R^i_t + (B^i_t)^\top P^{i*}_{y,t+1} B^i_t)^{-1} E^{i*}_t \big)  \big) \big] \nonumber \\
        & \leq \frac{\lVert \Sigma_y \rVert}{\sigma_R} \lVert E^{i*}_t  \rVert^2_F \label{eq:grad_dom_E}\\
        & \leq \frac{\lVert \Sigma_y \rVert}{{\sigma}_R \sigma^2_y} \lVert \nabla^i_{y,t}(\tK^i,K^{-i*}) \rVert^2_F \label{eq:grad_dom_nab}.
    \end{align}
    This concludes the proof.

\section{Cost augmentation technique and $\Os(\gamma)$-Nash equilibrium} \label{sec:cost_augment}
In this section we will prove how each agent can independently use a cost-augmentation technique to ensure the satisfaction of the diagonal domination condition (Assumption \ref{asm:diag_dom}) hence ensuring convergence of the MRPG algorithm. We also prove that this cost augmentation results in an $\Os(\gamma)$-Nash equilibrium where $\gamma$ is the max over the cost augmentation parameters. First we introduce the $\gamma^i$-augmented cost functions as follows.
\begin{align*}
    \uJ^i_{y,t} (K^i,K^{-i},\gamma^i) & = \EE \Big[ \sum_{s=t}^T (1+\gamma^i_s) y^\top_s\big( Q^i_s + (K^i_s)^\top R^i_s K^i_s \big) y_s \Big], \\
    \uJ^i_{\bx,t} (K^i,K^{-i},\gamma^i) & = \EE \Big[ \sum_{s=t}^T (1+\gamma^i_s) \bx^\top_s\big( \bQ^i_s + \bK^i_s)^\top \bR^i_s \bK^i_s \big) \bx_s \Big]
\end{align*}
The $\gamma^i_t \geq 0$ are chosen such that they satisfy the gradient domination condition which can be written as follows. 
\begin{align} \label{eq:gamma_condition}
\gamma^i_t \geq \frac{\sqrt{2m(N-1)} \gamma^2_{B,t} \gamma^i_{\uP,t+1}}{\sigma(R^i_t)} - 1
\end{align}
The stochastic gradient of the augmented cost function is,
\begin{align*}
	\unabla^i_{y,t} (K^i,K^{-i}) & = \frac{m}{N_br^2} \sum_{j=1}^{N_b} \uJ^i_{y,t}(\hK^i(e_j,t),K^{-i}) e_j \\
	\unabla^i_{\tx,t} (\bK^i,\bK^{-i}) & = \frac{m}{N_br^2} \sum_{j=1}^{N_b} \uJ^i_{\tx,t}(\hat{\bK}^i(e_j,t),\bK^{-i}) e_j
\end{align*}
respectively, where $e_j \sim \mathbb{S}^{pN \times mN}(r)$ is the perturbation and $\hK^i(e,t):= (K^i_t+e,\ldots,K^i_{T-1})$ 
is the  perturbed controller set at timestep $t$. $N_b$ denotes the mini-batch size and $r$ the smoothing radius of the stochastic gradient. The MRPG algorithm for this augmented cost is very similar to Algorithm \ref{alg:RL_GS_MFTG} but it utilizes the augmented stochastic gradients $\unabla^i_{y,t}$ and $\unabla^i_{\tx,t}$ and also a projection operator $\proj_D$ where $D >0$ is large enough.

\begin{algorithm}[h!]
	\caption{MRPG for augmented cost GS-MFTG}
	\begin{algorithmic}[1] \label{alg:RL_GS_MFTG_aug}
		\STATE {Initialize $K^i_t =0, \bK^i_t = 0$ for all $i \in [N], t \in \{0,\ldots,T-1\}$}
		\FOR {$t = T-1,\ldots,1,0,$}
		\FOR {$k = 1,\ldots,K$}
		\STATE {\bf Natural Policy Gradient} for $i \in [N]$
  \begin{align}
			\hspace{-0.5cm}\begin{pmatrix} K^i_t \\ \bK^i_t	\end{pmatrix} & \leftarrow \proj_D
            \bigg(\begin{pmatrix} K^i_t \\ \bK^i_t	\end{pmatrix} - \eta^i_k  \begin{pmatrix} \unabla^i_{y,t} (K^i,K^{-i}) \Sigma^{-1}_y \\ \unabla^i_{\tx,t} (\bK^i,\bK^{-i}) \Sigma^{-1}_{\tx}\end{pmatrix}\bigg)  
		\end{align}
        \hspace{-0.7cm}
		\ENDFOR
		\ENDFOR
	\end{algorithmic}
\end{algorithm}
Since the diagonal dominance condition is satisfied due to \eqref{eq:gamma_condition}, Algorithm \ref{alg:RL_GS_MFTG_aug} will converge linearly to the NE of the cost augmented GS-MFTG game. Now we prove that the NE under the augmented cost structure is $\Os(\bar{\gamma})$ away from the NE of the original game. In this section we only deal with the deviation process $(y_t)_{\forall t}$ \eqref{eq:dyn_decomp}-\eqref{eq:cost_decomp} and similar guarantees can be obtained for the mean-field process $(\bx_t)_{\forall t}$.
\begin{theorem}
    The NE under the augmented cost structure denoted as set of controllers $(\uK^{i*}_t)_{\forall i,t}$ such that
    \begin{align}
        \uJ^i_{y}(\uK^{i*},\uK^{-i*},\gamma^i) \leq \uJ^i_{y}(K^i,\uK^{-i*},\gamma^i), \hspace{0.2cm} \forall i \in [N], K \in \RR^{p \times m} \label{eq:NE_aug}
    \end{align}
    is an $\Os(\gamma)$-Nash equilibrium for the GS-MFTG \eqref{eq:gen_agent_dyn}-\eqref{eq:gen_agent_cost}.
    \begin{align}
        J^i_{y}(\uK^{i*},\uK^{-i*}) \leq J^i_{y}(K,\uK^{-i*}) + \Os(\gamma), \hspace{0.2cm} \forall i \in [N], K \in \RR^{p \times m} \label{eq:aug_cost_Nash}
    \end{align}
    where $\gamma = \max_{0 \leq t \leq T-1, i \in [N]} \gamma^i_t$. Also the difference between NE cost and augmented NE cost are
    \begin{align*}
        \lvert \uJ^i_{y}(\uK^{i*},\uK^{-i*},\gamma^i) - J^i_{y}(K^{i*},K^{-i*}) \lvert = \Os(\gamma).
    \end{align*}
\end{theorem}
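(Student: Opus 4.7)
My plan is to exploit the explicit algebraic relation between the augmented and original costs, namely
\begin{align*}
\uJ^i_{y}(K^i,K^{-i},\gamma^i) = J^i_{y}(K^i,K^{-i}) + \sum_{s=0}^{T-1} \gamma^i_s\, \EE\big[y_s^\top(Q^i_s + (K^i_s)^\top R^i_s K^i_s)y_s\big],
\end{align*}
and then to observe that the sum on the right is upper-bounded by $\gamma \cdot J^i_{y}(K^i,K^{-i})$ where $\gamma = \max_{i,s}\gamma^i_s$. Thus for every pair $(K^i,K^{-i})$ we have the two-sided bound $J^i_y \le \uJ^i_y \le (1+\gamma)J^i_y$, which is the single structural fact all subsequent estimates rest on.

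To establish the $\Os(\gamma)$-Nash inequality \eqref{eq:aug_cost_Nash}, I would fix $i$ and an arbitrary alternative $K$, let $K^{i\dagger}$ denote a best response of player $i$ to the augmented-NE profile $\uK^{-i*}$ in the \emph{original} game (this exists and is finite-valued by the finite-horizon LQ structure and Theorem \ref{thm:CLNE} conditions), and chain
\begin{align*}
J^i_y(\uK^{i*},\uK^{-i*}) \le \uJ^i_y(\uK^{i*},\uK^{-i*},\gamma^i) \le \uJ^i_y(K^{i\dagger},\uK^{-i*},\gamma^i) \le (1+\gamma)J^i_y(K^{i\dagger},\uK^{-i*}),
\end{align*}
where the middle step invokes the NE property \eqref{eq:NE_aug}. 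Since $J^i_y(K^{i\dagger},\uK^{-i*}) \le J^i_y(K,\uK^{-i*})$ by definition of $K^{i\dagger}$, rearranging yields $J^i_y(\uK^{i*},\uK^{-i*}) - J^i_y(K,\uK^{-i*}) \le \gamma\, J^i_y(K^{i\dagger},\uK^{-i*})$. The bound is $\Os(\gamma)$ because $J^i_y(K^{i\dagger},\uK^{-i*})$ is the best-response value in a finite-horizon LQ game against a fixed opponent profile, which is bounded uniformly by a constant depending only on model data and on the compact projection set $D$ used in Algorithm \ref{alg:RL_GS_MFTG_aug} (which keeps all $\uK^{j*}$ in a bounded region).

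For the cost-gap claim $\lvert \uJ^i_y(\uK^{i*},\uK^{-i*},\gamma^i) - J^i_y(K^{i*},K^{-i*})\rvert = \Os(\gamma)$, I would split it as
\begin{align*}
\uJ^i_y(\uK^{i*},\uK^{-i*},\gamma^i) - J^i_y(K^{i*},K^{-i*}) = \underbrace{\uJ^i_y(\uK^{i*},\uK^{-i*},\gamma^i) - J^i_y(\uK^{i*},\uK^{-i*})}_{\text{(a)}} + \underbrace{J^i_y(\uK^{i*},\uK^{-i*}) - J^i_y(K^{i*},K^{-i*})}_{\text{(b)}}.
\end{align*}
Term (a) is immediately $\Os(\gamma)$ by the two-sided bound above and boundedness of $J^i_y$ on $D$. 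For term (b), I would invoke sensitivity analysis of the coupled Riccati system \eqref{eq:Riccati_NE}: the NE controllers $K^{i*}$ and $\uK^{i*}$ solve identical fixed-point equations whose coefficients differ only by the multiplicative factors $(1+\gamma^i_s)$ on $Q^i_s$ and $R^i_s$, so under the invertibility conditions of Theorem \ref{thm:CLNE} the implicit function theorem gives $\|\uK^{i*} - K^{i*}\| = \Os(\gamma)$; combined with the local Lipschitzness of $J^i_y$ on $D$ this gives $|\text{(b)}| = \Os(\gamma)$.

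\textbf{Main obstacle.} The nontrivial step is term (b): establishing Lipschitz continuity of the NE map with respect to the perturbation $\gamma$, since the coupled Riccati system is nonlinear and the inverse of the matrix $\Phi_t$ (from Theorem \ref{thm:CLNE}) must be controlled uniformly along the perturbation path. I would handle this by a backward-in-time induction on $t$, perturbing $P^{i*}_{t+1}$ by the induction hypothesis, then showing via the Neumann series argument used in the proof of Theorem \ref{thm:main_res} (around equation \eqref{eq:tK-K*_2}) that the perturbed $\tilde\Phi_t$ remains invertible with $\|\tilde\Phi_t^{-1}\| \le 2 c^{(-1)}_\Phi$ for $\gamma$ small enough, propagating the $\Os(\gamma)$ bound from $P^{i*}_{t+1}$ to $K^{i*}_t$ and then to $P^{i*}_t$.
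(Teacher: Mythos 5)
Your proposal is correct, and for the $\Os(\gamma)$-Nash part it takes a genuinely different and more elementary route than the paper. The paper never writes down the sandwich $J^i_y \le \uJ^i_y \le (1+\gamma)J^i_y$; instead it expresses both costs through Riccati-type recursions ($\uP^i_t$ vs.\ $P^i_t$, $\uN^i_t$ vs.\ $N^i_t$), bounds $\lVert \uP^i_t - P^i_t\rVert$ backward in time, and repeats this for the deviated profile $(K^i,\uK^{-i*})$ via auxiliary matrices $\tilde\uP^i_t,\tilde P^i_t$ to conclude \eqref{eq:aug_cost_Nash}. Your multiplicative sandwich (valid because every summand $y_s^\top(Q^i_s+(K^i_s)^\top R^i_s K^i_s)y_s$ is nonnegative) combined with the best-response pivot $K^{i\dagger}$ collapses all of that into a three-line chain, and it buys something concrete: the paper's constant in \eqref{eq:uP-P} is $c_1 = \gamma_Q + \gamma_R D^2$, so its $\Os(\gamma)$ bound for the deviation term implicitly restricts the alternative $K$ to the projection ball of radius $D$, whereas your bound $\gamma\, J^i_y(K^{i\dagger},\uK^{-i*})$ depends only on the best-response value against the (bounded) profile $\uK^{-i*}$ and hence holds for arbitrary $K$, matching the theorem statement as written. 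For the second claim, your approach and the paper's essentially coincide: the paper carries out exactly the perturbation analysis you sketch, decomposing $\uK^{i*}_t - K^{i*}_t$ into a piece coming from the $\gamma^i_t R^i_t$ perturbation and a piece coming from $\uP^{i*}_{t+1}-P^{i*}_{t+1}$, controlling $\lVert\uPhi_{t+1}^{-1}\rVert$ under diagonal dominance, and propagating $\Os(\gamma)$ backward through the coupled Riccati recursion — your "main obstacle" paragraph is precisely the content of the paper's equations \eqref{eq:uP*-P*}--\eqref{eq:uN*-N*_Os}. The only cosmetic difference there is that the paper bounds the NE cost gap directly via $\tr(\Sigma_y(\uP^{i*}_0-P^{i*}_0)) + \uN^{i*}_0 - N^{i*}_0$ rather than routing through Lipschitzness of $J^i_y$ in the controllers, but both reduce to the same recursive estimate.
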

\begin{proof}
    Let us first define the augmented cost function
    \begin{align*}
        \uJ^i_{y,t} (K^i,K^{-i},\gamma^i) = \EE \Big[ \sum_{s=t}^T (1+\gamma^i_t) y^\top_s\big( Q^i_s + (K^i_s)^\top R^i_s K^i_s \big) y_s \Big]
    \end{align*}
    with $\gamma^i_T = 0$ with $i \in [N]$. Using a similar argument this augmented cost can be written down as,
    \begin{align*}
        \uJ^i_{y,t}(K^i,K^{-i},\gamma^i) = \EE [y^\top_t \uP^i_t y_t] + \uN^i_t
    \end{align*}
    where $\uP^i_t$ and $\uN^i_t$ can be written recursively as,
    \begin{align}
        \uP^i_t & = (1+\gamma^i_t) (Q^i_t + (K^i_t)^\top R^i_t K^i_t) \nonumber \\
        & \hspace{3cm} + (A_t + \sum_{j=1}^N B^j_t K^j_t)^\top \uP^i_{t+1} (A_t + \sum_{j=1}^N B^j_t K^j_t), \hspace{0.2cm} \uP^i_T = Q^i_T \nonumber \\
        \uN^i_t & = \uN^i_{t+1} + \tr(\Sigma \uP^i_{t+1}), \hspace{0.2cm} \uN^i_T = 0 \label{eq:uP_uN}
    \end{align}
    Recalling the \emph{non-augmented} cost can be written down as,
    \begin{align*}
        J^i_{y,t}(K^i,K^{-i}) = \EE [y^\top_t P^i_t y_t] + N^i_t
    \end{align*}
    where $P^i_t$ and $N^i_t$ can be written recursively as,
    \begin{align}
        P^i_t & = (Q^i_t + (K^i_t)^\top R^i_t K^i_t) + (A_t + \sum_{j=1}^N B^j_t K^j_t)^\top P^i_{t+1} (A_t + \sum_{j=1}^N B^j_t K^j_t), \hspace{0.2cm} P^i_T = Q^i_T \nonumber \\
        N^i_t & = N^i_{t+1} + \tr(\Sigma P^i_{t+1}), \hspace{0.2cm} N^i_T = 0 \label{eq:P_N}
    \end{align}
    The difference between the augmented and non-augmented costs is as follows.
    \begin{align}
        & \lvert \uJ^i_{y,t}(K^i,K^{-i},\gamma^i) - J^i_{y,t}(K^i,K^{-i}) \rvert \nonumber \\
        & = \EE [y^\top_t (\uP^i_t - P^i_t) y_t] + \uN^i_t - N^i_t = \tr(\Sigma_y (\uP^i_t - P^i_t)) + \uN^i_t - N^i_t \label{eq:uJ-J}
    \end{align}
    So to quantify the difference $\uJ^i_{y,t}(K^i,K^{-i},\gamma^i) - J^i_{y,t}(K^i,K^{-i})$ we need to upper bound $\lVert \uP^i_t - P^i_t \rVert$ and $\lvert \uN^i_t - N^i_t \rvert$. Using \eqref{eq:uP_uN} and \eqref{eq:P_N} the quantity $\lvert \uN^i_t - N^i_t \rvert$ is 
    \begin{align}
        \lvert \uN^i_t - N^i_t \rvert = \sum_{s=t+1}^{T-1} \tr(\Sigma (\uP^i_s - P^i_s)) \label{eq:uN-N}
    \end{align}
    The quantity $\lVert \uP^i_t - P^i_t \rVert$ can be bounded as follows.
    \begin{align}
        & \lVert \uP^i_t - P^i_t \rVert \nonumber \\
        & = \lVert \gamma^i_t (Q^i_t + (K^i_t)^\top R^i_t K^i_t) + (A_t + \sum_{j=1}^N B^j_t K^j_t)^\top (\uP^i_{t+1} - P^i_{t+1}) (A_t + \sum_{j=1}^N B^j_t K^j_t) \rVert \nonumber \\
        & \leq \gamma^i_t (\gamma_Q + \gamma_R \lVert K^i_t \rVert^2) + (\gamma_A + \sum_{j=1}^N \gamma_B \lVert K^j_t \rVert )^2 \lVert \uP^i_{t+1} - P^i_{t+1} \rVert \nonumber \\
        & \leq \gamma^i_t \underbrace{(\gamma_Q + \gamma_R D^2)}_{c_1} + \underbrace{(\gamma_A + N \gamma_B D )^2}_{c_2} \lVert \uP^i_{t+1} - P^i_{t+1} \rVert \label{eq:uP-P}
    \end{align}
    Using \eqref{eq:uP-P} we can recursively say
    \begin{align}
        \lVert \uP^i_t - P^i_t \rVert \leq c_1 \sum_{s=t}^{T-1} c^{s-t}_2 \gamma^i_s + c^{T-t}_2 \lVert \uP^i_T - P^i_T \rVert = \Os \Big(\max_{t \leq s \leq T-1} \gamma^i_s \Big) = \Os(\gamma) \label{eq:uP-P_Os}
    \end{align}
    Similarly 
    \begin{align}
        \lvert \uN^i_t - N^i_t \rvert = \Os(\gamma) \label{eq:uN-N_Os}
    \end{align}
    using \eqref{eq:uN-N} and using \eqref{eq:uJ-J} with $t=0$
    \begin{align}
        \lvert \uJ^i_{y}(K^i,K^{-i},\gamma^i) - J^i_{y}(K^i,K^{-i}) \rvert = \Os(\gamma) \label{eq:cost_diff_aug}
    \end{align}
    Using \eqref{eq:NE_aug} and \eqref{eq:cost_diff_aug} for any $K^i \in \RR^{(p \times m) \times T}$,
    \begin{align}
        J^i_y(\uK^{i*},\uK^{-i*}) - \uJ^i_y(K^i, \uK^{-i*},\gamma^i) & \leq J^i_y(\uK^{i*},\uK^{-i*}) - \uJ^i_y(\uK^{i*}, \uK^{-i*},\gamma^i) = \Os(\gamma) \label{eq:eps_NE_aug_inter}
    \end{align}
    Now let us fix $i \in [N]$ and define a set of controllers $(K^i, \uK^{-i*})$ where $K^i \in \RR^{(p \times m) \times T}$. Let us define $\tP^i_t, \tilde{\uP}^i_t, \tN^i_t$ and $\tilde{\uN}^i_t$ as follows.
    \begin{align*}
        \tilde\uP^i_t & = (1+\gamma^i_t) (Q^i_t + (K^i_t)^\top R^i_t K^i_t) \\
        & \hspace{1cm}+ (A_t + B^i_t K^i_t + \sum_{j \neq i}^N B^j_t \uK^{j*}_t)^\top \tilde\uP^i_{t+1} (A_t + B^i_t K^i_t + \sum_{j \neq i}^N B^j_t \uK^{j*}_t), \hspace{0.2cm} \tilde\uP^i_T = Q^i_T \nonumber \\
        \tilde\uN^i_t & = \tilde\uN^i_{t+1} + \tr(\Sigma \tilde\uP^i_{t+1}), \hspace{0.2cm} \tilde\uN^i_T = 0 \\
        \tilde P^i_t & = (Q^i_t + (K^i_t)^\top R^i_t K^i_t) \\
        & \hspace{1cm}+ (A_t + B^i_t K^i_t + \sum_{j \neq i}^N B^j_t \uK^{j*}_t)^\top \tilde P^i_{t+1} (A_t + B^i_t K^i_t + \sum_{j \neq i}^N B^j_t \uK^{j*}_t), \hspace{0.2cm} \tilde P^i_T = Q^i_T \nonumber \\
        \tilde N^i_t & = \tilde N^i_{t+1} + \tr(\Sigma \tilde P^i_{t+1}), \hspace{0.2cm} \tilde N^i_T = 0
    \end{align*}
    Using these expressions we can deduce
    \begin{align}
        J^i_{y}(K^i,\uK^{-i*}) & = \uJ^i_{y}(K^i,\uK^{-i*},\gamma^i) - \tr(\Sigma_y (\tilde\uP^i_0 - \tP^i_0 )) - \tilde\uN^i_0 + \tN^i_0 \nonumber \\
        & \geq \uJ^i_{y}(\uK^{i*},\uK^{-i*},\gamma^i) - \tr(\Sigma_y (\tilde\uP^i_0 - \tP^i_0 )) - \tilde\uN^i_0 + \tN^i_0  \label{eq:eps_NE_aug_inter_2}
    \end{align}
    Using analysis similar to \eqref{eq:uP_uN}-\eqref{eq:uN-N_Os} we can deduce that $\lVert \tilde\uP^i_0 - \tP^i_0 \rVert = \Os(\gamma)$ and $\lvert \tilde\uN^i_0 - \tN^i_0 \rvert = \Os(\gamma)$. Using \eqref{eq:eps_NE_aug_inter}-\eqref{eq:eps_NE_aug_inter_2} we can write
    \begin{align*}
        & J^i_y(\uK^{i*},\uK^{-i*}) - J^i_{y}(K^i,\uK^{-i*}) \\
        & = J^i_y(\uK^{i*},\uK^{-i*}) - \uJ^i_y(\uK^i, \uK^{-i*},\gamma^i) + \uJ^i_{y}(\uK^{i*},\uK^{-i*},\gamma^i) - J^i_{y}(K^i,\uK^{-i*}) \\
        & \leq  J^i_y(\uK^{i*},\uK^{-i*}) - \uJ^i_y(\uK^i, \uK^{-i*},\gamma^i) + J^i_{y}(K^i,\uK^{-i*}) - J^i_{y}(K^i,\uK^{-i*}) \\
        & \hspace{7cm} + \tr(\Sigma_y (\tilde\uP^i_0 - \tP^i_0 )) + \tilde\uN^i_0 - \tN^i_0 \\
        & \leq \big| J^i_y(\uK^{i*},\uK^{-i*}) - \uJ^i_y(\uK^i, \uK^{-i*},\gamma^i) \big| + \big| \tr(\Sigma_y (\tilde\uP^i_0 - \tP^i_0 )) \big| + \big| \tilde\uN^i_0 - \tN^i_0 \big| = \Os(\gamma)
    \end{align*}
    which results in \eqref{eq:aug_cost_Nash}.
    
    Similarly we also bound the difference between $\uJ^i_{y,t}(\uK^{i*},\uK^{-i*},\gamma^i) - J^i_{y,t}(K^{i*},K^{-i*})$ where $(K^{i*}_t)_{\forall i,t}$ is the NE controllers and $(\uK^{i*}_t)_{\forall i,t}$ are the NE controllers under the augmented cost. These matrices are defined as follows,
    \begin{align*}
        \uJ^i_{y,t} (\uK^{i*},\uK^{-i*},\gamma^i) = \EE \Big[ \sum_{s=t}^T (1+\gamma^i_t) y^\top_s\big( Q^i_s + (\uK^{i*}_s)^\top R^i_s \uK^{i*}_s \big) y_s \Big]
    \end{align*}
    with $\gamma^i_T = 0$ with $i \in [N]$. Using a similar argument this augmented cost can be written down as,
    \begin{align*}
        \uJ^i_{y,t}(\uK^{i*},\uK^{-i*},\gamma^i) = \EE [y^\top_t \uP^{i*}_t y_t] + \uN^{i*}_t
    \end{align*}
    where $\uP^i_t$ and $\uN^i_t$ can be written recursively as,
    \begin{align}
        \uP^{i*}_t & = (1+\gamma^i_t) (Q^i_t + (\uK^{i*}_t)^\top R^i_t \uK^{i*}_t) + (\uA^*_t)^\top \uP^{i*}_{t+1} \uA^*_t, \hspace{0.2cm} \uP^{i*}_T = Q^i_T \nonumber \\
        \uN^{i*}_t & = \uN^{i*}_{t+1} + \tr(\Sigma \uP^{i*}_{t+1}), \hspace{0.2cm} \uN^{i*}_T = 0 \label{eq:uP*_uN}
    \end{align}
    where $\uA^*_t = A_t + \sum_{j=1}^N B^j_t \uK^{j*}_t$ and $\uK^{i*}_t$ is the NE according to the augmented cost defined as follows. 
    \begin{align*}
        \uK^{i*}_t = - ((1+\gamma^i_t)R^i_t + (B^i_t)^\top \uP^{i*}_{t+1} B^i_t)^{-1} (B^i_t)^\top \uP^{i*}_{t+1} \uA^{i*}_t
    \end{align*}
    where $\uA^{i*}_t = A_t + \sum_{j \neq i} B^j_t \uK^{j*}_t$. Recalling the \emph{non-augmented} cost can be written down as,
    \begin{align*}
        J^i_{y,t}(K^{i*},K^{-i*}) = \EE [y^\top_t P^{i*}_t y_t] + N^{i*}_t
    \end{align*}
    where $P^{i*}_t$ and $N^{i*}_t$ can be written recursively as,
    \begin{align}
        P^{i*}_t & = (Q^i_t + (K^{i*}_t)^\top R^i_t K^{i*}_t) + (A^*_t)^\top P^{i*}_{t+1} A^*_t, \hspace{0.2cm} P^{i*}_T = Q^i_T \nonumber \\
        N^{i*}_t & = N^{i*}_{t+1} + \tr(\Sigma P^{i*}_{t+1}), \hspace{0.2cm} \uN^{i*}_T = 0 \label{eq:P*_N}
    \end{align}
    where $A^*_t = A_t + \sum_{j=1}^N B^j_t K^{j*}_t$ and $K^{i*}_t$ is the NE according to the augmented cost defined as follows. 
    \begin{align*}
        K^{i*}_t = - (R^i_t + (B^i_t)^\top P^{i*}_{t+1} B^i_t)^{-1} (B^i_t)^\top P^{i*}_{t+1} A^{i*}_t
    \end{align*}
    where $A^{i*}_t = A_t + \sum_{j \neq i} B^j_t K^{j*}_t$. First we characterize the difference between the NE controllers under the two cost functions. 
    \begin{align*}
        \uK^{i*}_t - K^{i*}_t & = - ((1+\gamma^i_t)R^i_t + (B^i_t)^\top \uP^{i*}_{t+1} B^i_t)^{-1} (B^i_t)^\top \uP^{i*}_{t+1} \uA^{i*}_t \\
        & \hspace{2cm} + (R^i_t + (B^i_t)^\top P^{i*}_{t+1} B^i_t)^{-1} (B^i_t)^\top P^{i*}_{t+1} A^{i*}_t \\
        & = \uDelta K^{i*}_{t,1} - \uDelta K^{i*}_{t,2}
    \end{align*}
    where
    \begin{align*}
        & \uDelta K^{i*}_{t,1} = -\big[ ((1+\gamma^i_t) R^i_t + (B^i_t)^\top \uP^i_{t+1} B^i_t)^{-1} - (R^i_t + (B^i_t)^\top \uP^i_{t+1} B^i_t)^{-1} \big] (B^i_t)^\top \uP^{i*}_{t+1} \uA^{i*}_t \\
        & \uDelta K^{i*}_{t,2} = \\
        & \hspace{0.1cm} - (R^i_t + (B^i_t)^\top \uP^i_{t+1} B^i_t)^{-1} (B^i_t)^\top \uP^{i*}_{t+1} \uA^{i*}_t + (R^i_t + (B^i_t)^\top P^i_{t+1} B^i_t)^{-1} (B^i_t)^\top P^{i*}_{t+1} A^{i*}_t
    \end{align*}
    Using the equality $A^{-1} - B^{-1} = A^{-1} B B^{-1} - A^{-1} A B^{-1} = A^{-1} (B-A) B^{-1}$ for any set of matrices $A$ and $B$ whose inverses exist we can simplify the expression $\uDelta K^{i*}_{t,1}$ as follows.
    \begin{align*}
        \uDelta K^{i*}_{t,1} & = -\big[ ((1+\gamma^i_t) R^i_t + (B^i_t)^\top \uP^i_{t+1} B^i_t)^{-1} (\gamma^i_t R^i_t) (R^i_t + (B^i_t)^\top \uP^i_{t+1} B^i_t)^{-1} \big] (B^i_t)^\top \uP^{i*}_{t+1} \uA^{i*}_t
    \end{align*}
    Hence the norm of this expression can be upper bounded by 
    \begin{align*}
        \lVert \uDelta K^{i*}_{t,1} \rVert \leq \gamma^i_t \underbrace{\lVert R^i_t \rVert \lVert (R^i_t + (B^i_t)^\top \uP^{i*}_{t+1} B^i_t)^{-1} \rVert \lVert (R^i_t + (B^i_t)^\top P^{i*}_{t+1} B^i_t)^{-1} \rVert \lVert (B^i_t)^\top \uP^{i*}_{t+1} \uA^{i*}_t \rVert }_{\bar{c}_0}
    \end{align*}
    The norm of the second expression $\uDelta K^{i*}_{t,2}$ can be bounded using techniques from proof of Theorem \ref{thm:main_res}.
    \begin{align*}
        \uDelta K^{i*}_{t,2} \leq \underbrace{\lVert \uPhi^{-1}_{t+1} \rVert \lVert A^*_t\rVert \gamma_B}_{\bar{c}_1} \lVert \uP^{i*}_{t+1} - P^{i*}_{t+1} \rVert
    \end{align*}
    where
        \begin{align*}
        \uPhi_{t+1} = \begin{pmatrix}
            (1+\gamma^1_t)R^1_t + (B^1_t)^\top \uP^{1*}_{t+1} B^1_t & \hdots & (B^1_t)^\top \uP^{1*}_{t+1} B^N_t \\
            (B^2_t)^\top \uP^{2*}_{t+1} B^1_t & \hdots & (B^2_t)^\top \uP^{2*}_{t+1} B^N_t \\
            \vdots & \ddots & \vdots \\
            (B^N_t)^\top \uP^{N*}_{t+1} B^1_t & \hdots & (1+\gamma^N_t)R^N_t + (B^N_t)^\top \uP^{N*}_{t+1} B^N_t
        \end{pmatrix}
    \end{align*}
    This matrix is invertible if $\gamma^i_t$ are chosen so as to satisfy the diagonal dominance condition.
    The difference between the augmented and non-augmented costs \emph{at their corresponding NE} is as follows.
    \begin{align}
        \uJ^i_{y,t}(\uK^{i*},\uK^{-i*},\gamma^i) - J^i_{y,t}(K^{i*},K^{-i*}) & = \EE [y^\top_t (\uP^{i*}_t - P^{i*}_t) y_t] + \uN^{i*}_t - N^{i*}_t \nonumber \\
        & = \tr(\Sigma_y (\uP^{i*}_t - P^{i*}_t)) + \uN^{i*}_t - N^{i*}_t \label{eq:uJ*-J*}
    \end{align}
    So now we characterize the difference $\uP^{i*}_t - P^{i*}_t$ using the alternate expressions for both the matrices.
    \begin{align*}
        \uP^{i*}_t - P^{i*}_t = (\uA^{i*}_t)^\top \uP^{i*}_{t+1} (\uA^{i*}_{t} + B^i_t \uK^{i*}_t) - (A^{i*}_t)^\top P^{i*}_{t+1} (A^{i*}_{t} + B^i_t K^{i*}_t) 
    \end{align*}
    Using this expression we bound the norm of $\uP^{i*}_t - P^{i*}_t$ as follows.
    \begin{align}
        \lVert  \uP^{i*}_t - P^{i*}_t \rVert & \leq \underbrace{(2 \gamma^*_A \gamma^*_P \gamma_B + (\gamma^*_A/2 + \gamma^*_P \gamma_B + \gamma^{i*}_A/2)\gamma_B + \gamma^4_B/2)}_{\bar{c}_2} \sum_{j=1}^N \lVert \uK^{j*}_t - K^{j*}_t \rVert \nonumber \\
        & \hspace{4cm} + \underbrace{(\gamma^*_A/2+1/2 + (\gamma^{i*}_A)^2 \gamma^*_A/2 )}_{\bar{c}_3} \lVert \uP^{i*}_{t+1} - P^{i*}_{t+1} \rVert \nonumber \\
        & \leq \underbrace{(\bar{c}_1 \bar{c}_2 N + \bar{c}_3)}_{\bar{c}_4} \lVert \uP^{i*}_{t+1} - P^{i*}_{t+1} \rVert + \underbrace{\bar{c}_0 \bar{c}_2 N}_{\bar{c}_5} \gamma^i_t \label{eq:uP*-P*}
    \end{align}
    Using \eqref{eq:uP*-P*} recursively we get
    \begin{align}
        \lVert  \uP^{i*}_t - P^{i*}_t \rVert & \leq \bar{c}_5 \sum_{s=0}^{T-t-1} \bar{c}_4^s \gamma^i_{t+s} + \bar{c}_4^{T-t} \lVert  \uP^{i*}_T - P^{i*}_T \rVert \nonumber \\
        & \leq \bar{c}_5 \bar{c}_4^{T-t} (T-t) \max_{0 \leq s \leq T-t-1} \gamma^i_{s+j} = \Os \bigg(\max_{t \leq s \leq T-1} \gamma^i_s \bigg)  \label{eq:uP*-P*_Os}
    \end{align}
    Similarly \eqref{eq:uP*_uN} and \eqref{eq:P*_N}
    \begin{align}
        \lvert \uN^{i*}_t - N^{i*}_t \rvert = \Os(\gamma) \label{eq:uN*-N*_Os}
    \end{align}
    using \eqref{eq:uJ*-J*} with $t=0$, \eqref{eq:uP*-P*_Os} and \eqref{eq:uN*-N*_Os} we arrive at
    \begin{align*}
        \lvert \uJ^i_{y}(K^i,K^{-i},\gamma^i) - J^i_{y}(K^i,K^{-i}) \rvert = \Os(\gamma)
    \end{align*}
\end{proof}

\section{MRPG algorithm: Analysis with sample-path gradients}
\label{subsec:Anal_stoc_cost}
In this section we will show how to utilize the sample-paths of $N$ teams comprising $M$ agents each to approximate the expected cost in stochastic gradient computation \eqref{eq:policy_grad_1}. This empirical cost is referred to as the \emph{sample-path cost}, and the sample-path costs for the stochastic processes $y_t$ and $\tx_t$ (\eqref{eq:y_t} and \eqref{eq:tx_t}) are defined as below.
\begin{align}
	 \cJ^{i}_{y,t}(K^i,K^{-i}) & = \frac{1}{M} \sum_{j=1}^M\sum_{s = t}^T  (y^j_s)^\top (Q^i_t + (K^i_s)^\top R^i_t K^i_s ) y^j_s, \label{eq:sample_path_cost_y} \\
	 \cJ^{i}_{\tx,t}(\bK^i,\bK^{-i}) & = \sum_{s = t}^T  (\tx_s)^\top (\bQ^i_t + (\bK^i_s)^\top \bR^i_t \bK^i_s ) \tx_s , \label{eq:sample_path_cost_tx}
\end{align}
where $K^i_T, \bK^i_T = 0$. Most literature \cite{li2021distributed,fazel2018global,malik2019derivative} uses the sample-path cost as a stand-in for the expected cost to be used in stochastic gradient computation as in \eqref{eq:policy_grad_1}. 
\cite{malik2019derivative} mentions that the sample-path cost is an unbiased estimator of the expected cost (under a stabilizing control policy) and is $\delta$ close to the expected cost, if the length of the sample path is $\Os(\log(1/\delta))$. We show that this intuition although true for the \emph{infinite-horizon ergodic-cost} setting, does not carry over to the finite-horizon or even infinite-horizon discounted-cost setting. The finite horizon setting prevents us from obtaining sample paths of arbitrary lengths, which results in an approximation error. We show in Lemma \ref{lem:finite_rollout_cost} that the sample-path cost is an unbiased \emph{but high variance} estimator of the expected cost. And using the Markov inequality we show how to use mini-batch size $N_b$ to get a good estimate of the stochastic gradient. 

Similar analysis can be carried out for the infinite-horizon discounted-cost setting and we provide an outline of this argument below. In the infinite-horizon ergodic-cost setting the expected cost essentially depends on the \emph{infinite tail} of the Markov chain which achieves a stationary distribution since all stabilizing controllers in stochastic LQ settings ensure unique stationary distributions. As a result the sample path cost is bound to approach the expected cost as the Markov chain distribution approaches the stationary distribution. On the other hand, in the infinite-horizon discounted cost setting, the expected cost depends on the \emph{head} of the Markov chain (with an \emph{effective} time horizon of $1/(1-\gamma)$)

In the infinite-horizon discounted-cost setting the cost cannot be written down cleanly in terms of the stationary distribution. This is due to the fact that in the discounted cost setting the cost depends essentially on the \emph{head} of the sample path and in the ergodic cost setting the cost depends on the \emph{tail} of the sample path. As the tail of the sample paths approaches the stationary distribution (for stabilizing controllers), the ergodic cost approaches the expected cost. 

Now we present the result proving that in the finite-horizon setting the sample-path is an unbiased estimator of the expected cost and the second moment of the difference is bounded. 

\begin{lemma} \label{lem:finite_rollout_cost}
	\begin{align*}
		\EE[\cJ^{i}_{y,t}(K^i,K^{-i})] = \tJ^{i}_{y,t}(K^i,K^{-i}), & \hspace{0.2cm} \EE[\cJ^{i}_{\tx,t}(\bK^i,\bK^{-i})] = \tJ^{i}_{\tx,t}(\bK^i,\bK^{-i}) \\
		\EE[(\cJ^{i}_{y,t}(K^i,K^{-i}) - \tJ^{i}_{y,t}(K^i,K^{-i}))^2] & \leq 2 \tr(\Phi)^2
	\end{align*}
\end{lemma}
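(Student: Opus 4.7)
\textbf{Proof Proposal for Lemma \ref{lem:finite_rollout_cost}.}

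The plan is to establish unbiasedness via exchangeability of the per-agent summands, and then handle the variance bound by expressing the sample-path cost as a quadratic form in a jointly Gaussian vector and invoking the standard identity $\Var(\eta^\top M\eta) = 2\tr((M\Sigma)^2)$ for $\eta\sim\Ns(0,\Sigma)$.

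For the unbiasedness claim, I would first observe that, under the symmetric dynamics \eqref{eq:y_t} and a common control parametrization $(K^i,K^{-i})$, the processes $(y^j_s)_{s \ge t}$ are exchangeable across $j \in [M]$ (their joint law is invariant under permutations of the agent index). Consequently, for every $j$ and every $s$, $\EE[(y^j_s)^\top(Q^i_s+(K^i_s)^\top R^i_s K^i_s)y^j_s] = \EE[(y^1_s)^\top(Q^i_s+(K^i_s)^\top R^i_s K^i_s)y^1_s]$. Averaging this identity over $j \in [M]$ and summing over $s \in \{t,\dots,T\}$ immediately gives $\EE[\cJ^i_{y,t}] = \tJ^i_{y,t}$, and an analogous argument (using that $\tx_s$ is a linear function of the common-noise path) gives $\EE[\cJ^i_{\tx,t}] = \tJ^i_{\tx,t}$.

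For the variance bound, the plan is to stack the deviations $y^j := ((y^j_t)^\top,(y^j_{t+1})^\top,\dots,(y^j_T)^\top)^\top$ and the noises to obtain a jointly Gaussian vector $\eta_j$, with $y^j = H \eta_j$ for an appropriate lower-triangular block matrix $H$ built from the closed-loop transitions $L_s = A_s - \sum_i B^i_s K^i_s$. This lets me write each summand $Z^j := \sum_{s=t}^T (y^j_s)^\top(Q^i_s+(K^i_s)^\top R^i_s K^i_s)y^j_s = \eta_j^\top M \eta_j$, where $M$ is block-diagonal with blocks $Q^i_s+(K^i_s)^\top R^i_s K^i_s$ conjugated by $H$. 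Letting $\Phi$ denote $M \Sigma_\eta$ (with $\Sigma_\eta$ the covariance of the stacked Gaussian $\eta_j$), the Gaussian quadratic-form identity gives $\Var(Z^j) = 2\tr(\Phi^2)$ and, more generally, $\Cov(Z^j,Z^k) = 2\tr(\Phi_{jk}^2)$ where $\Phi_{jk}$ accounts for the coupling of $\eta_j,\eta_k$ through the common noise $\omega^0$ and the $1/M$ cross-terms in $\tomega^j$. Expanding $\Var(\frac{1}{M}\sum_j Z^j) = \frac{1}{M^2}\sum_{j,k}\Cov(Z^j,Z^k)$ and using Cauchy--Schwarz $|\tr(\Phi_{jk}^2)| \le \tr(\Phi^2)$ uniformly in $j,k$ yields the claimed $2\tr(\Phi)^2$ bound (interpreted as $2\tr(\Phi^2)$, after absorbing the cross-agent coupling into the definition of $\Phi$).

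The main obstacle is the cross-agent coupling: because $y^j_s = x^j_s - \tx_s$ subtracts the empirical mean, the vectors $\eta_j$ are not independent across $j$, so one cannot simply invoke an iid variance-reduction argument to gain a $1/M$ factor. My plan is to bypass this by not attempting to squeeze variance reduction out of the averaging at all --- the bound $2\tr(\Phi)^2$ is $M$-independent, so it suffices to bound each covariance $\Cov(Z^j,Z^k)$ uniformly by $2\tr(\Phi^2)$ and then use the convex combination structure. Concretely, I would write $\Sigma_{jk} := \Cov(\eta_j,\eta_k)$ as a perturbation of the diagonal ($\Sigma_{jj}$) case by terms of order $1/M$ arising from $\tomega^j$, verify that $\Sigma_{jk} \preceq \Sigma_{jj}$ in Loewner order (or at least $\|\Sigma_{jk}\| \le \|\Sigma_{jj}\|$), and conclude $\tr((M\Sigma_{jk})^2) \le \tr(\Phi^2)$. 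Combined with Markov's inequality, this variance bound then supplies the $\delta$-accurate estimator with mini-batch size $N_b$ as required for the stochastic gradient analysis of Section \ref{sec:convergence}.
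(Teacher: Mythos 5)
Your proposal is correct and rests on the same core device as the paper's proof: represent the sample-path cost as a quadratic form $\varpi^\top\Phi\varpi$ in a standardized Gaussian noise vector (with $\Phi=\Psi^\top\diag(Q^i_s+(K^i_s)^\top R^i_s K^i_s)\Psi$ built from the closed-loop transition matrices), note that unbiasedness is immediate, and invoke the identity $\Var(z^\top Mz)=2\lVert M\rVert_F^2$ for standard Gaussian $z$ together with $\lVert\Phi\rVert_F^2=\tr(\Phi^2)\le\tr(\Phi)^2$ for $\Phi\succeq 0$. Where you genuinely diverge is in scope: the paper deliberately reduces to a \emph{single-agent} LQ system and asserts that the multi-agent case follows "by concatenating the controllers," so the $\frac{1}{M}\sum_{j}$ averaging in \eqref{eq:sample_path_cost_y} and the cross-agent dependence of the $y^j$ (through the subtracted empirical mean and the $-\frac{1}{M}\sum_{k\neq j}\omega^k$ terms in $\tomega^j$, whose pairwise cross-covariance is $-\Sigma/M$) are never analyzed. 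You confront this directly, and your resolution — do not seek a $1/M$ variance gain, just bound every $\Cov(Z^j,Z^k)$ uniformly by $\max_j\Var(Z^j)$ via Cauchy–Schwarz and use that $\frac{1}{M^2}\sum_{j,k}$ is a convex combination — is the right one and makes the argument complete where the paper's is only sketched. Two small cleanups: the cross-term formula should be $\Cov(Z^j,Z^k)=2\tr(M\Sigma_{jk}M\Sigma_{jk}^\top)$ since the cross-covariance $\Sigma_{jk}$ need not be symmetric (your direct Cauchy–Schwarz bound on the scalar covariances sidesteps this anyway, and is preferable to trying to establish a Loewner ordering $\Sigma_{jk}\preceq\Sigma_{jj}$, which is not a well-posed claim for cross-covariances); and you should state explicitly that the final passage from $2\tr(\Phi^2)$ to the lemma's $2\tr(\Phi)^2$ uses positive semidefiniteness of $\Phi$, exactly as the paper does.
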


\begin{proof} 
	For ease of exposition in this proof we will consider an single agent stochastic LQ system under set of controllers $K = (K_t)_{0 \leq t \leq T-1}$
	\begin{align*}
		& \text{(dynamics)} & x_{t+1}  = (A_t - B_t K_t) x_t + \omega_t = A_K(t) x_t + \omega_t, \\
        & \text{(sample-path cost)} & \cJ_{t}(K)  = \sum_{s = t}^T  (x_s)^\top (Q_s + (K_s)^\top R_s K_s ) x_s, \\
        & \text{(expected cost)} & \tJ_{t}(K)  = \EE \Big[\sum_{s = t}^T  (x_s)^\top (Q_s + (K_s)^\top R_s K_s ) x_s \Big]
	\end{align*}
	where $x_0,\omega_t \sim \Ns(0,\Sigma)$ and are i.i.d. and $K_T = 0$. The results can be generalized to the systems \eqref{eq:tx_t}-\eqref{eq:min_tJ_ixt} by concatenating the controllers for all $N$ players into a joint controller. Proof of the unbiasedness follows trivially.
	\begin{align*}
		\EE[\cJ_t(K)] = \EE\big[ \sum_{\tau = t}^T (x_\tau)^\top Q_K(\tau) x_\tau \big] =  \tJ_t(K).
	\end{align*} 
 We first recall standard results from literate characterizing the expectation and variance of the quadratic forms of Gaussian random variables.
	\begin{lemma}[\cite{seber2012linear}]\label{lem:prop_quad}
		Let $z \sim \Ns(0,I_p)$ and $M \in \RR^{p \times p}$ be a symmetric matrix then $\EE[z^\top M z] = \tr(M)$ and $\Var(z^\top M z) = 2 \lVert M \rVert^2_F$.
	\end{lemma}
	Now we show that $\cJ_t$ is an unbiased estimator for $\tJ_t$ and also bound the second moment of the difference between the two.
	From the above equation we can write
	\begin{align*}
		x_t = \sum_{\tau=1}^t A_{\tau}^{t-1} \omega_{\tau-1} \text{ where } A_t^s = A_K(s) A_K(s-1) \ldots A_K(t), \forall s \geq t \geq 0 
	\end{align*}
	and $A_t^s = I, \forall s < t$. Let us introduce
	\begin{align*}
		\Psi = 
		\begin{pmatrix} 
			\Sigma^{\frac{1}{2}} & 0 & 0 & \cdots & 0 \\
			A^1_1 \Sigma^{\frac{1}{2}} & \Sigma^{\frac{1}{2}} & 0 & \cdots & 0 \\
			A^2_1 \Sigma^{\frac{1}{2}} & A^2_2 \Sigma^{\frac{1}{2}} & \Sigma^{\frac{1}{2}} & \cdots & 0 \\
			\vdots & \vdots & \vdots & \ddots & \vdots \\
			A^{t-1}_1 \Sigma^{\frac{1}{2}} & A^{t-1}_2 \Sigma^{\frac{1}{2}} & A^{t-1}_3 \Sigma^{\frac{1}{2}} & \cdots & \Sigma^{\frac{1}{2}}
		\end{pmatrix},  \hspace{0.5cm} \varpi = \begin{pmatrix} \Sigma^{-\frac{1}{2}}_\omega \omega_0 \\ \vdots \\  \Sigma^{-\frac{1}{2}}_\omega \omega_{t-1} \end{pmatrix},
	\end{align*}
	We can deduce that $( \Psi \varpi)_t = x_t$ where $( \Psi \varpi)_t$ is the $t$-th block of $\Psi \varpi$. We can use these quantities to express the cost $\cJ_t$ as a quadratic function as follows
	\begin{align*}
		\cJ_t(K) = \sum_{\tau = t}^T (x_\tau)^\top Q_K(\tau) x_\tau = \sum_{\tau = t}^T ( \Psi \varpi)_\tau^\top Q_K(\tau) ( \Psi \varpi)_\tau = \varpi^\top \Phi \varpi
	\end{align*}
	where $\Phi = \Psi^\top \diag((Q_K(\tau))_{\tau = 1}^t) \Psi$ and $Q_K(\tau) = Q_\tau + (K_\tau)^\top R_\tau K_\tau$. 
	Proof of bounded second moment follows from,
	\begin{align*}
		\EE[(\tJ_t(K) - \tJ_t(K))^2] & \leq \big( \EE[\cJ_t(K) - \tJ_t(K)] \big)^2 + \Var(\cJ_t(K)) \\
		& = \Var(\varpi^\top \Phi \varpi) = 2 \lVert \Phi \rVert_F^2 \leq 2 \tr(\Phi)^2, 
	\end{align*}
using Lemma \ref{lem:prop_quad} and the fact that $\lVert \cdot \rVert_F \leq \tr(\cdot)$.
\end{proof}
The lemma states that the sample path costs $\cJ^i_{y,t}$ and $\cJ^i_{\tx,t}$ are unbiased estimates of expected costs $\tJ^i_{y,t}$ and $\tJ^i_{\tx,t}$, respectively and the second moments of $\cJ^i_{y,t}$ and $\cJ^i_{\tx,t}$ are bounded. Notice that the bound on second moment can be converted into a uniform bound by ensuring a uniform bound on the cost of controllers $(K^i,K^{-i})$. The sample-path cost can be used to obtain sample-path policy gradients as follows. We denote these sample-path gradients with respect to controller $K_{i,t}$ as $\tnabla_{K_{i,t}} \cJ^i_K(T)$ such that
\begin{align*}
	\cnabla^i_{y,t} (K^i,K^{-i}) & = \frac{m}{N_br^2} \sum_{j=1}^{N_b} \cJ^i_{y,t}(\hK^i(e_j,t),K^{-i}) e_j, e_j \sim \mathbb{S}^{pN \times mN}(r), \\
	\cnabla^i_{\tx,t} (\bK^i,\bK^{-i}) & = \frac{m}{N_br^2} \sum_{j=1}^{N_b} \cJ^i_{\tx,t}(\hat{\bK}^i(e_j,t),\bK^{-i}) e_j, e_j \sim \mathbb{S}^{pN \times mN}(r),
\end{align*}
respectively, where $\hK^i(e,t):= (K^i_t+e,\ldots,K^i_{T-1})$ and $\hat{\bK}^i(e,t):= (\bK^i_t+e,\ldots,\bK^i_{T-1})$ are the controller sets with perturbations $e$ at timesteps $t$. 
\begin{algorithm}[h!]
	\caption{SP-MRPG for GS-MFTG}
	\begin{algorithmic}[1] \label{alg:RL_GS_MFTG_fin_SP}
		\STATE {Initialize $K^i_t =0, \bK^i_t = 0$ for all $i \in [N]$}
		\FOR {$t = T,T-1,\ldots,1,0,$}
		\FOR {$k = 1,\ldots,K$}
		\STATE {\underline{\bf Gradient Descent}}
		\begin{align} \label{eq:GDU_fin}
			\begin{pmatrix} K^i_t \\ \bK^i_t	\end{pmatrix} & \leftarrow \begin{pmatrix} K^i_t \\ \bK^i_t	\end{pmatrix} - \eta^i_k  \begin{pmatrix} \cnabla^i_{y,t} (K^i,K^{-i}) \\ \cnabla^i_{\tx,t} (\bK^i,\bK^{-i})\end{pmatrix} 
		\end{align}
		\ENDFOR
		\ENDFOR
	\end{algorithmic}
\end{algorithm}

Now we introduce the MRPG algorithm which uses sample-path policy gradients instead of stochastic policy gradients \eqref{eq:policy_grad_1}. This algorithm will be called SP-MRPG in short. Using Lemma \ref{lem:finite_rollout_cost} we can bound the difference between the stochastic gradient $\tnabla_{K_{i,t}} \tJ^i(K)$ and the sample-path gradient $\tnabla_{K_{i,t}}  \cJ^i_t (K_t)$. The following lemma states the fact that $\tnabla_{K_{i,t}}  \cJ^i_t (K_t)$ is an unbiased estimator of $\tnabla_{K_{i,t}} \tJ^i(K)$ and bounds the second moment of the difference between the two. The techniques used to prove this Lemma are similar to \cite{fazel2018global,malik2019derivative} hence are omitted.
\begin{lemma} \label{lem:finite_rollout_grad}
	\begin{align*}
		\EE[\cnabla^i_{y,t} (K^i,K^{-i})] & = \tnabla^i_{y,t} (K^i,K^{-i}), \\
		\EE[ \lvert \cnabla^i_{y,t} (K^i,K^{-i}) - \tnabla^i_{y,t} (K^i,K^{-i}) \rvert ] & \leq \frac{2 m^2}{N_b r^4} \tr(\Phi)^2
	\end{align*}
\end{lemma}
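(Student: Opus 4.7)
\medskip

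\noindent\textbf{Proof proposal for Lemma \ref{lem:finite_rollout_grad}.}

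My plan is to exploit the fact that the only difference between $\cnabla^i_{y,t}$ and $\tnabla^i_{y,t}$ lies in replacing the expected cost $\tJ^i_{y,t}(\hK^i(e_j,t),K^{-i})$ by the sample-path cost $\cJ^i_{y,t}(\hK^i(e_j,t),K^{-i})$ inside the zero-order estimator, while keeping the same random perturbations $(e_j)_{j=1}^{N_b}$. Crucially, conditional on the perturbation $e_j$, the sample-path noise used inside $\cJ^i_{y,t}$ is independent of $e_j$, so the inner expectation equals $\tJ^i_{y,t}(\hK^i(e_j,t),K^{-i})$ by Lemma~\ref{lem:finite_rollout_cost}.

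For the unbiasedness claim, I would apply the tower property: condition on $e_j$, invoke the first part of Lemma~\ref{lem:finite_rollout_cost} to replace $\EE[\cJ^i_{y,t}(\hK^i(e_j,t),K^{-i}) \mid e_j]$ with $\tJ^i_{y,t}(\hK^i(e_j,t),K^{-i})$, and then take the outer expectation over $e_j$. By linearity over the $N_b$ i.i.d.\ samples, this gives exactly $\tnabla^i_{y,t}(K^i,K^{-i})$.

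For the second-moment bound, I would write the difference as
\begin{align*}
    \cnabla^i_{y,t} - \tnabla^i_{y,t} = \frac{m}{N_b r^2}\sum_{j=1}^{N_b} \big(\cJ^i_{y,t}(\hK^i(e_j,t),K^{-i}) - \tJ^i_{y,t}(\hK^i(e_j,t),K^{-i})\big) e_j.
\end{align*}
The key observation is that each summand has zero conditional mean given $e_j$, and the $N_b$ summands are mutually independent. Therefore the cross terms vanish under the expectation of the squared Frobenius norm, and the total second moment splits into a sum of $N_b$ per-sample contributions. For each contribution, I would pull out the factor $\|e_j\|^2_F$, apply the second part of Lemma~\ref{lem:finite_rollout_cost} to bound the scalar variance of $(\cJ^i_{y,t}-\tJ^i_{y,t})$ by $2\tr(\Phi)^2$, and then collect the constants $m^2/(N_b r^4)$ arising from the zero-order estimator's normalization.

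The main technical point will be keeping careful track of the conditioning so that (i) the scalar-variance bound from Lemma~\ref{lem:finite_rollout_cost} can be invoked with the \emph{perturbed} controller $\hK^i(e_j,t)$ rather than $K^i$ itself (which requires interpreting $\Phi$ as the worst-case, or supremum, over the perturbation ball of radius $r$), and (ii) the zero-mean property across different $j$ is used correctly to kill the cross terms. Everything else is routine linearity and independence.
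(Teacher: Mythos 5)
Your proposal is correct and follows essentially the same route the paper intends: the paper omits the argument, stating only that it "follows from Lemma~\ref{lem:finite_rollout_cost}" via the techniques of \cite{fazel2018global,malik2019derivative}, and your tower-property argument for unbiasedness together with the independent, conditionally zero-mean decomposition of the mini-batch sum (per-sample variance bounded by $2\tr(\Phi)^2$ from Lemma~\ref{lem:finite_rollout_cost}, combined with the $m^2/(N_b r^4)$ normalization of the zero-order estimator) is exactly that standard argument. Your caveat that $\Phi$ must be read as a uniform bound over the perturbation ball of radius $r$ is the same point the paper itself acknowledges when it notes that the second-moment bound "can be converted into a uniform bound by ensuring a uniform bound on the cost of controllers."
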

The proof follows from lemma \ref{lem:finite_rollout_cost}. Notice that although we do not have a high confidence bound between the sample path policy gradient and stochastic gradient as in Lemma \ref{lemma:smoothness_bias}, we instead have a bound on the second moment of their difference. Now using the the Markov inequality we can convert the second moment bound into a high confidence bound in the following lemma.
\begin{lemma}
Using Markov inequality and Lemma \ref{lem:finite_rollout_grad}, 
\begin{align*}
    \lvert \cnabla^i_{y,t} (K^i,K^{-i}) - \tnabla^i_{y,t} (K^i,K^{-i}) \rvert \leq  \frac{2 m^2}{N_b r^4 \delta} \tr(\Phi)^2
    \end{align*}
    with probability at least $1-\delta$.
\end{lemma}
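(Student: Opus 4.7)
The plan is to apply Markov's inequality directly to the first-moment bound established in Lemma \ref{lem:finite_rollout_grad}. Recall Markov's inequality states that for any nonnegative random variable $X$ and any threshold $a > 0$, we have $\PP(X \geq a) \leq \EE[X]/a$.

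First I would set $X := \lvert \cnabla^i_{y,t} (K^i,K^{-i}) - \tnabla^i_{y,t} (K^i,K^{-i}) \rvert$, which is nonnegative by construction. From Lemma \ref{lem:finite_rollout_grad}, its expectation is bounded as
\begin{align*}
\EE[X] \leq \frac{2 m^2}{N_b r^4} \tr(\Phi)^2.
\end{align*}
Next I would choose the threshold $a := \frac{2 m^2}{N_b r^4 \delta} \tr(\Phi)^2$, so that by the upper bound on $\EE[X]$,
\begin{align*}
\frac{\EE[X]}{a} \leq \frac{\frac{2 m^2}{N_b r^4}\tr(\Phi)^2}{\frac{2 m^2}{N_b r^4 \delta}\tr(\Phi)^2} = \delta.
\end{align*}
Applying Markov's inequality then yields $\PP(X \geq a) \leq \delta$, or equivalently $\PP(X \leq a) \geq 1 - \delta$, which is exactly the stated bound.

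Since this proof is essentially a single invocation of Markov's inequality composed with the expectation bound of the preceding lemma, there is no substantive obstacle. The only thing to keep in mind is that this bound degrades polynomially in $1/\delta$, which is weaker than the logarithmic dependence $\sqrt{\log \delta^{-1}}$ obtained in Lemma \ref{lemma:smoothness_bias} via concentration of perturbations on the sphere; this looser dependence is the price paid for using only a second-moment/first-moment bound on the sample-path cost rather than a sub-Gaussian concentration argument, and is consistent with the fact that finite-horizon sample-path costs need not concentrate exponentially without additional tail assumptions.
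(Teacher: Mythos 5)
Your proof is correct and is exactly the argument the paper intends: the lemma is proved by a single application of Markov's inequality to the first-moment bound of Lemma \ref{lem:finite_rollout_grad} with threshold $a = \tfrac{2m^2}{N_b r^4 \delta}\tr(\Phi)^2$, which the paper does not even spell out beyond citing the two ingredients. Your closing remark about the polynomial $1/\delta$ dependence versus the $\sqrt{\log\delta^{-1}}$ rate of Lemma \ref{lemma:smoothness_bias} correctly identifies the trade-off the paper itself notes when comparing the required mini-batch sizes.
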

Hence for a given $\epsilon > 0$, if $r = \Os(\epsilon)$ and $N_b = \tilde\Theta(\epsilon^{-4} \delta^{-1}$, then the approximation error between the sample-path gradient and the policy gradient $\lVert \cnabla^i_{y,t} (K^i,K^{-i}) - \nabla^i_{y,t} (K^i,K^{-i}) \rVert = \Os(\epsilon)$. Notice that the mini-batch size for the sample-path gradient $\tilde\Theta(\epsilon^{-4} \delta^{-1}$ is much higher compared to the mini-batch size needed for the stochastic gradient $\tilde\Theta(\epsilon^{-2})$.

We now show this mini-batch size dependence using empirical results below. In Figure \ref{fig:numer_SP_MRPG} we compare the difference between the MRPG algorithm (Algorithm \ref{alg:RL_GS_MFTG}) which utilizes stochastic gradients vs the SP-MRPG algorithm (Algorithm \ref{alg:RL_GS_MFTG_fin_SP}) which utilizes the sample-path gradients.
\begin{figure}[h!]
    \centering
    \includegraphics[width=0.6\linewidth]{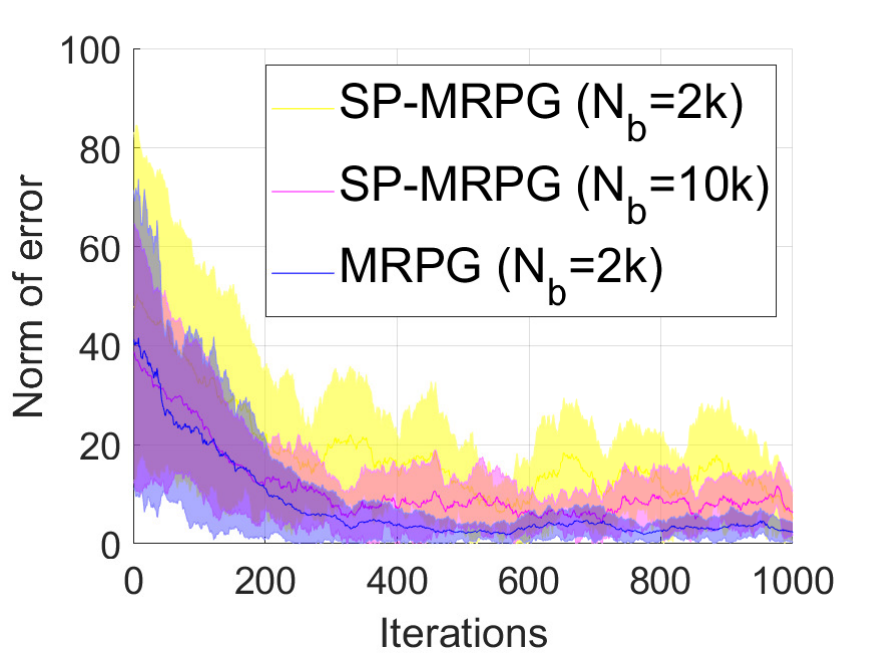}
     \caption{Comparison between MRPG $(N_b = 2,000)$, SP-MRPG $(N_b = 2,000)$ and SP-MRPG $(N_b = 10,000)$.}
      \label{fig:numer_SP_MRPG}
\end{figure}

The figure shows convergence of norm of error over number of iterations with number of teams $N=1$, time horizon $T=1$, number of agents per team $M=1000$ and  agents have scalar dynamics. The number of inner-loop iterations $K=1000$ and learning rate $\eta^i_k = 0.001$. We simulate the MRPG algorithm with mini-batch size $N_b=2000$, SP-MRPG with mini-batch size $N_b=2000$ and SP-MRPG with mini-batch size $N_b=10,000$. It is shown that if mini-batch size is the same MRPG has better convergence compared to SP-MRPG which is plagued with high variance. If we increase the mini-batch size in SP-MRPG we can reduce the variance due to lower variance of the sample-path gradient estimates.

\end{document}